\newcommand{\norm}[1]{\left\lVert#1\right\rVert}
\newtheorem{theorem}{Theorem}
\newtheorem{lemma}{Lemma}
\newtheorem{remark}{Remark}
\newtheorem{proposition}{Proposition}
\newtheorem{assumption}{Assumption}
\newcommand{\printfnsymbol}[1]{%
	\textsuperscript{\@fnsymbol{#1}}%
}
\newcommand{\lTV}[1]{\left\|#1\right\|_{TV}}
\newcommand{\mf}{\mathcal F}
\newcommand{\lone}[1]{\left|#1\right|}
\begin{document}

\title{Non-asymptotic Convergence of Adam-type Reinforcement Learning Algorithms under Markovian Sampling}

\author[1]{Huaqing Xiong\thanks{equal contribution}}
\author[1]{Tengyu Xu\printfnsymbol{1}}
\author[1]{Yingbin Liang}
\author[2]{Wei Zhang}
\affil[1]{Department of Electrical and Computer Engineering, The Ohio State University}
\affil[2]{Department of Mechanical and Energy Engineering, Southern University of Science and Technology}
\affil[ ]{\{xiong.309, xu.3260, liang.889\}@osu.edu;\quad zhangw3@sustech.edu.cn}

\date{}

\maketitle




\vspace{-1cm}
\begin{abstract}
Despite the wide applications of Adam in reinforcement learning (RL), the theoretical convergence of Adam-type RL algorithms has not been established. This paper provides the first such convergence analysis for two fundamental RL algorithms of policy gradient (PG) and temporal difference (TD) learning that incorporate AMSGrad updates (a standard alternative of Adam in theoretical analysis), referred to as PG-AMSGrad and TD-AMSGrad, respectively. Moreover, our analysis focuses on Markovian sampling for both algorithms. We show that under general nonlinear function approximation, PG-AMSGrad with a constant stepsize converges to a neighborhood of a stationary point at the rate of $\mathcal{O}(1/T)$ (where $T$ denotes the number of iterations), and with a diminishing stepsize converges exactly to a stationary point at the rate of $\mathcal{O}(\log^2 T/\sqrt{T})$. Furthermore, under linear function approximation, TD-AMSGrad with a constant stepsize converges to a neighborhood of the global optimum at the rate of $\mathcal{O}(1/T)$, and with a diminishing stepsize converges exactly to the global optimum at the rate of $\mathcal{O}(\log T/\sqrt{T})$. Our study develops new techniques for analyzing the Adam-type RL algorithms under Markovian sampling.
\end{abstract}

\section{Introduction}\label{sec:intro}

Reinforcement learning (RL) aims to study how an agent learns a policy through interacting with its environment to maximize the accumulative reward for a task. RL has so far  accomplished tremendous success in various applications such as playing video games~\cite{mnih2013playing}, bipedal walking~\cite{castillo2018reinforcement} and online advertising~\cite{pednault2002sequential}, to name a few. 
In general, there are two widely used classes of RL algorithms: policy-based methods and value function based methods.

For the first class, policy gradient (PG) \cite{sutton2000policy} is a basic algorithm which has motivated many advanced policy-based algorithms including actor-critic~\cite{konda2000actor}, DPG~\cite{silver2014deterministic}, TRPO~\cite{schulman2015trust}, PPO~\cite{schulman2017proximal}, etc. The idea of the vanilla PG \cite{sutton2000policy} is to parameterize the policy and optimize a target accumulated reward function by (stochastic) gradient descent.
The asymptotic convergence and finite-time (i.e., non-asymptotic) analysis have been characterized for PG in various scenarios, which will be further discussed in \Cref{sec:relatedwork}.

For the second class of value function based algorithms, temporal difference (TD) learning~\cite{sutton1988learning} is a fundamental algorithm which has motivated more advanced algorithms such as Q-learning~\cite{watkins1992q},  SARSA~\cite{rummery1994line}, etc. The vanilla TD \cite{sutton1988learning} typically parameterizes the value function of an unknown policy and iteratively finds the true value function or its estimator by following the (projected) Bellman operation, which can also be interpreted to be analogous to a stochastic gradient descent (SGD) update. 
The asymptotic convergence and finite-time analysis have been established for TD in various scenarios, which will be discussed in \Cref{sec:relatedwork}. 
Despite extensive exploration, all the existing theoretical studies of PG and TD have focused on algorithms that adopt the SGD-type updates without adaption on the stepsize. In practice, however, the adaptive momentum estimation (Adam) method \cite{kingma2014adam} has been more commonly used in RL \cite{bello2017neural,stooke2018accelerated}. It is so far unclear whether RL algorithms that incorporate the Adam-type updates have provable converge guarantee.
{\em The goal of this paper is to provide the first non-asymptotic analysis to characterize the convergence rate of the Adam-type PG and TD algorithms. }
In addition, we develop new technical tools to analyze the Adam-type algorithms under {\em Markovian sampling}. Such analysis is not available in the existing studies of the Adam-type algorithms in optimization which usually assume independent and identically distributed (i.i.d.) sampling.



\subsection{Our Contribution}


The main contribution of the paper lies in establishing the first non-asymptotic analysis for the Adam-type PG and TD algorithms under the Markovian sampling model.

From the technical standpoint, although the existing analysis of the Adam-type algorithms in conventional optimization may provide useful tools, analysis of such algorithms in RL problems has new challenges. (1) Samples in TD learning are drawn from an unknown transition kernel in a non-i.i.d.\ manner, and hence a bias of the gradient estimation arises. Such a bias is also dependent on the AMSGrad updates, which further complicates the analysis. We develop techniques to bound the bias by the stepsize even under adaptive momentum updates.
(2) The sampling process in PG (with a nonlinear approximation architecture) is not only non-i.i.d., but also follows time-varying parametric policies. Thus, it is even more challenging to bound such a bias with the AMSGrad updates. In response to this, we develop new techniques to characterize the bias error which can also be controlled by the stepsize.


As for the results, we provide the first convergence guarantee for RL algorithms (including PG and TD) incorporating the update rule of AMSGrad (referred to as PG-AMSGrad and TD-AMSGrad, respectively), and our techniques also lead to an improved result for the vanilla PG. (1) First, we show that under general nonlinear function approximation, PG-AMSGrad with a constant stepsize\footnote{The ``stepsize" here refers to the basic stepsize $\alpha$ in the AMSGrad update \eqref{eq:AMSGrad4}. The overall learning rate of the algorithm is determined by the basic stepsize $\alpha$, hyperparameters $\beta_1$ and $\beta_2$, and the first and second moments of gradients as given in \eqref{eq:AMSGrad1}-\eqref{eq:AMSGrad4}, and is hence adaptive during the AMSGrad iteration.} converges to a neighborhood of a stationary point at a rate of $\mathcal{O}(1/T)$ (where $T$ denotes the number of iterations), and with a diminishing stepsize converges exactly to a stationary point at a rate of $\mathcal{O}(\log^2 T/\sqrt{T})$. (2) Furthermore, under linear function approximation, TD-AMSGrad with a constant stepsize converges to a neighborhood of the global optimum at a rate of $\mathcal{O}(1/T)$, and with a diminishing stepsize converges exactly to the global optimum at a rate of $\mathcal{O}(\log T/\sqrt{T})$. (3) In addition, by adapting our technical tools to analyze the vanilla PG with the SGD update under Markovian sampling, we obtain an orderwisely better computational complexity than the existing works.

\subsection{Related Work}\label{sec:relatedwork}

Due to the rapidly growing theoretical studies on RL, we only review the most relevant studies below.

\textbf{Convergence analysis of PG:} 
Asymptotic convergence of PG based on stochastic approximation (SA) was established in \cite{williams1992simple,baxter2001infinite,sutton2000policy,kakade2002natural,pirotta2015policy,tadic2017asymptotic}. In specific RL problems such as LQR, PG has been proved to converge to the optimal policy \cite{fazel2018global,malik2018derivative,tu2018gap}. Under specific policy function approximation classes, \cite{bhandari2019global,agarwal2019optimality} also showed that PG can find the optimal policy. Under the general nonconvex approximation, \cite{shen2019hessian,papini2018stochastic,papini2019smoothing,xu2019improved,xu2020sample} characterized the convergence rate for PG and variance reduced PG to a stationary point in the finite-horizon scenario, and \cite{zhang2019global,karimi2019non} provided the convergence rate for PG in the infinite-horizon scenario. \cite{wang2019neural} studied PG with neural network function approximation in an overparameterized regime. Convergence analysis has been also established for the variants of PG, such as actor-critic algorithms \citep{bhatnagar2008incremental,bhatnagar2009natural,bhatnagar2010actor,luo2019natural,yang2019provably,fu2020actor}, TRPO \citep{shani2019adaptive}, TRPO/PPO \citep{liu2019neural}, etc.
This paper studies the infinite-horizon scenario, but focuses on the Adam-type PG, which has not been studied in the literature. Our analysis is also applicable to other variants of PG.

\textbf{Convergence analysis of TD:} Originally proposed in \cite{sutton1988learning}, TD learning with function approximation aroused great interest in analyzing its convergence. While a general TD may not converge as pointed out in \cite{baird1995residual,gyorfi1996averaged}, the authors in \cite{tsitsiklis1997analysis} provided conditions to ensure asymptotic convergence of TD with linear function approximation under i.i.d.\ sampling. Other results on asymptotic convergence using the tools from linear SA were provided in \cite{kushner2003stochastic,benveniste2012adaptive}. Non-asymptotic convergence was established for TD under i.i.d.\ sampling in, e.g.,  \cite{dalal2018finite,bhandari2018finite,lakshminarayanan2018linear}, and under Markovian sampling in, e.g., \cite{bhandari2018finite,srikant2019finite,hu2019characterizing}. The convergence rate of TD with nonlinear function approximation has recently been studied in \cite{cai2019neural} for overparameterized neural networks using i.i.d.\ samples.
In contrast to the aforementioned work on TD with the SGD updates, this paper studies Adam-type TD under Markovian sampling.

\textbf{Adaptive reinforcement learning algorithms:} Adaptivity has been applied to RL algorithms to improve the performance.  \citep{shani2019adaptive} used an adaptive proximity term to study the convergence of TRPO. An adaptive batch size was adopted to improve the policy performance \citep{papini2017adaptive} and reduce the variance \citep{ji2019faster} of PG. The abovementioned papers did not study how adaptive learning rates can affect the performance of PG or TD. More recently, a concurrent work \citep{sun2020adaptive} which is posted a few days after our paper provided an analysis of TD(0) and TD($\lambda$) when incorporating adaptive gradient descent (AdaGrad) updates. However, this paper focuses on a more popular class of adaptive algorithms: Adam-type methods, and provide the first convergence guarantee when they are applied to PG and TD.



\textbf{Convergence analysis of Adam-type algorithms in conventional optimization:} Adam was proposed in~\cite{kingma2014adam} for speeding up the training of deep neural networks, but the vanilla Adam was shown not to converge in \cite{reddi2019convergence}. Instead AMSGrad was proposed as a slightly modified version to justify the theoretic performance of Adam and its regret bounds were characterized in \cite{reddi2019convergence,tran2019convergence} for online convex optimization. Recently, Adam/AMSGrad was proved to converge to a stationary point for certain general nonconvex optimization problems~\cite{zou2018sufficient,zhou2018convergence,chen2018convergence}.
Our study provides the first convergence guarantee for the Adam-type algorithms in the RL settings, where the Markovian sampling poses the key difference and challenge in our analysis from those in conventional optimization.

\section{Preliminary}\label{sec:pre}

In this section, we provide the necessary background for the problems that we study in this paper.

\subsection{Markov Decision Process}

We consider the standard RL settings, where an agent interacts with a (possibly stochastic) environment (e.g. process or system dynamics). This interaction is usually modeled as a discrete-time discounted Markov Decision Processes (MDPs), described by a tuple $(\mathcal{S},\mathcal{A},\mathbb{P},R, \gamma,\zeta)$, where $\mathcal{S}$ is the state space, $\mathcal{A}$ is the action space, $\mathbb P:\mathcal{S}\times \mathcal{A} \times \mathcal{S}\mapsto [0,1]$ is the probability kernel for the state transitions, e.g., $\mathbb{P}(\cdot|s, a)$ denotes the probability distribution of the next state given the current state $s$ and action $a$. In addition, $R: \mathcal{S}\times \mathcal{A}\mapsto[0,R_{\max}]$ is the reward function mapping station-action pairs to a bounded subset of $\mathbb{R}$, $\gamma\in (0,1)$ is the discount factor, and $\zeta$ denotes the initial state distribution. The agent's decision is captured by the policy $\pi:=\pi(\cdot|s)$ which characterizes the density function over the action space $\mathcal{A}$ at the state $s\in\mathcal{S}$. 
We denote $\nu:=\nu_{\pi}$ as the stationary distribution of the transition kernel $\mathbb P$ for a given $\pi$. In addition,
we define the $\gamma$-discounted stationary visitation distribution of the policy $\pi$ as $\mu_{\pi}(s)=\sum_{t=1}^\infty \gamma^t P_{\zeta,\pi}(s_t=s)$. Further, we denote $\mu_{\pi}(s,a) =\mu_{\pi}(s)\pi(a|s) $ as the (discounted) state-action visitation distribution.

In this paper, we assume that the state space $\mathcal{S}$ is countably infinite and the action space $\mathcal{A}$ is finite with possibly large cardinality $|\mathcal{A}|$. Hence, we estimate the policy and the value function corresponding to a certain unknown policy by parameterized function classes as we introduce in Section \Cref{sec:AMSGradPGalg,sec:AMSGradTDalg}, respectively.

\subsection{Update Rule of AMSGrad}\label{subsec:AMSGrad}

Although Adam \cite{kingma2014adam} has gained great success in practice since being proposed, it is shown not to converge even in the simple convex setting \cite{reddi2019convergence}. Instead, a slightly modified version called AMSGrad~\cite{reddi2019convergence} is widely used to understand the success of adaptive momentum optimization algorithms. 
Given a gradient $g_t$ at time $t$, the generic form of AMSGrad is given by
\begin{align}
    m_{t} &= (1-\beta_{1})m_{t-1} + \beta_{1} g_t; \label{eq:AMSGrad1}\\
    v_{t} &= (1-\beta_2)\hat v_{t-1} + \beta_2 g_t^2;\label{eq:AMSGrad2}\\
    \hat v_t &= \max (\hat{v}_{t-1}, v_t),\ \hat{V}_t = diag(\hat{v}_{t,1},\dots,\hat{v}_{t,d});\label{eq:AMSGradhatV}\\
    \theta_{t+1} &= \theta_t - \alpha_t \hat{V}_t^{-\frac{1}{2}}m_t,\label{eq:AMSGrad4}
\end{align}
where $\alpha_t$ is the stepsize, and $\beta_1,\beta_2$ are two hyper-parameters. In addition, $m_t$, $v_t$ given in \eqref{eq:AMSGrad1} and \eqref{eq:AMSGrad2} are viewed as the estimation of the first moment and second moment, respectively, which play important roles in adapting the learning rate as in \eqref{eq:AMSGrad4}. Compared to Adam, the main difference of AMSGrad lies in \eqref{eq:AMSGradhatV}, which guarantees the sequence $\hat{v}_t$ to be non-decreasing whereas Adam does not require this. Such difference is considered to be a central reason causing the non-convergent behavior of Adam \cite{reddi2019convergence,chen2018convergence}.  

\subsection{Notations} 
We use $\|x\|:=\|x\|_{2}=\sqrt{x^Tx}$ to denote the $\ell_2$-norm of a vector $x$, and use $\|x\|_{\infty}=\underset{i}{\max}|x_i|$ to denote the infinity norm. When $x,y$ are both vectors, $x/y, xy, x^2, \sqrt{x}$ are all calculated in the element-wise manner, which are used in the update of AMSGrad. We denote $[n]=\{1,2,\dots,n\}$, and $\lceil x\rceil\in \mathbb{Z} $ as the integer such that $\lceil x\rceil - 1 \leq x < \lceil x\rceil$.

\section{Convergence of PG-AMSGrad under Markovian Sampling}\label{sec:AMSGradPG}

In this section, we study the convergence of an Adam-type policy gradient algorithm (PG-AMSGrad) with nonlinear function approximation and under non-i.i.d.\ sampling. 

\subsection{Policy Gradient and PG-AMSGrad}\label{sec:AMSGradPGalg}

Consider a reinforcement learning problem which aims to find a policy that maximizes the expected accumulative reward. We assume that the policy is parameterized by $\theta\in\mathbb{R}^d$ and form a policy class $\Pi := \{\pi_\theta |\theta\in\mathbb{R}^d\}$, which in general is a nonlinear function class. The policy gradient method is usually used to solve the following {\em infinite-horizon} optimization problem:
\begin{equation}\label{eq:pgLoss}
    \underset{\theta\in\mathbb{R}^d}{\text{maximize}}\quad
    J(\theta) = \mathbb{E}\left[\sum_{t=1}^{\infty} \gamma^t R(s_t, a_t) \right].
\end{equation}
The gradient of $J(\theta)$ with respect to $\theta$ is captured by the policy gradient theorem for infinite-horizon MDP with the discounted reward \cite{sutton2000policy}, and is given by
\begin{equation}\label{eq:expetedPG}
    \nabla _{\theta} J(\theta) = \underset{\mu_{\theta}}{\mathbb{E}}\left[ Q^{\pi_\theta}(s,a)\nabla_\theta \log(\pi_\theta (a|s) )  \right],
\end{equation}
where the expectation is taken over the discounted state-action visitation distribution $\mu_{\theta} := \mu_{\pi_\theta}(s,a)$, and $Q^{\pi_\theta}(s,a)$ denotes the Q-function for an initial state-action pair $(s,a)$ defined as 
\begin{equation}\nonumber
    Q^{\pi_\theta}(s,a) = \mathbb{E}\left[\sum_{t=1}^{\infty} \gamma^t R(s_t, a_t) \Big\lvert s_1=s, a_1=a\right].
\end{equation}
In addition, we refer to $\nabla_{\theta}\log \pi_{\theta}(a|s)$ as the score function corresponding to the policy $\pi_{\theta}$.

Since the transition probability is unknown, the policy gradient in \eqref{eq:expetedPG} needs to be estimated via sampling. 
The Q-function $Q^{\pi_\theta}(s,a)$ and the score function are typically estimated by independent samples.
First, at each time $t$, we draw a sample trajectory to provide an estimated Q-function $\hat Q^{\pi_\theta}(s,a)$ based on the algorithm EstQ \cite{zhang2019global} (see Algorithm \ref{alg:EstQ} in Appendix \ref{sec:appdixEstQ} for details). Such an estimator has been shown to be unbiased \cite{zhang2019global}. That is, if we use $O^q$ to denote the randomness including the samples and horizon in EstQ, then we have
\begin{equation}
    \underset{O^q}{\mathbb{E}} \hat Q^{\pi_\theta}(s,a) = Q^{\pi_\theta}(s,a),\quad\forall (s,a).
\end{equation}
Second, we take samples $\{(s_t,a_t)\}$ to estimate the score function by following the policy $\pi_{\theta_t}$ and the transition function proposed in \cite{konda_2002} which is given by 
$$\hat P(\cdot|s_t,a_t) = \gamma\mathbb{P}(\cdot|s_t,a_t) + (1-\gamma)\zeta(\cdot),$$ 
where $\zeta(\cdot)$ denotes the initial distribution and $\mathbb{P}$ is the transition probability from the original MDP. It has been shown in \cite{konda_2002} that such a transition probability guarantees the MDP to converge to the state-action visitation distribution.

Then, the gradient estimator to approximate $\nabla_{\theta} J(\theta)$ at time $t$ is given by
\begin{equation}\label{eq:PGgradient}
    g_t\!:=\!g(\theta_t;s_t,a_t)\! =\! \hat Q^{\pi_{\theta_t}}(s_t,a_t)\!\nabla_{\theta_t} \log(\pi_{\theta_t} (a_t|s_t) ).
\end{equation}

We then apply such a gradient estimator $g_t$ to update the policy parameter by the AMSGrad update given in \eqref{eq:AMSGrad1}-\eqref{eq:AMSGrad4}, and obtain PG-AMSGrad as in Algorithm \ref{alg:AMSGradPG}.

\begin{algorithm}[h]
 	\caption{PG-AMSGrad} \label{alg:AMSGradPG} 
 	\begin{algorithmic}[1]
 		\STATE 	{\bf Input:}   $\alpha, \theta_{1}, \beta_1, \beta_2, m_0 = 0, \hat v_0 = 0, t=1, s_1\sim \zeta(\cdot), a_1\sim\pi_{\theta_1}(\cdot|s)$.
		\WHILE{ not converge }
		\STATE Assign stepsize $\alpha_t$.
		\STATE Obtain $\hat Q^{\pi_{\theta_t}}(s_t,a_t)\leftarrow \text{EstQ}(s_t,a_t,\theta_t)$.
		\STATE Compute $g_t=\hat Q^{\pi_{\theta_t}}(s_t,a_t)\nabla_{\theta_t} \log(\pi_{\theta_t} (a_t|s_t) )$.
		\STATE $m_{t} = (1-\beta_{1})m_{t-1} + \beta_{1} g_t$.
		\STATE $v_{t} = (1-\beta_2)\hat v_{t-1} + \beta_2 g_t^2$.
		\STATE $\hat v_t = \max (\hat{v}_{t-1}, v_t),\ \hat{V}_t = diag(\hat{v}_{t,1},\dots,\hat{v}_{t,d}) $.
		\STATE $\theta_{t+1} = \theta_t - \alpha_t \hat{V}_t^{-\frac{1}{2}}m_t$.
		\STATE $t\leftarrow t+1$.
		\STATE Sample $s_{t}\sim\hat P(\cdot|s_{t-1},a_{t-1}),a_{t}\sim\pi_{\theta_t}(\cdot|s_{t})$.
		\ENDWHILE
 	\end{algorithmic}
\end{algorithm}

We note that the gradient estimator obtained in \eqref{eq:PGgradient} is {\em biased}, because the score function is estimated by a sequence of Markovian samples. We will show that such a biased gradient estimator is in fact computationally more efficient than the unbiased estimator used in the existing literature \cite{zhang2019global} (see Section \ref{subsec:comparisonPG}). Our main technical novelty here lies in developing techniques to analyze the biased estimator under the AMSGrad update for PG.

\subsection{Technical Assumptions}

In the following, we specify some technical assumptions in our convergence analysis.

We consider a general class of parameterized policy functions that satisfy the following assumption.
\begin{assumption}\label{asp:policyLip}
    Assume that the parameterized policy $\pi_{\theta}$ is differentiable with respect to $\theta$, and the score function $\nabla_{\theta}\log \pi_{\theta}(a|s)$ corresponding to $\pi_{\theta}(\cdot|s)$ exists. In addition, we assume both the policy function and the score function are Lipschitz continuous with the parameters $L_\pi$ and $L$, respectively, i.e., for all $ \theta_1,\theta_2\in\mathbb R^d$,
    $$|\pi_{\theta_1} (a|s) - \pi_{\theta_2} (a|s)| \leq L_\pi \norm{\theta_1 - \theta_2};
    $$
    $$\norm{\nabla_{\theta_1}\log (\pi_{\theta_1}(a|s))\! -\! \nabla_{\theta_2}\log (\pi_{\theta_2}(a|s))} \leq L\norm{\theta_1 - \theta_2}.
    $$
    Further, we also assume that the score function is uniformly bounded by $c_\Theta$ for any $(s,a)\in\mathcal{S}\times\mathcal{A}$, i.e.,
    $$ \norm{\nabla_{\theta}\log (\pi_{\theta}(a|s))} \leq c_\Theta, \quad\forall \theta.
    $$
\end{assumption}
This assumption is standard in the existing literature that studies PG with nonconvex function approximation \cite{zhang2019global,xu2019improved,papini2018stochastic}. 

In Algorithm \ref{alg:AMSGradPG}, we sample a data trajectory using the transition kernel $\hat P$ and the policy $\pi_{\theta_t}$. Such a sequence of samples are non-i.i.d.\ and follow a Markovian distribution. 
We assume that the MDP and the policies we consider satisfy the following standard mixing property.
\begin{assumption}\label{asp:markov}
    For any $\theta\in\mathbb R^d$, there exist constant $\sigma > 0$ and $\rho \in (0,1)$ such that
    $$ \underset{s\in\mathcal{S}}{\sup}\ \lTV{P(s_t\in \cdot|s_1 = s) - \mu_\theta(\cdot)}\leq \sigma\rho^t \quad \forall t,
    $$
    where $\lTV{\mu_1-\mu_2}$ denotes the total-variation norm (or the total-variation distance between two probability measures $\mu_1$ and $\mu_2$). 
\end{assumption}
This assumption holds for irreducible and aperiodic Markov chains \cite{mitrophanov2005sensitivity}, and is widely adopted in the theoretical analysis of RL algorithms under Markovian sampling settings \cite{bhandari2018finite,Chen2019finiteQ,zou2019finite,karimi2019non}. 


\subsection{Convergence of PG-AMSGrad}

In this section, we provide the convergence analysis of PG-AMSGrad as given in Algorithm \ref{alg:AMSGradPG}. We first consider the case with a constant stepsize, and then provide the result with a diminishing stepsize.

Although AMSGrad has been studied in conventional optimization, our analysis of PG-AMSGrad mainly deals with the following new challenges arising in RL. First, samples here are generated via an MDP and distributed in a non-i.i.d.\ fashion. Thus the gradient estimator is biased and we need to control the bias with a certain upper bound scaled by the stepsize. Second, the sampling distribution also changes over time, which causes additional complication. Thus, our technical development mainly handles the above two challenges under the adaptive momentum update rule of AMSGrad. We next provide the convergence results that we obtain and relegate all the proofs to the appendices. 

We first provide the Lipschitz properties for the true policy gradient and its estimator, which are useful for establishing the convergence. Recall that in Algorithm \ref{alg:AMSGradPG}, the gradient estimator $g_t=\hat Q^{\pi_{\theta_t}}(s_t,a_t)\nabla_{\theta_t} \log(\pi_{\theta_t} (a_t|s_t) )$ at time $t$ is obtained by using the Q-function estimator generated by the EstQ algorithm (see Appendix \ref{sec:appdixEstQ}). Note that $\hat Q^{\pi_\theta}(s,a)$ is an unbiased estimator of $Q^{\pi_\theta}(s,a)$ for all $(s,a)$ \cite{zhang2019global}, and the samples for estimation are independent of those for other steps in PG-AMSGrad except the initial sample. Taking expectation over the randomness in EstQ at time $t$ (denoted as $O^q_t$), we obtain an estimator $\nabla_{\theta_t}\tilde{J}(\theta_t;s_t,a_t)$ defined as
\begin{align}
    &\nabla_{\theta_t}\tilde{J}(\theta_t;s_t,a_t) := \underset{O^q_t}{\mathbb{E}} [g_t]= \underset{O^q_t}{\mathbb{E}}\left[\hat Q^{\pi_{\theta_t}}(s_t,a_t)\nabla_{\theta_t} \log(\pi_{\theta_t} (a_t|s_t) )\right]= Q^{\pi_{\theta_t}}(s_t,a_t)\nabla_{\theta_t} \log(\pi_{\theta_t} (a_t|s_t) ). \label{eq:tildeJGra}
\end{align}


We next obtain the Lipschitz properties of $\nabla_{\theta}\tilde{J}(\theta;s,a)$ and $\nabla_{\theta} J(\theta)$ in the following lemma.
\begin{lemma}(Lipschitz property of policy gradient)\label{lem:pgLip}
Under Assumptions \ref{asp:policyLip} and \ref{asp:markov}, the policy gradient $\nabla_{\theta} J(\theta)$ defined in \eqref{eq:expetedPG} is Lipschitz continuous with the parameter $c_J$, i.e., $\forall \theta_1, \theta_2 \in \mathbb{R}^d$,
\begin{equation}
    \norm{ \nabla_{\theta_1} J(\theta_1) - \nabla_{\theta_2} J(\theta_2) }\leq c_J \norm{\theta_1 - \theta_2},
\end{equation}
where the constant coefficient $c_J = \frac{R_{\max} L}{1-\gamma} + \frac{(1+c_{\Theta}) R_{\max}}{1-\gamma}\cdot |\mathcal{A}|L_\pi\left( 1 + \lceil \log_{\rho}\sigma^{-1} \rceil + \frac{1}{1-\rho} \right)$. Further, the policy gradient estimator $\nabla_{\theta} \tilde{J}(\theta;s,a)$ defined in \eqref{eq:tildeJGra} is also Lipschitz continuous with the parameter $c_{\tilde{J}}$, i.e., $\forall \theta_1, \theta_2 \in \mathbb{R}^d, \forall (s,a)\in\mathcal{S}\times\mathcal{A}$,
\begin{equation}\label{eq:tildeJLip}
    \norm{ \nabla_{\theta_1} \tilde{J}(\theta_1;s,a)\! -\! \nabla_{\theta_2} \tilde{J}(\theta_2;s,a) }\leq c_{\tilde{J}} \norm{\theta_1\! -\! \theta_2},
\end{equation}
where $c_{\tilde{J}} = \frac{R_{\max} L}{1-\gamma} + c_{\Theta}|\mathcal{A}|L_\pi\left( 1 + \lceil \log_{\rho}\sigma^{-1} \rceil + \frac{1}{1-\rho} \right) $.
\end{lemma}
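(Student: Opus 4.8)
The plan is to establish both Lipschitz bounds by decomposing the difference of policy gradients into a term controlled by the smoothness of the score function and a term controlled by the variation of the $Q$-function (resp.\ the discounted visitation measure). For the estimator $\nabla_\theta \tilde J(\theta;s,a) = Q^{\pi_\theta}(s,a)\nabla_\theta\log\pi_\theta(a|s)$, I would write
$$
\nabla_{\theta_1}\tilde J(\theta_1;s,a) - \nabla_{\theta_2}\tilde J(\theta_2;s,a)
= Q^{\pi_{\theta_1}}(s,a)\bigl(\nabla_{\theta_1}\log\pi_{\theta_1}(a|s) - \nabla_{\theta_2}\log\pi_{\theta_2}(a|s)\bigr)
+ \bigl(Q^{\pi_{\theta_1}}(s,a) - Q^{\pi_{\theta_2}}(s,a)\bigr)\nabla_{\theta_2}\log\pi_{\theta_2}(a|s).
$$
The first term is bounded using $|Q^{\pi_\theta}(s,a)|\le R_{\max}/(1-\gamma)$ (immediate from the definition of the $Q$-function and $R\le R_{\max}$) together with the $L$-Lipschitz property of the score function from Assumption \ref{asp:policyLip}, giving $\tfrac{R_{\max}L}{1-\gamma}\norm{\theta_1-\theta_2}$. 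The second term is bounded using $\norm{\nabla_\theta\log\pi_\theta(a|s)}\le c_\Theta$ and a Lipschitz bound on $\theta\mapsto Q^{\pi_\theta}(s,a)$; assembling these yields \eqref{eq:tildeJLip} provided one shows $|Q^{\pi_{\theta_1}}(s,a)-Q^{\pi_{\theta_2}}(s,a)|\le c_\Theta|\mathcal A|L_\pi\bigl(1+\lceil\log_\rho\sigma^{-1}\rceil+\tfrac1{1-\rho}\bigr)\norm{\theta_1-\theta_2}$.

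Establishing that $Q$-function Lipschitz bound is the main obstacle, and it is where Assumption \ref{asp:markov} enters. I would express the difference $Q^{\pi_{\theta_1}} - Q^{\pi_{\theta_2}}$ via a telescoping/coupling argument over the transition kernels induced by $\pi_{\theta_1}$ and $\pi_{\theta_2}$: since $Q^{\pi_\theta}$ is the expected discounted reward, its sensitivity to $\theta$ is governed by the sensitivity of the $t$-step state-action distributions to the policy parameter. The per-step perturbation of the transition kernel is controlled by the policy Lipschitz constant, $\sum_{a}|\pi_{\theta_1}(a|s)-\pi_{\theta_2}(a|s)|\le |\mathcal A|L_\pi\norm{\theta_1-\theta_2}$ in total-variation. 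The mixing bound $\sup_s\lTV{P(s_t\in\cdot|s_1=s)-\mu_\theta(\cdot)}\le\sigma\rho^t$ is then used (in the style of \cite{mitrophanov2005sensitivity}) to bound the accumulated effect of these per-step perturbations: splitting the time horizon at $\tau\approx\lceil\log_\rho\sigma^{-1}\rceil$ controls the early terms crudely by $\tau$ and the tail geometrically by $\tfrac1{1-\rho}$, producing the factor $1+\lceil\log_\rho\sigma^{-1}\rceil+\tfrac1{1-\rho}$. This is the delicate computation; I would carry it out once carefully and then reuse it.

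Finally, for $\nabla_\theta J(\theta) = \mathbb E_{\mu_\theta}[Q^{\pi_\theta}(s,a)\nabla_\theta\log\pi_\theta(a|s)]$, I would use the same two-term split applied inside the expectation, plus a third term accounting for the change in the measure $\mu_\theta$ itself. The first two terms reproduce the bounds above (now with the extra $c_\Theta$-dependence: $|Q\nabla\log\pi|\le \tfrac{R_{\max}}{1-\gamma}c_\Theta$ feeds into the cross terms, explaining why $c_J$ carries the $(1+c_\Theta)$ prefactor rather than $c_\Theta$). The measure-change term is handled again by the mixing/sensitivity estimate: $\lVert\mu_{\theta_1}-\mu_{\theta_2}\rVert_{TV}$ is bounded by $|\mathcal A|L_\pi\bigl(1+\lceil\log_\rho\sigma^{-1}\rceil+\tfrac1{1-\rho}\bigr)\norm{\theta_1-\theta_2}$ via the same argument, while the integrand is uniformly bounded by $\tfrac{R_{\max}c_\Theta}{1-\gamma}$. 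Collecting the three contributions and using $|\mathcal A|L_\pi\ge$ the relevant constants where needed gives the stated $c_J$. The bulk of the work is the $Q$-function and stationary-distribution perturbation lemmas; the rest is bookkeeping of constants.
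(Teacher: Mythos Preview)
Your proposal is correct and follows essentially the same approach as the paper: the same three-term decomposition for $\nabla J$ (score-function change, $Q$-function change, measure change), the same two-term decomposition for $\nabla\tilde J$, and the same key ingredient of bounding $\lTV{\mu_{\theta_1}-\mu_{\theta_2}}$ (and the analogous trajectory-distribution perturbation for the $Q$-function) by $|\mathcal A|L_\pi\bigl(1+\lceil\log_\rho\sigma^{-1}\rceil+\tfrac1{1-\rho}\bigr)\norm{\theta_1-\theta_2}$. The only procedural difference is that the paper imports this total-variation sensitivity bound as a black-box lemma from \cite{zou2019finite} rather than re-deriving the Mitrophanov-type estimate, so you would be reproving something the paper simply cites; also note a minor constant slip in your target $Q$-Lipschitz bound (the $c_\Theta$ there should not appear---it enters only when you multiply by $\norm{\nabla\log\pi}\le c_\Theta$ afterward).
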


The following theorem characterizes the convergence of PG-AMSGrad with a constant stepsize. Recall that the stepsize refers to the parameter $\alpha$ in the AMSGrad update \eqref{eq:AMSGrad4}, not the overall learning rate of the algorithm.
\begin{theorem}\label{thm:AMSGradPGcons}
(Convergence of PG-AMSGrad with constant stepsize)  Fix $\beta_1,\beta_2$ in Algorithm \ref{alg:AMSGradPG}. Initialize Algorithm \ref{alg:AMSGradPG} such that $|g_{1,i}|\geq G_0$ for all $i\in[d]$ with some $G_0>0$. Suppose Assumptions \ref{asp:policyLip} and \ref{asp:markov} hold. Let $\alpha_t=\alpha$ for $t=1,\dots,T$. Then after running $T$ steps of PG-AMSGrad as given in Algorithm \ref{alg:AMSGradPG}, we have:
\begin{align*}
    \underset{t\in [T]}{\min}\mathbb{E}\left[\norm{\nabla_{\theta_t} J(\theta_t)}^2\right] \leq  \frac{C_1}{T} + \alpha C_2,
\end{align*}
where
\begin{align*}
    &C_1 = \frac{G_{\infty}\mathbb{E}[J(z_1)]}{\alpha} +  \frac{d G_{\infty}^3}{G_0(1-\beta_1)}  +  \frac{2G_\infty\tau^*}{G_0} G_{\infty}^2 +  \frac{dc_J\alpha G_\infty(3\beta_1^2+2(1-\beta_1)(1-\beta_1/\beta_2))}{(1-\beta_1)(1-\beta_2)(1-\beta_1/\beta_2)},\\
    &C_2 = \frac{G_\infty^3}{G_0} \left[ \frac{(3c_{J}\!+\!c_{\tilde{J}}) \tau^*}{G_0} \!+\! d \!+\! \frac{d L_\pi G_{\infty} (2\tau^*\!+\!(\tau^*)^2)}{2G_0} \right]
\end{align*}
with $c_J,c_{\tilde{J}}$ defined in Lemma \ref{lem:pgLip}, $\tau^*=\min\{\tau: m\rho^\tau \leq \alpha\} $ and $G_\infty = \frac{c_{\Theta}R_{\max}}{1-\gamma} $.
\end{theorem}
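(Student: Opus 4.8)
The proof will follow the standard descent-lemma template for AMSGrad-type analyses (as in \cite{chen2018convergence,zhou2018convergence}), but adapted to handle the biased, time-varying Markovian gradient estimator $g_t$ in place of an unbiased i.i.d.\ stochastic gradient. The key device is an auxiliary sequence $z_t$ that linearizes the momentum update: following the literature, I would define $z_t = \theta_t + \frac{1-\beta_1}{\beta_1}(\theta_t-\theta_{t-1})$ (or the equivalent form used to absorb the momentum term), so that $z_{t+1}-z_t$ has a clean expression involving $\alpha \hat V_t^{-1/2} g_t$ plus lower-order correction terms coming from the difference $\hat V_t^{-1/2}-\hat V_{t-1}^{-1/2}$. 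Applying the $c_J$-Lipschitz smoothness of $J$ from Lemma~\ref{lem:pgLip} along this $z_t$ sequence yields a one-step inequality
\begin{align*}
\mathbb{E}[J(z_{t+1})] \ge \mathbb{E}[J(z_t)] + \mathbb{E}\langle \nabla J(z_t), z_{t+1}-z_t\rangle - \tfrac{c_J}{2}\mathbb{E}\|z_{t+1}-z_t\|^2,
\end{align*}
and the bulk of the work is lower-bounding the cross term and upper-bounding the quadratic term. Note we are maximizing, so the signs are flipped relative to the usual descent-lemma statement.

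**Handling the bias.** The cross term $\langle \nabla J(z_t), \hat V_t^{-1/2} g_t\rangle$ must be related to $\|\nabla J(\theta_t)\|^2$. I would split it as $\langle \nabla J(\theta_t), \hat V_t^{-1/2}\nabla_{\theta_t}\tilde J(\theta_t;s_t,a_t)\rangle$ plus error terms: (i) the gap between $z_t$ and $\theta_t$, controlled by the score-function Lipschitz bound and the fact that $\|\theta_t-\theta_{t-1}\|\le \alpha G_\infty/G_0$ (here the initialization condition $|g_{1,i}|\ge G_0$ guarantees $\hat v_{t,i}\ge \beta_2 G_0^2$ for all $t$, so $\hat V_t^{-1/2}$ is uniformly bounded by $1/(\sqrt{\beta_2}G_0)$, absorbed into the constants); (ii) the EstQ randomness, which averages out by \eqref{eq:tildeJGra}; and (iii) the Markovian bias $\mathbb{E}\langle \nabla J(\theta_t), \hat V_t^{-1/2}(\nabla_{\theta_t}\tilde J(\theta_t;s_t,a_t)-\nabla J(\theta_t))\rangle$. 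This last term is the crux. I would bound it using the mixing Assumption~\ref{asp:markov}: condition on the state $\tau$ steps in the past, use the total-variation contraction $\sigma\rho^\tau$ together with the boundedness of the integrand, and crucially account for the fact that the policy (hence the sampling distribution and $\hat V$) has drifted over those $\tau$ steps — this drift is itself $O(\tau\cdot\alpha G_\infty/G_0)$ by the parameter-update bound and Lipschitzness, giving an overall bias of order $\sigma\rho^\tau + \tau\alpha$. Choosing $\tau=\tau^*=\min\{\tau: m\rho^\tau\le\alpha\}$ makes both pieces $O(\alpha)$ up to the $\tau^*$ factor, which is why $\tau^*$ appears multiplicatively in $C_1$ and $C_2$; the constants $c_{\tilde J}$ and $c_J$ enter because the policy-drift comparison uses the Lipschitz properties of both $\nabla\tilde J$ and $\nabla J$ from Lemma~\ref{lem:pgLip}.

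**Telescoping and the adaptive-stepsize sum.** After the per-step inequality, I would sum over $t=1,\dots,T$. The telescoped $J(z_t)$ terms collapse to $\mathbb{E}[J(z_{T+1})]-\mathbb{E}[J(z_1)]$, bounded using $|J|\le R_{\max}/(1-\gamma)$. The quadratic term $\sum_t \mathbb{E}\|z_{t+1}-z_t\|^2$ and the momentum-mismatch terms $\sum_t \|\hat V_t^{-1/2}-\hat V_{t-1}^{-1/2}\|$-type sums are handled by the familiar AMSGrad telescoping arguments: $\hat v_t$ is monotone non-decreasing and bounded above by $G_\infty^2$, so $\sum_t (\hat v_{t,i}^{-1/2}-\hat v_{t-1,i}^{-1/2})$ telescopes to at most $G_0^{-1}\beta_2^{-1/2}$ per coordinate, yielding the $dG_\infty^3/(G_0(1-\beta_1))$ and $\beta_1,\beta_2$-dependent pieces in $C_1$. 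Collecting: the sum of cross terms gives $\big(\sum_t \mathbb{E}\|\nabla J(\theta_t)\|^2\big)\cdot\frac{\alpha}{\sqrt{\beta_2}G_\infty}$ on the left (lower bound, since $\hat V_t^{-1/2}\succeq G_\infty^{-1}I$), everything else is either $O(1)$ (absorbed into $C_1/T$ after dividing by $T\alpha/G_\infty$) or $O(\alpha)$ per step (giving $\alpha C_2$), and $\min_t \le$ average gives the stated bound.

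**Main obstacle.** The hardest step is controlling the Markovian bias term in the presence of \emph{both} the time-varying policy and the adaptive preconditioner $\hat V_t^{-1/2}$: one cannot directly invoke a fixed-distribution mixing bound because the distribution at time $t$ depends on $\theta_t$, and $\hat V_t$ couples to the entire sample history. The resolution — freezing the parameter at time $t-\tau^*$, bounding the $\hat V$-drift and the distribution-drift each by $O(\tau^*\alpha)$ via the uniform parameter-increment bound, and then applying mixing to the frozen chain — is exactly the "new technique" the paper advertises, and verifying that the $\hat V_t^{-1/2}$ factor does not spoil the total-variation argument (it does not, because it is uniformly bounded and its increments are summable) requires care.
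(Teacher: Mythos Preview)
Your proposal is correct and mirrors the paper's proof: the auxiliary sequence $z_t$, descent via $c_J$-smoothness (Lemma~\ref{lem:pgLip}), the $\tau^*$-step lookback with a frozen-policy auxiliary chain for the Markovian bias (the paper's Lemma~\ref{lem:pgBias}), and the AMSGrad telescoping bounds are exactly the ingredients the paper assembles. The only implementation detail you leave implicit is that the paper handles the preconditioner inside the bias term not by tracking its drift but by (a) first shifting $\hat V_t^{-1/2}\to\hat V_{t-1}^{-1/2}$ at the cost of one more telescoping correction, and then (b) in the conditioning step crudely bounding $x^T\hat V_{t-1}^{-1/2}y\le \mathrm{Tr}(\hat V_{t-1}^{-1/2})\|x\|_\infty\|y\|_\infty\le (d/G_0)\|x\|_\infty\|y\|_\infty$ via the uniform entrywise lower bound on $\hat v$---this trace factor, not a drift argument, is where the $d$ in $C_2$ comes from.
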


Theorem \ref{thm:AMSGradPGcons} indicates that under the constant stepsize, PG-AMSGrad converges to a neighborhood of a stationary point at a rate of $\mathcal{O}\left(\frac{1}{T}\right)$. The size of the neighborhood can be controlled by the stepsize $\alpha$. One can observe that $\alpha$ controls a tradeoff between the convergence rate and the convergence accuracy. Decreasing $\alpha$ improves the convergence accuracy, but slows down the convergence, 
since the coefficient $C_1$ contains $\alpha$ in the denominator. 
To balance such a tradeoff, we set the stepsize $\alpha_t=\frac{1}{\sqrt{T}}$. In this case, the mixing time becomes $\tau^*=\mathcal{O}(\log T)$ and thus PG-AMSGrad converges to a stationary point with a rate of $\mathcal{O}\left( \frac{\log^2 T}{\sqrt{T}} \right)$.

In the following, we adopt a diminishing stepsize to eliminate the convergence error and obtain the exact convergence.

\begin{theorem}\label{thm:AMSGradPGdimi}
(Convergence of PG-AMSGrad with diminishing stepsize) Suppose the same conditions of Theorem \ref{thm:AMSGradPGcons} hold, and let $\alpha_t=\frac{\alpha}{\sqrt{t}}$ for $t=1,\dots,T$. Then running $T$ steps of PG-AMSGrad as given in Algorithm \ref{alg:AMSGradPG}, we have:
\begin{align*}
    \underset{t\in [T]}{\min}\mathbb{E}\left[\norm{\nabla_{\theta_t} J(\theta_t)}^2\right] \leq \frac{C_1}{T} + \frac{C_2}{\sqrt{T}},
\end{align*}
where
\begin{align*}
    & C_1 = \frac{f_1 G_\infty}{\alpha} + \frac{2dc_J\alpha G_\infty}{1-\beta_2} + \frac{2\tau^* G_\infty}{G_0} G_{\infty}^2 + \frac{3dc_J\beta_1^2\alpha G_\infty}{(1-\beta_1)(1-\beta_2)(1-\beta_1/\beta_2)}\\
    & C_2 = \frac{R_{\max}G_\infty}{\alpha(1-\gamma)} + \frac{ dG_{\infty}^3}{ G_0(1-\beta_1)}+ \frac{\alpha  G_\infty^3}{G_0} \left[ \frac{2(3c_{J}\!+\!c_{\tilde{J}}) \tau^*}{G_0} + d \left( 1 + \frac{L_\pi G_{\infty} (\tau^*+(\tau^*)^2)}{G_0}  \right)\right]
\end{align*}
with $c_J,c_{\tilde{J}}$ defined in Lemma \ref{lem:pgLip}, $\tau^*\!=\!\min\{\tau: m\rho^\tau\! \leq\! \alpha_T\!=\!\frac{\alpha}{\sqrt{T}}\} $ and $G_\infty = \frac{c_{\Theta}R_{\max}}{1-\gamma} $.
\end{theorem}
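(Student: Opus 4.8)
The plan is to reuse the per-iteration descent analysis developed for Theorem~\ref{thm:AMSGradPGcons} and to propagate the time-varying stepsize $\alpha_t=\alpha/\sqrt t$ through every estimate before re-summing. As in the constant-stepsize proof I would work with the auxiliary sequence $z_t:=\theta_t+\frac{\beta_1}{1-\beta_1}(\theta_t-\theta_{t-1})$ (with $z_1=\theta_1$), whose increment $z_{t+1}-z_t$ is a cleaner function of $g_t$ and $\hat V_t$ than $\theta_{t+1}-\theta_t$, and then use the smoothness bound for $J$ implied by Lemma~\ref{lem:pgLip},
\begin{align*}
J(z_{t+1})\ \ge\ J(z_t)+\langle\nabla J(z_t),\,z_{t+1}-z_t\rangle-\tfrac{c_J}{2}\norm{z_{t+1}-z_t}^2 .
\end{align*}
I would split $\langle\nabla J(z_t),z_{t+1}-z_t\rangle$ into: (i) a main term which, via the uniform preconditioner bounds $G_0\le\sqrt{\hat v_{t,i}}\le G_\infty$ (the lower bound is exactly why $|g_{1,i}|\ge G_0$ is assumed, the upper bound follows from $\norm{g_t}\le G_\infty$), lower bounds a positive multiple of $\alpha_t\norm{\nabla J(\theta_t)}^2$; (ii) a gradient-mismatch term from $\nabla J(z_t)-\nabla J(\theta_t)$, controlled by $c_J\norm{z_t-\theta_t}=\mathcal O(\alpha_{t-1})$; (iii) momentum/preconditioner-drift terms of order $\alpha_t^2$ and $\sum_i(\sqrt{\hat v_{t,i}}-\sqrt{\hat v_{t-1,i}})$; and (iv) the sampling-bias term $\langle\nabla J(\theta_t),\hat V_t^{-1/2}(g_t-\nabla J(\theta_t))\rangle$. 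For (iv) I would first take $\mathbb E_{O^q_t}$ to replace $g_t$ by $\nabla_{\theta_t}\tilde J(\theta_t;s_t,a_t)$ via \eqref{eq:tildeJGra}, and then bound the remaining Markovian bias by coupling $\tau$ steps into the past: the chain has mixed to within $\sigma\rho^\tau$ for a fixed policy (Assumption~\ref{asp:markov}), while the error incurred because the conditioning policy drifted over those steps is Lipschitz in $\norm{\theta_t-\theta_{t-\tau}}\le\frac{G_\infty}{G_0}\sum_{k=t-\tau}^{t-1}\alpha_k$ (this is where the $|\mathcal A|L_\pi$ factors inside $c_J,c_{\tilde J}$ enter). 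Taking $\tau=\tau^*=\min\{\tau:m\rho^\tau\le\alpha_T\}$ makes both pieces $\mathcal O(\alpha_T+\tau^*\alpha_{t-\tau^*})$, i.e.\ controlled by the stepsize.

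Rearranging the per-iteration bound into the form ``$($positive const$)\cdot\alpha_t\,\mathbb E\norm{\nabla J(\theta_t)}^2\le\mathbb E[J(z_{t+1})-J(z_t)]+(\mathrm{error}_t)$'', with $(\mathrm{error}_t)$ collecting the terms of order $\alpha_t^2$, $\alpha_t\alpha_{t-1}$, $\alpha_t\tau^*\alpha_{t-\tau^*}$ and the telescoping preconditioner differences, I would sum over $t=1,\dots,T$. The $J(z_t)$-telescope leaves $\mathbb E[J(z_{T+1})]-\mathbb E[J(z_1)]$, handled using $0\le J\le\frac{R_{\max}}{1-\gamma}$ and absorbed into the $f_1$ and $\frac{R_{\max}}{\alpha(1-\gamma)}$ contributions; the preconditioner differences telescope to $\sum_i(\sqrt{\hat v_{T,i}}-\sqrt{\hat v_{0,i}})\le d\,G_\infty$ by monotonicity of $\hat v_t$, giving the $T$-independent constant $\frac{dG_\infty^3}{G_0(1-\beta_1)}$; and the remaining work is purely to evaluate the stepsize sums $\sum_{t=1}^T\alpha_t=\alpha\sum_{t=1}^Tt^{-1/2}\in[\alpha\sqrt T,\,2\alpha\sqrt T]$, $\sum_{t=1}^T\alpha_t^2=\alpha^2\sum_{t=1}^Tt^{-1}\le\alpha^2(1+\log T)$, and $\sum_{t=1}^T\alpha_t\alpha_{t-\tau^*}\le\sqrt2\sum_t\alpha_t^2+2\tau^*\alpha^2$ (using $\alpha_{t-\tau^*}\le\sqrt2\,\alpha_t$ once $t\ge2\tau^*$ and bounding the first $2\tau^*$ terms crudely). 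Finally, lower bounding $\sum_t\alpha_t\,\mathbb E\norm{\nabla J(\theta_t)}^2\ge\big(\sum_t\alpha_t\big)\min_{t\in[T]}\mathbb E\norm{\nabla J(\theta_t)}^2\ge\alpha\sqrt T\,\min_{t\in[T]}\mathbb E\norm{\nabla J(\theta_t)}^2$ and dividing through, the $\mathcal O(1)$ constant-size terms produce the $C_2/\sqrt T$ part (with $\tau^*$ and $(\tau^*)^2$ appearing through the $\alpha^2\log T$-type sums) and the genuinely lower-order leftovers produce the $C_1/T$ part; matching coefficients yields the stated $C_1,C_2$. Since $\tau^*=\lceil\log_\rho(\alpha_T/m)\rceil=\mathcal O(\log T)$, this is an $\mathcal O(\log^2 T/\sqrt T)$ rate.

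The step I expect to be the main obstacle is part (iv), the Markovian-bias control. Unlike the i.i.d.\ Adam analyses, $\nabla_{\theta_t}\tilde J(\theta_t;s_t,a_t)$ is biased both because $(s_t,a_t)$ is not drawn from $\mu_{\theta_t}$ and because the conditioning policy $\pi_{\theta_t}$ itself keeps changing, and this bias is entangled with the adaptive matrix $\hat V_t^{-1/2}$, which depends on the whole sample path. Establishing that after the $\tau^*$-step back-coupling the residual is genuinely of order $\alpha_{t-\tau^*}$ rather than merely $o(1)$, and that the induced sum $\sum_t\alpha_t\alpha_{t-\tau^*}$ stays $\mathcal O(\mathrm{polylog}\,T)$ once $\alpha_t$ shrinks — so the bias cannot degrade the $1/\sqrt T$ order — is the crux; the remainder is a re-summation of Theorem~\ref{thm:AMSGradPGcons}'s estimates with $\alpha$ replaced by $\alpha/\sqrt t$.
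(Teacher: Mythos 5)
Your proposal follows the paper's route almost exactly: the same auxiliary sequence $z_t$, the same smoothness-based per-iteration inequality with the four-way split into main term, $\nabla J(z_t)-\nabla J(\theta_t)$ mismatch, momentum/preconditioner drift, and Markovian bias, and the same two-stage bias control (condition out the EstQ randomness via \eqref{eq:tildeJGra}, then couple $\tau^*$ steps back to a frozen-policy chain so that the residual splits into a $\sigma\rho^{\tau^*}\le\alpha_T$ mixing piece and an $L_\pi\norm{\theta_t-\theta_{t-\tau^*}}\le\frac{L_\pi G_\infty}{G_0}\sum_k\alpha_k$ policy-drift piece, exactly Lemma \ref{lem:pgBias}). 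The one substantive point where you diverge is the final normalization, and it matters quantitatively. You keep the weight $\alpha_t$ on $\mathbb{E}\norm{\nabla J(\theta_t)}^2$, sum, and divide by $\sum_t\alpha_t\ge\alpha\sqrt{T}$. The paper instead divides each per-iteration inequality by $\alpha_{t-1}$ \emph{before} summing (paying for this with an Abel-summation argument on $\sum_t(f_t-f_{t+1})/\alpha_{t-1}$, since the $J$-telescope no longer collapses directly), and then takes the unweighted average over $T$. The difference shows up in the bias sum: the per-step bias for $t>\tau^*$ is of order $(\tau^*)^2\alpha_{t-\tau^*}$, so the paper sums $\sum_{t>\tau^*}(\tau^*)^2\frac{\alpha}{\sqrt{t-\tau^*}}=\mathcal O(\alpha(\tau^*)^2\sqrt{T})$ and divides by $T$ to get $\mathcal O((\tau^*)^2/\sqrt{T})=\mathcal O(\log^2T/\sqrt{T})$, whereas your weighted sum $\sum_t\alpha_t\cdot(\tau^*)^2\alpha_{t-\tau^*}=\mathcal O(\alpha^2(\tau^*)^2\log T)$ divided by $\alpha\sqrt{T}$ gives $\mathcal O((\tau^*)^2\log T/\sqrt{T})=\mathcal O(\log^3T/\sqrt{T})$ — one extra logarithm, and constants that will not match the stated $C_1,C_2$ (your $\alpha^2$-order terms also land in $C_2/\sqrt{T}$ rather than $C_1/T$). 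So your argument is sound and yields the same qualitative conclusion, but to recover the theorem as stated you should normalize the way the paper does. One small correction: the telescoping preconditioner quantity is $\mathrm{Tr}(\alpha_{t-1}\hat V_{t-1}^{-1/2})-\mathrm{Tr}(\alpha_t\hat V_t^{-1/2})$ (nonnegative because $\alpha_t$ decreases and $\hat v_t$ increases), which telescopes to at most $\mathrm{Tr}(\alpha_0\hat V_0^{-1/2})\le\alpha d/G_0$; it is not $\sum_i(\sqrt{\hat v_{T,i}}-\sqrt{\hat v_{0,i}})$ as you wrote, though the role it plays — a $T$-independent constant producing the $\frac{dG_\infty^3}{G_0(1-\beta_1)}$ term — is as you describe.
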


Under the diminishing stepsize, PG-AMSGrad can converge exactly to a stationary point. Since $\tau^*=\mathcal{O}(\log T)$, the convergence rate is given by $\mathcal{O}\left(\frac{\log^2 T}{\sqrt{T}}\right)$. 

Theorems \ref{thm:AMSGradPGcons} and \ref{thm:AMSGradPGdimi} indicate that under both a constant stepsize and diminishing stepsize, PG-AMSGrad finds a stationary point efficiently with guaranteed convergence. However, there is a tradeoff between the convergence rate and the convergence accuracy. With a constant stepsize, PG-AMSGrad can converge faster but only to a neighborhood of a stationary point whose size is controlled by the stepsize, whereas a diminishing stepsize yields a better convergence accuracy, but attains a lower convergence rate.

\subsection{Implication on Vanilla PG under Markovian Data}\label{subsec:comparisonPG}

Although our focus in this paper is on the Adam-type PG, slight modification of our analysis yields the convergence rate of the vanilla PG under infinite horizon Markovian sampling. In the following, we first present such results and then compare them with two recent studies \cite{zhang2019global,karimi2019non} on the same model to illustrate the novelty of our analysis and benefit of our sampling strategy.




We consider the vanilla PG algorithm that uses the same gradient estimator and sampling strategy as those of Algorithm \ref{alg:AMSGradPG}, but adopts the SGD update (i.e., $\theta_{t+1} = \theta_t-\alpha_t g_t$) rather than the AMSGrad update. We call such an algorithm as PG-SGD. The following proposition characterizes the convergence rate for PG-SGD.
\begin{proposition}\label{prop:PGSGD}
Suppose Assumptions \ref{asp:policyLip} and \ref{asp:markov} hold. After running $T$ steps of PG-SGD with a constant stepsize $\alpha_t=\alpha$ for $t=1,\dots,T$, we have:
\begin{align*}
    \underset{t\in [T]}{\min}\mathbb{E}\left[\norm{\nabla_{\theta_t} J(\theta_t)}^2\right] \leq  \frac{J(\theta_1)/\alpha + 2G_{\infty}^2\tau^*}{T} + \alpha C_1,
\end{align*}
where
\begin{align*}
    &C_1\! =\! G_\infty^2 \left[ \frac{c_J}{2}\! + \! (3c_{J}\!+\!c_{\tilde{J}}) \tau^*\! +\!  1\! +\! \frac{L_\pi G_{\infty} (2\tau^*\!+\!(\tau^*)^2)}{2}  \right],
\end{align*}
with $c_J,c_{\tilde{J}}$ defined in Lemma \ref{lem:pgLip}, $\tau^*=\min\{\tau: m\rho^\tau \leq \alpha\} $ and $G_\infty = \frac{c_{\Theta}R_{\max}}{1-\gamma} $. Furthermore, after running $T$ steps of PG-SGD with a diminishing stepsize $\alpha_t=\frac{1-\gamma}{\sqrt{t}}$ for $t=1,\dots,T$, we have:
\begin{align*}
    \underset{t\in [T]}{\min}\mathbb{E}\left[\norm{\nabla_{\theta_t} J(\theta_t)}^2\right] \leq  \frac{J(\theta_1) + 2(1-\gamma)^2 G_{\infty}^2\tau^*}{(1-\gamma)T} + \frac{C_2}{(1-\gamma)^2\sqrt{T}},
\end{align*}
where
\begin{align*}
    C_2 = R_{\max} + (1-\gamma)^3 G_\infty^2 \left[ c_J +  2(3c_{J}+c_{\tilde{J}}) \tau^* +  1 + L_\pi G_{\infty} (\tau^*+(\tau^*)^2)  \right],
\end{align*}
with $\tau^*\!=\!\min\{\tau: m\rho^\tau\! \leq\! \alpha_T\!=\!\frac{1-\gamma}{\sqrt{T}}\} $.
\end{proposition}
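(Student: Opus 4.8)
The plan is to carry out a standard descent-lemma argument for the SGD update $\theta_{t+1} = \theta_t - \alpha_t g_t$, where the main task is to control the bias of the Markovian gradient estimator $g_t$ relative to the true gradient $\nabla_\theta J(\theta_t)$. First I would invoke the smoothness of $J$ from Lemma \ref{lem:pgLip}: since $\nabla_\theta J$ is $c_J$-Lipschitz, we have
\begin{align*}
J(\theta_{t+1}) \geq J(\theta_t) + \langle \nabla_\theta J(\theta_t), \theta_{t+1}-\theta_t\rangle - \frac{c_J}{2}\norm{\theta_{t+1}-\theta_t}^2 = J(\theta_t) + \alpha_t \langle \nabla_\theta J(\theta_t), g_t\rangle - \frac{c_J \alpha_t^2}{2}\norm{g_t}^2.
\end{align*}
Using the uniform bound $\norm{g_t} \leq G_\infty$ (which follows from $\norm{\hat Q^{\pi_\theta}} \leq R_{\max}/(1-\gamma)$ and $\norm{\nabla_\theta\log\pi_\theta} \leq c_\Theta$), the last term is at most $c_J \alpha_t^2 G_\infty^2/2$. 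The inner product splits as $\langle \nabla_\theta J(\theta_t), g_t\rangle = \norm{\nabla_\theta J(\theta_t)}^2 + \langle \nabla_\theta J(\theta_t), g_t - \nabla_\theta J(\theta_t)\rangle$, so after rearranging and telescoping over $t=1,\dots,T$, I get $\min_{t\in[T]}\norm{\nabla_\theta J(\theta_t)}^2$ bounded by $(J(\theta_{T+1})-J(\theta_1))/(\alpha T) + \text{(error terms)}$, and the challenge reduces to bounding $\mathbb{E}[\langle \nabla_\theta J(\theta_t), g_t - \nabla_\theta J(\theta_t)\rangle]$.

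The hard part is the bias term. Conditioning on $O^q_t$ replaces $g_t$ by $\nabla_{\theta_t}\tilde J(\theta_t;s_t,a_t)$ (equation \eqref{eq:tildeJGra}), so I must bound $\mathbb{E}[\langle \nabla_\theta J(\theta_t), \nabla_{\theta_t}\tilde J(\theta_t;s_t,a_t) - \nabla_\theta J(\theta_t)\rangle]$, where $(s_t,a_t)$ is drawn from a Markov chain that has not yet mixed to $\mu_{\theta_t}$. The standard device is to pick a mixing lag $\tau^* = \min\{\tau : m\rho^\tau \leq \alpha\}$ and compare $\theta_t$ with $\theta_{t-\tau^*}$: over $\tau^*$ steps the parameter moves by at most $\tau^* \alpha G_\infty$ in norm, so by the Lipschitz properties in Lemma \ref{lem:pgLip} the gradients $\nabla_\theta J(\theta_t)$ and $\nabla_{\theta_t}\tilde J$ change by $\mathcal{O}(\tau^* \alpha)$ when we shift the reference point to $\theta_{t-\tau^*}$. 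Then, conditioned on $\theta_{t-\tau^*}$, the distribution of $(s_t,a_t)$ is within total variation $\sigma\rho^{\tau^*} \leq \alpha$ of $\mu_{\theta_{t-\tau^*}}$ (Assumption \ref{asp:markov}); but there is an extra subtlety because the chain's transition kernel is itself time-varying through the policies $\pi_{\theta_{t-\tau^*+1}},\dots,\pi_{\theta_t}$, so I additionally need a perturbation bound on how the stationary/visitation distribution drifts when the policy changes slowly — this is where the $|\mathcal{A}|L_\pi$ and $(1+\lceil\log_\rho\sigma^{-1}\rceil + 1/(1-\rho))$ factors appear, and where the $L_\pi G_\infty(\tau^* + (\tau^*)^2)$ term in $C_1$ originates (the $(\tau^*)^2$ coming from summing the policy-drift over the $\tau^*$ intermediate steps). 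Collecting these, $\mathbb{E}[\langle \nabla_\theta J(\theta_t), g_t - \nabla_\theta J(\theta_t)\rangle] = \mathcal{O}(\alpha G_\infty^2(\tau^* + (\tau^*)^2 L_\pi G_\infty + (3c_J+c_{\tilde J})\tau^*))$, which is the dominant contribution to $\alpha C_1$.

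For the constant-stepsize bound I would then simply assemble: the telescoped term gives $J(\theta_1)/(\alpha T)$ (dropping the nonpositive $-J(\theta_{T+1})$, or noting $J\geq 0$), the $t=1,\dots,\tau^*$ "burn-in" steps that cannot be handled by the mixing argument contribute at most $2 G_\infty^2 \tau^*/T$ (bounding the inner product crudely by $2G_\infty^2$ on those steps), the smoothness term contributes $c_J \alpha G_\infty^2/2$, and the bias contributes the remaining pieces of $\alpha C_1$. For the diminishing stepsize $\alpha_t = (1-\gamma)/\sqrt{t}$, the same argument applies with $\tau^*$ now defined relative to $\alpha_T = (1-\gamma)/\sqrt{T}$; one uses $\sum_{t=1}^T \alpha_t \geq \alpha_T T = (1-\gamma)\sqrt{T}$ to convert the telescoped term into the $1/((1-\gamma)T)$ and $R_{\max}/((1-\gamma)^2\sqrt{T})$ contributions (after dividing by $\sum_t \alpha_t$), and $\sum_{t=1}^T \alpha_t^2 \leq (1-\gamma)^2(1 + \log T)$ together with $\alpha_t \leq \alpha_{t-\tau^*}$-type bounds to control the squared-stepsize and bias terms, yielding the stated $C_2/((1-\gamma)^2\sqrt{T})$ rate. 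The only genuinely delicate step throughout is the time-varying-policy mixing bound in the previous paragraph; everything else is bookkeeping.
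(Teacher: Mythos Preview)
Your proposal captures the paper's argument almost exactly for the substantive parts: the descent lemma, the decomposition of the bias $\mathbb{E}[\langle \nabla J(\theta_t),\nabla\tilde J(\theta_t;O_t)-\nabla J(\theta_t)\rangle]$ by shifting to the reference point $\theta_{t-\tau^*}$ and invoking the Lipschitz bounds of Lemma~\ref{lem:pgLip}, the auxiliary fixed-policy chain to control the time-varying-kernel drift (this is precisely the paper's Lemma that specializes Lemma~\ref{lem:pgBias} to SGD, and is indeed where the $L_\pi G_\infty(\tau^*+(\tau^*)^2)$ term comes from), and the crude $2G_\infty^2$ bound on the first $\tau^*$ ``burn-in'' steps. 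For the constant-stepsize result the bookkeeping you describe is exactly what the paper does. One small slip: with $\theta_{t+1}=\theta_t-\alpha_t g_t$ the inner-product term in your first display should carry a minus sign, $-\alpha_t\langle\nabla J(\theta_t),g_t\rangle$; the paper (which writes the Lipschitz inequality in the ``$\leq$'' direction) then telescopes $\mathbb{E}[J(\theta_t)]-\mathbb{E}[J(\theta_{t+1})]$ and bounds it by $J(\theta_1)$ using $J\geq 0$.

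The one place your sketch diverges is the diminishing-stepsize aggregation. You propose to divide the telescoped inequality by $\sum_t\alpha_t$, but that cannot produce the stated $J(\theta_1)/((1-\gamma)T)$ term: with $\sum_t\alpha_t\geq(1-\gamma)\sqrt{T}$ you would only get $J(\theta_1)/((1-\gamma)\sqrt{T})$, and you would also pick up a spurious $\log T$ from $\sum_t\alpha_t^2$. The paper instead divides the per-step inequality by $\alpha_t$ \emph{before} summing, obtaining $\sum_t(\mathbb{E}[J(\theta_t)]-\mathbb{E}[J(\theta_{t+1})])/\alpha_t$, and then applies Abel summation together with the uniform bound $J(\theta)\leq R_{\max}/(1-\gamma)$:
\[
\sum_{t=1}^T\frac{\mathbb{E}[J(\theta_t)]-\mathbb{E}[J(\theta_{t+1})]}{\alpha_t}
\;\leq\;\frac{J(\theta_1)}{\alpha_1}+\frac{R_{\max}}{1-\gamma}\sum_{t=2}^T\Bigl(\tfrac{1}{\alpha_t}-\tfrac{1}{\alpha_{t-1}}\Bigr)
\;\leq\;\frac{J(\theta_1)}{1-\gamma}+\frac{R_{\max}\sqrt{T}}{(1-\gamma)^2},
\]
and only afterwards divides by $T$. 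This is what puts $J(\theta_1)$ in the $1/T$ bucket and generates the $R_{\max}/((1-\gamma)^2\sqrt{T})$ contribution you mention. The rest of your plan for the diminishing case (bounding $\sum_{k=t-\tau^*}^{t-1}\alpha_k\leq\alpha\tau^*/\sqrt{t-\tau^*}$ inside the bias estimate and then summing) matches the paper.
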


We next compare \Cref{prop:PGSGD} with two recent studies on the same non-i.i.d.\ model. First, \cite{karimi2019non} studied infinite-horizon PG with a biased gradient estimator. 
In their analysis, they did not bound the gradient bias using the stepsize, and hence their convergence has a non-zero error even with a diminishing stepsize. In contrast, 
we obtain a fine-grained bound on the bias and characterize its dependence on the stepsize. We show that PG exactly converges to a stationary point under the diminishing stepsize.

Another closely related study on the infinite-horizon PG under non-i.i.d.\ sampling was by \cite{zhang2019global}, but their algorithm adopts an unbiased gradient estimator at the cost of using more samples. As a comparison,
\Cref{prop:PGSGD} indicates that PG-SGD with a biased gradient estimator attains a better convergence rate and accuracy.
More specifically, under the constant stepsize, \cite[Corollary 4.4]{zhang2019global} showed that their PG algorithm 
converges with an error bound of $\mathcal{O}\left( \frac{1}{(1-\gamma)^5(1-\sqrt{\gamma})^2} \right)$, whereas PG-SGD with a biased gradient estimator
achieves a much smaller error bound $\mathcal{O}\left( \frac{1}{(1-\gamma)^2} \right)$ by taking $\alpha = 1-\gamma$. 
Similarly, under the diminishing stepsize, \cite[Theorem 4.3]{zhang2019global} showed that their PG algorithm converges at a rate of $\mathcal{O}\left( \frac{1}{(1-\gamma)^5(1-\sqrt{\gamma})^2\sqrt{T}} \right)$, 
whereas our PG-SGD converges at a rate of $\mathcal{O}\left( \frac{\log^2(\sqrt{T}/(1-\gamma))}{(1-\gamma)^2\sqrt{T}} \right)$, which is much faster since $\gamma$ is usually close to 1, and $\log T$ is considered to be less influential in practice.

\section{Convergence of TD-AMSGrad under Markovian Sampling}\label{sec:AMSGradTD}

In this section, we adopt AMSGrad to TD learning and analyze its convergence under Markovian sampling. 
The proof techniques of bounding the bias and the nature of the convergence are very different from those of PG-AMSGrad.

\subsection{TD Learning and TD-AMSGrad}\label{sec:AMSGradTDalg}

Policy evaluation is a fundamental task in RL, and often plays a critical role in other algorithms such as PG that we study in Section \ref{sec:AMSGradPG}. The goal of policy evaluation is to obtain an accurate estimation of the accumulated reward function known as the value function $V:\mathcal{S}\mapsto\mathbb{R}$ for a given policy $\pi$ defined as
\begin{equation}\nonumber
    V^\pi(s) = \mathbb{E}\left[\sum_{t=1}^{\infty} \gamma^t R(s_t, a_t) \Big|s_1=s\right].
\end{equation}
The Bellman operator corresponding to a value function $V$ is defined by $\forall s\in\mathcal{S}$,
\begin{equation}\label{eq:bellman}
    (T^\pi V)(s) = \sum_{a\in\mathcal{A}}\pi(a|s)R(s,a) + \gamma \sum_{s'\in\mathcal{S}}\mathbb{P}(s'|s)V(s').
\end{equation}

The key to approximate the value function is the observation that it satisfies the projected bellman equation $(\Pi T^\pi V)(s) = V(s)$ when projected onto some convex space.
Under the function approximation, the value function $V(s)$ is parameterized by $\theta\in\mathbb R^d$ and denoted by $V(s;\theta)$. 
As many recent finite-time analysis of TD \cite{bhandari2018finite,xu2019twotimescale,srikant2019finite}, we consider the linear approximation class of the value function $V(s;\theta)$ defined as
\begin{equation}
    \label{eq:linearApprox}
    V(s;\theta) = \phi(s)^T\theta, 
\end{equation}
where $\theta\in\mathbb{R}^d$, and $\phi:\mathcal{S}\rightarrow \mathbb{R}^d$ is a vector function with the dimension $d$, and the elements of $\phi$ represent the nonlinear kernel (feature) functions. 
Then the vanilla TD algorithm follows a stochastic iterative method 
given by
\begin{equation}\label{eq:PGSGD}
    \theta_{t+1} = \theta_t - \alpha_t g_t,
\end{equation}
where $\alpha_t$ is the stepsize, and $g_t$ is defined as
\begin{align}
    g_t &:= g(\theta_t;s_t,a_t,s_{t+1}) = \left(\phi^T(s_t)\theta_t - R(s_t,a_t) - \gamma\phi^T(s_{t+1})\theta_t \right)\phi(s_t).\label{eq:gradientTD}
\end{align}
Here, $g_t$ serves as a stochastic pseudo-gradient, and is an estimator of the full pseudo-gradient 
given by
\begin{equation}\label{eq:meanGra}
    \bar g(\theta_t)\! =\! \underset{\nu}{\mathbb{E}}\left[ \left(\phi^T(s_t)\theta_t\! - \!R(s_t,\pi(s_t)) \!-\!\gamma\phi^T(s_{t+1})\theta_t \right)\phi(s_t) \right]
\end{equation}
where the expectation is taken over the stationary distribution of the states. We note that $\bar g(\theta_t)$ is not a gradient of a loss function, but plays a similar role as the gradient in the gradient descent algorithm.

Then TD-AMSGrad is obtained by replacing the update \eqref{eq:PGSGD} by the AMSGrad update given in \eqref{eq:AMSGrad1}-\eqref{eq:AMSGrad4} as in Algorithm \ref{alg:AMSGradTD}.

\begin{algorithm}
 	\caption{TD-AMSGrad} \label{alg:AMSGradTD} 
 	\begin{algorithmic}[1]
 		\STATE 	{\bf Input:}   $\alpha, \lambda, \theta_{1}, \beta_1, \beta_2, m_0 = 0, \hat v_0 = 0, s_1\sim \zeta(\cdot)$.
		\FOR{ $t=1, 2, \ldots, T $}
		\STATE Assign $\alpha_t$, $\beta_{1t} = \beta_1 \lambda^t$.
		\STATE Sample $a_t\sim\pi, s_{t+1}\sim\mathbb{P}(\cdot|s_t,a_t)$.
		\STATE Compute $g_t$ as \eqref{eq:gradientTD}.
		\STATE $m_{t} = (1-\beta_{1t})m_{t-1} + \beta_{1t} g_t$.
		\STATE $v_{t} = (1-\beta_2)\hat v_{t-1} + \beta_2 g_t^2$.
		\STATE $\hat v_t = \max (\hat{v}_{t-1}, v_t),\ \hat{V}_t = diag(\hat{v}_{t,1},\dots,\hat{v}_{t,d}) $.
		\STATE $\theta_{t+1} = \Pi_{\mathcal{D}, \hat V_t^{1/4}} \left( \theta_t - \alpha_t \hat{V}_t^{-\frac{1}{2}}m_t\right)$,\\ where $\Pi_{\mathcal{D}, \hat V_t^{1/4}}(\theta') = \underset{\theta\in\mathcal{D}}{\min}\norm{\hat V_t^{1/4}\left(\theta' - \theta\right)}$. 
		\ENDFOR
        \STATE 	{ {\bf Output:} $\frac{1}{T}\sum_{t=1}^T \theta_t $}.
 	\end{algorithmic}
\end{algorithm}

As seen in Algorithm \ref{alg:AMSGradTD}, the state-action pairs are sampled as a trajectory under the transition probability $\mathbb{P}$ with unknown policy $\pi$. Therefore, the samples along the trajectory are dependent, and hence we need to analyze the convergence of TD-AMSGrad under Markovian sampling.

\subsection{Technical Assumptions}\label{subsec:tdAsp}

In this section, we introduce some standard technical assumptions for our analysis.

We first give the following standard assumption on the kernel function \cite{tsitsiklis1997analysis,bhandari2018finite,xu2019twotimescale,Chen2019finiteQ}.

\begin{assumption}\label{asp:boundPhi}
For any state $s\in\mathcal{S}$, the kernel function $\phi:\mathcal{S}\rightarrow\mathbb{R}^{ d}$ is uniformly bounded, i.e., $\norm{\phi(s)} \leq 1,\;\forall s\in\mathcal{S}$. In addition, we define a feature matrix $\Phi$ as
$$ \Phi = \begin{bmatrix}
\phi^T(s_1)\\ \phi^T(s_2)\\ \vdots 
\end{bmatrix} = \begin{bmatrix}
&\phi_1(s_1) &\cdots &\phi_d(s_1) \\ &\phi_1(s_2) &\cdots &\phi_d(s_2)\\ &\vdots &\vdots &\vdots 
\end{bmatrix},
$$
and assume that the columns of $\Phi$ are linearly independent.
\end{assumption}
The boundedness assumption is mild since we can always normalize the kernel functions. 

We next define the following norm of the value function as:
\begin{equation}\nonumber
    \norm{V_{\theta}}_D = \sqrt{\sum_{s\in\mathcal{S}}\nu(s)\left( \phi(s)^T\theta \right)^2  }=\norm{\theta}_{\Sigma},
\end{equation}
where $D=\text{diag}(\nu(s_1),\nu(s_2),\cdots)$ is a diagonal matrix whose elements are determined by the stationary distribution $\nu$, and $\Sigma\in \mathbb{R}^{d\times d}$ is the steady-state feature covariance matrix given by $\Sigma = \sum_{s\in\mathcal{S}}\nu(s)\phi(s)\phi(s)^T$.



The next assumption of the bounded domain is standard in theoretical analysis of the Adam-type algorithms \cite{reddi2019convergence,tran2019convergence}. 
\begin{assumption}
    \label{asp:boundedDomain}
    The domain $\mathcal{D}\subset \mathbb{R}^d$ of approximation parameters is a ball originating at $\theta = 0$ with a bounded diameter containing $\theta^\star$. That is, there exists $D_\infty$, such that $\theta^\star\in\mathcal{D}$, and $\norm{\theta_m - \theta_n} < D_\infty$, for any $\theta_m, \theta_n\in \mathcal{D}$.
\end{assumption}
The boundedness parameter $D_\infty$ can be chosen as discussed in \cite{bhandari2018finite}.

\subsection{Convergence of TD-AMSGrad}

In the following, we provide the convergence results for TD-AMSGrad with linear function approximation under Markovian sampling. 

First consider the full pseudo-gradient $\bar g(\theta)$ in \eqref{eq:meanGra}. 
We define $\theta^*$ as the fixed point of $\bar g(\theta)$, i.e., $\bar g(\theta^*) = 0$. Then $\theta^*$ is the unique fixed point under Assumption \ref{asp:boundPhi} following from the contraction property of the projected Bellman operator~\cite{tsitsiklis1997analysis}. The following lemma is useful for our convergence analysis.
\begin{lemma}\label{lem:TDStrongConv}\cite[Lemma 3]{bhandari2018finite}
Let $\omega>0$ be the minimum eigenvalue of the matrix $\Sigma$. Under Assumption \ref{asp:boundPhi} and for any $\theta\in\mathbb{R}^d$, we have
$$ (\theta-\theta^*)^T\bar g(\theta) \geq (1-\gamma)\sqrt{\omega}\norm{\theta - \theta^*}^2.
$$
\end{lemma}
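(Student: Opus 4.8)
\emph{Proof sketch.} The claim is essentially Lemma~3 of \cite{bhandari2018finite}; here is the route I would take. The plan is to rewrite the inner product $(\theta-\theta^*)^T\bar g(\theta)$ as a difference of two $\Sigma$-weighted quadratic forms and then invoke the spectral bound on $\Sigma$. Since $\theta^*$ is the fixed point of $\bar g$ we have $\bar g(\theta^*)=0$, so $(\theta-\theta^*)^T\bar g(\theta)=(\theta-\theta^*)^T(\bar g(\theta)-\bar g(\theta^*))$. Because $\bar g$ in \eqref{eq:meanGra} is affine in its argument and the reward term does not depend on the parameter, subtracting $\bar g(\theta^*)$ cancels the reward entirely: writing $u:=\theta-\theta^*$,
\[ \bar g(\theta)-\bar g(\theta^*)=\underset{\nu}{\mathbb{E}}\Bigl[\bigl(\phi^T(s_t)u-\gamma\,\phi^T(s_{t+1})u\bigr)\phi(s_t)\Bigr]. \]

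Taking the inner product with $u$ and expanding the scalar products gives
\[ (\theta-\theta^*)^T\bar g(\theta)=\underset{\nu}{\mathbb{E}}\bigl[(\phi^T(s_t)u)^2\bigr]-\gamma\,\underset{\nu}{\mathbb{E}}\bigl[(\phi^T(s_t)u)\,(\phi^T(s_{t+1})u)\bigr]. \]
The first expectation equals $u^T\Sigma u=\norm{u}_\Sigma^2$ by definition of $\Sigma$. For the second, I would apply Cauchy--Schwarz over the joint law of $(s_t,s_{t+1})$, together with the key point that $\nu$ is \emph{stationary}, so the marginal law of $s_{t+1}$ is again $\nu$ and hence $\underset{\nu}{\mathbb{E}}[(\phi^T(s_{t+1})u)^2]=u^T\Sigma u$ as well; this gives $\underset{\nu}{\mathbb{E}}[(\phi^T(s_t)u)(\phi^T(s_{t+1})u)]\le\norm{u}_\Sigma^2$. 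Combining the two estimates yields $(\theta-\theta^*)^T\bar g(\theta)\ge(1-\gamma)\norm{u}_\Sigma^2$, and bounding $\norm{u}_\Sigma^2$ below by $\omega\norm{u}^2$ --- with $\omega>0$ the minimum eigenvalue of $\Sigma$, which is strictly positive because the columns of $\Phi$ are linearly independent (Assumption~\ref{asp:boundPhi}) --- delivers the stated inequality. (If one wants to match the precise exponent of $\omega$ in the statement, this last step is worth re-checking, since the computation above naturally produces the factor $\omega$; the cleanest exponent-free version of the bound is the one in the norm $\norm{\cdot}_\Sigma$.)

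I do not expect a genuine obstacle here, because $\bar g$ is affine; the one place needing care is the discount-weighted cross term, where the identification $\underset{\nu}{\mathbb{E}}[(\phi^T(s_{t+1})u)^2]=u^T\Sigma u$ via stationarity is precisely what makes the contraction factor $1-\gamma$ come out cleanly --- without it one would be left with an unwanted dependence on the temporal correlations between $\phi(s_t)$ and $\phi(s_{t+1})$. An equivalent alternative route is to pass to the value-function picture: express $\Phi\theta-\Phi\theta^*$ in the $D$-weighted norm and use that the projected Bellman operator $\Pi T^\pi$ is a $\gamma$-contraction in $\norm{\cdot}_D$ with fixed point $\Phi\theta^*$; this makes the appearance of $1-\gamma$ transparent but requires writing out the projection $\Pi$ explicitly.
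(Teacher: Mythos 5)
The paper does not actually prove this lemma---it is imported verbatim as a citation of Lemma~3 in \cite{bhandari2018finite}---so there is no in-paper argument to compare against; your sketch reconstructs exactly the standard proof from that reference (affine $\bar g$, cancellation of the reward term at the fixed point, Cauchy--Schwarz on the cross term using stationarity of $\nu$, then the spectral lower bound on $\Sigma$), and every step is sound.

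The caveat you flag about the exponent of $\omega$ is not a defect of your argument but a genuine discrepancy in the statement as printed. Your computation delivers $(\theta-\theta^*)^T\bar g(\theta)\ge(1-\gamma)\,\omega\,\norm{\theta-\theta^*}^2$, and this is the form in which the cited lemma actually holds; the factor $\sqrt{\omega}$ cannot be recovered. Indeed, since $\norm{\phi(s)}\le 1$ forces $\operatorname{Tr}(\Sigma)\le 1$ and hence $\omega\le 1$, the printed bound with $\sqrt{\omega}\ge\omega$ is strictly stronger, and it fails already in one dimension: with a single state, $\phi(s)=\epsilon<1$, one gets $(\theta-\theta^*)\bar g(\theta)=(1-\gamma)\epsilon^2(\theta-\theta^*)^2$, which is below $(1-\gamma)\epsilon(\theta-\theta^*)^2$. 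So treat the $\sqrt{\omega}$ as a transcription slip; the constant $c=(1-\gamma)\sqrt{\omega}$ appearing in Theorems~\ref{thm:AMSGradTDcons} and~\ref{thm:AMSGradTDdimi} should correspondingly read $(1-\gamma)\omega$, which affects only constants, not rates. Your alternative route via the $\gamma$-contraction of $\Pi T^\pi$ in $\norm{\cdot}_D$ is also valid and equivalent, but the direct Cauchy--Schwarz computation you lead with is the cleaner one here.
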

Lemma \ref{lem:TDStrongConv} indicates that the update of TD learning exhibits a property similar to strongly convex optimization.

Since our analysis is under Markovian sampling, non-zero bias arises in our convergence analysis when estimating the gradient, i.e.,
$ \mathbb{E}[g_t] - \bar g(\theta_t) \neq 0$. The following lemma provides an upper bound on such a bias error for TD-AMSGrad.  
\begin{lemma}\label{lem:TDbias}
Consider a sequence of non-increasing stepsizes $\{\alpha_t\}_{t= 1}^T$ for Algorithm \ref{alg:AMSGradTD} and fix $\tau^*=\min\{\tau: m\rho^\tau \leq \alpha_T\} $. Initialize Algorithm \ref{alg:AMSGradTD} such that $|g_{1,i}|\geq G_0$ for all $i\in[d]$ with some $G_0>0$. Under Assumptions \ref{asp:markov}-\ref{asp:boundedDomain}, we have for $t\leq \tau^*$:
\begin{align*}
    &\mathbb{E}[(\theta_t-\theta^*)^T(g_t-\bar g(\theta_t))] \leq 2((1+\gamma)D_{\infty}+G_{\infty})\cdot \frac{G_{\infty} }{G_0}\tau^*\alpha_0;
\end{align*}
and for $t > \tau^*$:
\begin{align*}
    &\mathbb{E}[(\theta_t-\theta^*)^T(g_t-\bar g(\theta_t))] \leq 4D_{\infty}G_{\infty}\alpha_T + 2((1+\gamma)D_{\infty}+G_{\infty})\cdot \frac{G_{\infty} }{G_0}\sum_{i=t-\tau}^{t-1} a_i,
\end{align*}
where $G_\infty = R_{\max} + (1+\gamma) D_{\infty}$.
\end{lemma}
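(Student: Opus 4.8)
The plan is to isolate a scalar ``noise'' functional, control it through a uniform bound, a Lipschitz-in-$\theta$ bound, and the mixing assumption, and decouple the Markovian sample from the iterate that produced it by the usual ``look back by $\tau^*$ steps'' device, now adapted to the AMSGrad update. Define
$f(\theta,O):=(\theta-\theta^*)^{\top}\big(g(\theta;O)-\bar g(\theta)\big)$ for $O=(s,a,s')$, so that $\mathbb{E}_{s\sim\nu,\,a\sim\pi,\,s'\sim\mathbb{P}}[f(\theta,O)]=0$ for every fixed $\theta$; the lemma is then a bound on $\mathbb{E}[f(\theta_t,O_t)]$ with $O_t=(s_t,a_t,s_{t+1})$ Markovian.

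First I would record the deterministic a priori estimates. From $\norm{\phi(s)}\le 1$ (Assumption \ref{asp:boundPhi}) and $\norm{\theta}\le D_\infty$ on $\mathcal{D}$ (Assumption \ref{asp:boundedDomain}) one gets $\norm{g(\theta;O)}\le R_{\max}+(1+\gamma)D_\infty=G_\infty$, hence $\norm{\bar g(\theta)}\le G_\infty$ and $|f(\theta,O)|\le 2D_\infty G_\infty$; since the Jacobians of $g(\cdot;O)$ and $\bar g(\cdot)$ are $\phi(s)(\phi(s)-\gamma\phi(s'))^{\top}$ and its expectation, both maps are $(1+\gamma)$-Lipschitz, so $f(\cdot,O)$ is Lipschitz with constant $L_f=2\big((1+\gamma)D_\infty+G_\infty\big)$. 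Next the one-step drift of the adaptive iterate: the convex weights of $g_1,\dots,g_t$ inside $m_t$ sum to at most one, so $\norm{m_t}\le G_\infty$; $\hat v_t$ is non-decreasing with $\hat v_{1,i}=\beta_2 g_{1,i}^2\ge\beta_2 G_0^2$ by the initialization, so $\hat V_t^{-1/2}$ is bounded; and since $\theta_t\in\mathcal{D}$ and $\Pi_{\mathcal{D},\hat V_t^{1/4}}$ is non-expansive in the $\hat V_t^{1/4}$-weighted norm, these combine to give $\norm{\theta_{t+1}-\theta_t}\le \tfrac{G_\infty}{G_0}\alpha_t$ (absorbing the $\beta_2$ factor into $G_0$). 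Crucially all of these bounds are uniform and path-wise, which is what neutralizes the fact that the bias is itself a functional of the AMSGrad-generated trajectory.

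For the main case $t>\tau^*$ I would write $f(\theta_t,O_t)=\big[f(\theta_t,O_t)-f(\theta_{t-\tau^*},O_t)\big]+f(\theta_{t-\tau^*},O_t)$. The bracket is at most $L_f\norm{\theta_t-\theta_{t-\tau^*}}\le L_f\tfrac{G_\infty}{G_0}\sum_{i=t-\tau^*}^{t-1}\alpha_i$ by telescoping the drift estimate. For the second term I condition on $\mathcal{F}_{t-\tau^*}$: then $\theta_{t-\tau^*}$ is measurable, the conditional law of $s_t$ given $s_{t-\tau^*}$ is within $m\rho^{\tau^*}$ of $\nu$ in total variation by Assumption \ref{asp:markov} (with $m$ the mixing prefactor $\sigma$), and since $a_t,s_{t+1}$ are produced from $s_t$ through the same kernels in the Markov and stationary chains while $\mathbb{E}_\nu[f(\theta_{t-\tau^*},\cdot)]=0$, boundedness of $f$ gives $\big|\mathbb{E}[f(\theta_{t-\tau^*},O_t)\mid\mathcal{F}_{t-\tau^*}]\big|\le 4D_\infty G_\infty\,m\rho^{\tau^*}\le 4D_\infty G_\infty\,\alpha_T$, using $m\rho^{\tau^*}\le\alpha_T$ from the definition of $\tau^*$. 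Taking total expectation and adding the pieces gives the stated inequality. For $t\le\tau^*$ the same split is used with the look-back reaching the deterministic $\theta_1$, so the controllable drift term becomes $L_f\tfrac{G_\infty}{G_0}\sum_{i=1}^{t-1}\alpha_i\le L_f\tfrac{G_\infty}{G_0}\tau^*\alpha_0$ (with $\alpha_0:=\alpha_1$, the largest stepsize), into which the leftover ``burn-in'' discrepancy over these first $\tau^*=\mathcal{O}(\log(1/\alpha_T))$ steps is absorbed.

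The main obstacle is the drift estimate. In vanilla TD-SGD the per-step change is simply $\alpha_t g_t$ and is trivially $\mathcal{O}(\alpha_t)$, whereas here the effective increment $\alpha_t\hat V_t^{-1/2}m_t$ couples the momentum buffer, the adaptive preconditioner, and the weighted projection at once, and one must show it is still $\mathcal{O}(\alpha_t)$ with a constant that does not blow up --- this is exactly where the initialization $|g_{1,i}|\ge G_0$ (keeping $\hat V_t$ bounded away from $0$) and the monotonicity of $\hat v_t$ built into AMSGrad, but absent in Adam, enter. A secondary subtlety is arranging the look-back so that the compared quantity $\theta_{t-\tau^*}$ is $\mathcal{F}_{t-\tau^*}$-measurable while its distance to $\theta_t$ is controlled path-wise rather than merely in expectation, since otherwise the Markov-versus-stationary comparison cannot be closed.
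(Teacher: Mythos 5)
Your proposal is correct and follows essentially the same route as the paper's proof: the same noise functional $\xi(\theta,O)=(\theta-\theta^*)^T(g(\theta;O)-\bar g(\theta))$ with its uniform and Lipschitz bounds, the same one-step drift estimate $\norm{\theta_{t+1}-\theta_t}\leq \alpha_t G_\infty/G_0$ from the AMSGrad structure and the initialization $|g_{1,i}|\geq G_0$, the same look-back decomposition with $\tau=t$ for $t\leq\tau^*$ and $\tau=\tau^*$ otherwise, and the same use of mixing to control the decoupled term by $4D_\infty G_\infty\sigma\rho^{\tau^*}\leq 4D_\infty G_\infty\alpha_T$ (the paper phrases this via independent copies as in Lemma \ref{lem:biasMarkov} rather than your direct conditioning on $\mathcal{F}_{t-\tau^*}$, but the content is identical). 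The only differences are presentational, e.g.\ you derive the boundedness and Lipschitz properties of $\xi$ directly rather than citing Lemma \ref{lem:biasLip}.
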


The following theorem provides the convergence of TD-AMSGrad under a constant stepsize coupled with diminishing hyper-parameters in the AMSGrad update.
\begin{theorem}\label{thm:AMSGradTDcons}
    (Convergence of TD-AMSGrad with constant stepsize) Let $ \beta_{1t}=\beta_1\lambda^t$ and $\delta=\beta_1/\beta_2$ with $\delta,\lambda\in(0,1)$ in Algorithm \ref{alg:AMSGradTD}. Initialize Algorithm \ref{alg:AMSGradTD} such that $|g_{1,i}|\geq G_0$ for all $i\in[d]$ with some $G_0>0$. Let $\alpha_t=\alpha,t=1,\dots,T$, and suppose Assumptions \ref{asp:markov}-\ref{asp:boundedDomain} hold. Then the output of Algorithm \ref{alg:AMSGradTD} satisfies:
$$ \mathbb{E}\norm{\theta_{out} - \theta^\star}^2 \leq \frac{C_1}{T} + \alpha C_2,
$$
where
\begin{align*}
     C_1 =& \frac{ G_\infty D_\infty^2}{\alpha c(1-\beta)} + \frac{\beta_1 \lambda G_\infty D_\infty^2}{2\alpha c(1-\lambda)(1-\beta)} +  2((1+\gamma)D_{\infty}+G_{\infty})\cdot \frac{G_{\infty} }{c G_0(1-\beta)}(\tau^*)^2\alpha,\\
     C_2 =& \frac{4D_{\infty}G_{\infty}}{c(1-\beta)} + \frac{2G_{\infty}\tau^* ((1+\gamma)D_{\infty}+G_{\infty})}{c G_0(1-\beta)}  + \frac{(1+\beta)G_{\infty}^2}{2c G_0(1-\beta)}
\end{align*}
with $c=(1-\gamma)\sqrt{\omega}$ and $G_\infty = R_{\max} + (1+\gamma) D_{\infty}$.
\end{theorem}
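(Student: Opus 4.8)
The plan is to derive a one‑step inequality for the $\hat V_t^{1/4}$‑weighted squared distance $\norm{\hat V_t^{1/4}(\theta_t-\theta^\star)}^2$, sum it over $t=1,\dots,T$, and convert the accumulated ``descent'' into a bound on $\tfrac1T\sum_t\mathbb{E}\norm{\theta_t-\theta^\star}^2$ via the monotonicity‑type Lemma~\ref{lem:TDStrongConv} and the bias bound Lemma~\ref{lem:TDbias}. Since $\theta^\star\in\mathcal D$ and the weighted projection $\Pi_{\mathcal D,\hat V_t^{1/4}}$ is non‑expansive in the $\hat V_t^{1/4}$‑norm, line~9 of Algorithm~\ref{alg:AMSGradTD} gives
\[
\norm{\hat V_t^{1/4}(\theta_{t+1}-\theta^\star)}^2 \le \norm{\hat V_t^{1/4}(\theta_t-\theta^\star)}^2 - 2\alpha_t\langle m_t,\theta_t-\theta^\star\rangle + \alpha_t^2\norm{\hat V_t^{-1/4}m_t}^2 .
\]
The last term is $O(\alpha^2 G_\infty^2/G_0)$: by \eqref{eq:gradientTD} and Assumptions~\ref{asp:boundPhi} and~\ref{asp:boundedDomain} the pseudo‑gradients satisfy $\norm{g_t}\le G_\infty=R_{\max}+(1+\gamma)D_\infty$, hence $m_t$ (a sub‑convex combination of the $g_j$ by \eqref{eq:AMSGrad1}) also satisfies $\norm{m_t}\le G_\infty$, while $\hat v_{t,i}\ge\hat v_{1,i}\ge\beta_2 G_0^2$ follows from the monotonicity \eqref{eq:AMSGradhatV} and the initialization $|g_{1,i}|\ge G_0$.

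The core step is lower‑bounding the inner product. Writing $m_t=\bar g(\theta_t)+(g_t-\bar g(\theta_t))+(m_t-g_t)$, Lemma~\ref{lem:TDStrongConv} gives $\langle\bar g(\theta_t),\theta_t-\theta^\star\rangle\ge c\norm{\theta_t-\theta^\star}^2$ with $c=(1-\gamma)\sqrt\omega$; taking expectations and applying Lemma~\ref{lem:TDbias} bounds the Markovian bias $\mathbb{E}\langle g_t-\bar g(\theta_t),\theta_t-\theta^\star\rangle$ by $O(\tau^*\alpha)$ for $t>\tau^*$ and contributes $O((\tau^*)^2\alpha)$ in total over the first $\tau^*$ iterates; and the momentum correction obeys $|\langle m_t-g_t,\theta_t-\theta^\star\rangle|\le 2G_\infty D_\infty$, with $\sum_t\beta_{1t}=\sum_t\beta_1\lambda^t\le\beta_1\lambda/(1-\lambda)$ keeping its cumulative contribution of order $\beta_1\lambda D_\infty G_\infty/(1-\lambda)$ --- the source of that factor in $C_1$.

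Now sum over $t=1,\dots,T$. The telescoping of $\norm{\hat V_t^{1/4}(\cdot)}^2$ is inexact because $\hat V_t$ varies; replacing $\hat V_t$ by $\hat V_{t+1}$ at each step costs the nonnegative drift $\sum_t\sum_i(\sqrt{\hat v_{t+1,i}}-\sqrt{\hat v_{t,i}})(\theta_{t+1,i}-\theta^\star_i)^2$, which telescopes in $t$ and, using $\hat v_{t,i}\le G_\infty^2$ and $\norm{\theta_{t+1}-\theta^\star}_\infty\le D_\infty$, is at most $dD_\infty^2 G_\infty$. With $\alpha_t\equiv\alpha$ and $\norm{\hat V_1^{1/4}(\theta_1-\theta^\star)}^2\le G_\infty D_\infty^2$ this bounds $2\alpha c\sum_{t=1}^T\mathbb{E}\norm{\theta_t-\theta^\star}^2$ by $G_\infty D_\infty^2+dD_\infty^2 G_\infty$ plus three error sums of orders $\alpha^2 T G_\infty^2/G_0$, $\alpha T\tau^*((1+\gamma)D_\infty+G_\infty)G_\infty/G_0$, and $\beta_1\lambda\alpha D_\infty G_\infty/(1-\lambda)$. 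Dividing by $2\alpha cT$ and using $\mathbb{E}\norm{\theta_{out}-\theta^\star}^2\le\tfrac1T\sum_t\mathbb{E}\norm{\theta_t-\theta^\star}^2$ (convexity of $\norm{\cdot}^2$ with $\theta_{out}=\tfrac1T\sum_t\theta_t$), then collecting the $1/T$‑order terms into $C_1$ and the $\alpha$‑linear terms into $C_2$, yields the stated bound; the $1/(1-\beta)$‑type factors come from normalizing the momentum weights in \eqref{eq:AMSGrad1}.

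I expect the main obstacle to be the coupling between the Markovian bias and the adaptive momentum/preconditioner, which is exactly what Lemma~\ref{lem:TDbias} packages: its proof must show that over a mixing window of length $\tau^*$ the iterate --- and hence $g_t$, $m_t$ and $\hat V_t$ --- drifts by only $O(\tau^*\alpha)$, which rests on the preconditioner being bounded below (again the $G_0$ initialization) and on $\norm{m_t}\le G_\infty$, so that the geometric mixing in Assumption~\ref{asp:markov} can be traded for a stepsize‑scaled bias. Granting that lemma, the remaining work is bookkeeping: arranging that the momentum‑correction and preconditioner‑drift terms telescope rather than accumulate, and grouping the constants so that no surviving term scales worse than $1/T$ or $\alpha$.
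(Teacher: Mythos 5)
Your proposal is correct and follows essentially the same route as the paper's proof: non-expansiveness of the weighted projection, the pseudo-strong-convexity bound of Lemma~\ref{lem:TDStrongConv}, the Markovian bias bound of Lemma~\ref{lem:TDbias} split at $\tau^*$, the telescoping of the $\hat V_t^{1/4}$-weighted distances with the monotone drift of $\hat v_t$ absorbed into the $O(1/T)$ term, and Jensen's inequality for $\theta_{out}$. The only (cosmetic) difference is that you bound the momentum correction directly via $\norm{m_t-g_t}=O(\beta_{1t}G_\infty)$, whereas the paper peels off $\beta_{1t}m_{t-1}$ with a Young-type inequality in the weighted norm; both yield the same $\sum_t\beta_{1t}=O(\beta_1\lambda/(1-\lambda))$ contribution to $C_1$.
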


In Theorem \ref{thm:AMSGradTDcons}, $C_1, C_2 $ are constants and time-independent. Therefore, under the choice of the stepsize and hyper-parameters in the theorem, Algorithm \ref{alg:AMSGradTD} converges to a neighborhood of the global optimum at a rate of $\mathcal{O}\left( \frac{1}{T} \right)$. The size of the neigborhood is controlled by the stepsize $\alpha$. Similarly to Theorem \ref{thm:AMSGradPGcons}, we can balance the tradeoff between the convergence rate and the convergence accuracy, i.e., the size of neighborhood, by setting the stepsize $\alpha_t=\frac{1}{\sqrt{T}}$, which yields a convergence to the global optimal solution at the rate of $\mathcal{O}\left( \frac{\log^2 T}{\sqrt{T}} \right)$.

Next, we provide the convergence result with a diminishing stepsize in the following theorem.
\begin{theorem}\label{thm:AMSGradTDdimi}
(Convergence of TD-AMSGrad with diminishing stepsize) Suppose the same conditions of Theorem \ref{thm:AMSGradTDcons} hold, and let $\alpha_t=\frac{\alpha}{\sqrt{t}}$ for $t=1,\dots,T$. Then the output of Algorithm \ref{alg:AMSGradTD} satisfies:
$$ \mathbb{E}\norm{\theta_{out} - \theta^\star}^2 \leq \frac{C_1}{\sqrt{T}} + \frac{C_2}{T},
$$
where
\begin{align*}
    & C_1 = \frac{ G_\infty D_\infty^2 }{2c\alpha (1-\beta)} +  \frac{\alpha (1+\beta_1)G_{\infty}^2}{c G_0(1-\beta)} + \frac{4\alpha D_{\infty}G_{\infty}}{c (1-\beta)} + \frac{4\tau^*\alpha G_{\infty}((1+\gamma)D_{\infty}+G_{\infty}) }{c G_0(1-\beta)}, \\
    & C_2 = \frac{ G_\infty D_\infty^2 }{\sqrt{2}c\alpha (1-\beta)} + \frac{\beta G_\infty D_\infty^2}{2c\alpha(1-\lambda)^2(1-\beta)} + \frac{2G_{\infty}\alpha(\tau^*)^2 ((1+\gamma)D_{\infty}+G_{\infty}) }{c G_0(1-\beta)}
\end{align*}
with $c=(1-\gamma)\sqrt{\omega}$ and $G_\infty = R_{\max} + (1+\gamma) D_{\infty}$.
\end{theorem}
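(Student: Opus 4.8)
The argument runs parallel to that of \Cref{thm:AMSGradTDcons}, adjusted for the schedule $\alpha_t=\alpha/\sqrt t$, and starts from the one–step analysis of the weighted projection in Algorithm \ref{alg:AMSGradTD}. Since $\theta^\star\in\mathcal D$ (\Cref{asp:boundedDomain}) and $\Pi_{\mathcal D,\hat V_t^{1/4}}$ is the Euclidean projection onto $\mathcal D$ in the $\hat V_t^{1/4}$-weighted norm, it is non-expansive in that norm; expanding $\bigl\|\hat V_t^{1/4}(\theta_t-\theta^\star)-\alpha_t\hat V_t^{-1/4}m_t\bigr\|^2$ then gives
\begin{equation*}
\bigl\|\hat V_t^{1/4}(\theta_{t+1}-\theta^\star)\bigr\|^2\le \bigl\|\hat V_t^{1/4}(\theta_t-\theta^\star)\bigr\|^2-2\alpha_t(\theta_t-\theta^\star)^Tm_t+\alpha_t^2\bigl\|\hat V_t^{-1/4}m_t\bigr\|^2,
\end{equation*}
where the cross term uses $\hat V_t^{1/4}\hat V_t^{-1/4}=I$. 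The plan is to lower bound the cross term $(\theta_t-\theta^\star)^Tm_t$ so that a negative multiple of $\|\theta_t-\theta^\star\|^2$ appears, to control the two remaining error terms, and then to sum over $t=1,\dots,T$ and apply Jensen's inequality to the averaged output $\theta_{out}=\frac1T\sum_{t=1}^T\theta_t$.

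For the cross term, I would unroll the first-moment recursion $m_t=(1-\beta_{1t})m_{t-1}+\beta_{1t}g_t$ with $\beta_{1t}=\beta_1\lambda^t$, peel off the current-gradient contribution, and collect the rest as a momentum-deviation error built from $\|\theta_t-\theta^\star\|\le D_\infty$, $\|m_s\|\le G_\infty$, $\|g_s\|\le G_\infty$ (all consequences of \Cref{asp:boundPhi}, \Cref{asp:boundedDomain} and $G_\infty=R_{\max}+(1+\gamma)D_\infty$), and the momentum coefficients. Because $\sum_t\beta_{1t}\le \beta_1\lambda/(1-\lambda)$, and the $1/\alpha_t$-weighted version of this sum is still finite up to a $(1-\lambda)^{-2}$ factor, the aggregated momentum-deviation error is $T$-independent — this is precisely the role of the decaying schedule $\beta_{1t}=\beta_1\lambda^t$, and it is what produces the $(1-\lambda)^{-2}$ term in $C_2$. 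For the genuine gradient part I take expectations and write $g_t=\bar g(\theta_t)+(g_t-\bar g(\theta_t))$ as in \eqref{eq:meanGra}: \Cref{lem:TDStrongConv} gives $(\theta_t-\theta^\star)^T\bar g(\theta_t)\ge c\,\|\theta_t-\theta^\star\|^2$ with $c=(1-\gamma)\sqrt\omega$, and \Cref{lem:TDbias} bounds the Markovian bias $\mathbb E[(\theta_t-\theta^\star)^T(g_t-\bar g(\theta_t))]$ by an $O(\tau^*\alpha)$ term for $t\le\tau^*$ and by $4D_\infty G_\infty\alpha_T+O\!\big(\tfrac{G_\infty}{G_0}\sum_{i=t-\tau^*}^{t-1}\alpha_i\big)$ for $t>\tau^*$. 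For the second-order term, $\bigl\|\hat V_t^{-1/4}m_t\bigr\|^2\le \|m_t\|^2/\sqrt{\min_i\hat v_{t,i}}\le G_\infty^2/(\sqrt{\beta_2}\,G_0)$, where the lower bound $\hat v_{t,i}\ge\hat v_{1,i}=\beta_2 g_{1,i}^2\ge\beta_2 G_0^2$ relies on the initialization $|g_{1,i}|\ge G_0$ together with the monotonicity of $\hat v_t$ built into \eqref{eq:AMSGradhatV}.

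It then remains to sum the resulting per-step inequality. The telescope $\sum_t\bigl(\|\hat V_t^{1/4}(\theta_t-\theta^\star)\|^2-\|\hat V_t^{1/4}(\theta_{t+1}-\theta^\star)\|^2\bigr)$ is inexact because the weighting matrix drifts, but since $\hat v_t$ is non-decreasing and bounded above by $G_\infty^2$, the slack telescopes to at most $\|\hat V_1^{1/4}(\theta_1-\theta^\star)\|^2+D_\infty^2\sum_i(\hat v_{T,i}^{1/2}-\hat v_{1,i}^{1/2})$, which is a constant multiple of $G_\infty D_\infty^2$; monotonicity of $1/\alpha_t$ and the bound $\alpha_t\ge\alpha_T=\alpha/\sqrt T$ let me pull out the $1/\alpha_T=\sqrt T/\alpha$ factors cleanly. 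Collecting everything yields $2c\sum_{t=1}^T\alpha_t\,\mathbb E\|\theta_t-\theta^\star\|^2\le(\text{const})+(\text{bias sum})+(\text{momentum sum})+(\text{second-order sum})$. With $\alpha_t=\alpha/\sqrt t$ one has $\sum_t\alpha_t=O(\alpha\sqrt T)$, $\sum_t\alpha_t^2=O(\alpha^2\log T)$, and $\sum_{i=t-\tau^*}^{t-1}\alpha_i\le\tau^*\alpha_{t-\tau^*}$, so the bias and second-order sums are of order $(\tau^*)^2\alpha^2\log T$; dividing by $2c\alpha_T T=2c\alpha\sqrt T$ and using $\mathbb E\|\theta_{out}-\theta^\star\|^2\le\frac1T\sum_t\mathbb E\|\theta_t-\theta^\star\|^2$ (Jensen) produces the claimed $C_1/\sqrt T+C_2/T$ form, the leading distance term entering $C_1$ and the $1/\alpha$-scaled boundary and $(1-\lambda)^{-2}$ momentum remainders entering $C_2$. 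Finally $\tau^*=\min\{\tau:m\rho^\tau\le\alpha_T\}=O(\log(\sqrt T/\alpha))=O(\log T)$ by \Cref{asp:markov}, so the overall rate is $O(\log^2 T/\sqrt T)$.

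The hard part is the interaction in the last two steps: the bias bound of \Cref{lem:TDbias} is itself expressed through the adaptive iterates and through the sum of recent stepsizes $\sum_{i=t-\tau^*}^{t-1}\alpha_i$, and it must be aggregated simultaneously against a time-varying preconditioner $\hat V_t$ and a time-varying momentum weight $\beta_{1t}$. Keeping all three adaptivities consistent while ensuring that the bias and momentum contributions aggregate to only $T$-independent or $\mathrm{polylog}(T)$ quantities — rather than something that grows with $T$ and destroys the rate — is the delicate point; and the whole chain depends on the boundedness of $\hat V_t^{-1/2}$, which in turn rests entirely on propagating the initialization assumption $|g_{1,i}|\ge G_0$ through the non-decreasing sequence $\hat v_t$, so that lower bound must be tracked at every stage.
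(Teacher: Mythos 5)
Your route is essentially the paper's: the same non-expansive weighted projection step, the same one-step peel-off of the momentum term $m_t=(1-\beta_{1t})m_{t-1}+\beta_{1t}g_t$ controlled by $\sum_t\beta_{1t}/\alpha_t\le\beta_1/(\alpha(1-\lambda)^2)$, the same split $g_t=\bar g(\theta_t)+(g_t-\bar g(\theta_t))$ handled by \Cref{lem:TDStrongConv} and \Cref{lem:TDbias}, the same monotonicity argument for the inexact telescope of the drifting $\hat V_t$, and Jensen at the end. (Your lower bound $\hat v_{t,i}\ge\beta_2 G_0^2$ is in fact more careful than the paper's, which uses $G_0^2$ directly; this only perturbs constants.)

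The one substantive deviation is your aggregation step, and it costs you the stated bound. You keep the per-step inequality in the form $2c\,\alpha_t\,\mathbb E\|\theta_t-\theta^\star\|^2\le(\cdots)$, sum, and then lower-bound the left side by $\alpha_T\sum_t\mathbb E\|\theta_t-\theta^\star\|^2$ before dividing by $2c\alpha\sqrt T$. With $\alpha_t=\alpha/\sqrt t$ this is lossy: the second-order sum becomes $\sum_t\alpha_t^2\cdot O(G_\infty^2/G_0)=O(\alpha^2\log T)$ and the dominant bias sum becomes $\sum_t\alpha_t\cdot O(\tau^*\alpha/\sqrt t)=O(\tau^*\alpha^2\log T)$, so after dividing by $\alpha\sqrt T$ you are left with $O(\tau^*\log T/\sqrt T)=O(\log^2T/\sqrt T)$ — exactly the rate you report, which is a factor $\log T$ weaker than the theorem's $C_1/\sqrt T+C_2/T$ with $C_1=O(\tau^*)$, i.e.\ $O(\log T/\sqrt T)$, and does not reproduce the stated log-free constants multiplying $\alpha$ in $C_1$. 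The paper avoids this by dividing each one-step inequality by $\alpha_t$ \emph{before} summing, so the left side is the unweighted $2c(1-\beta_1)\sum_t\mathbb E\|\theta_t-\theta^\star\|^2$ and the right side carries $\sum_t\alpha_t=O(\alpha\sqrt T)$ and $\sum_{t>\tau^*}\tau^*\alpha/\sqrt{t-\tau^*}=O(\tau^*\alpha\sqrt T)$ rather than $\sum_t\alpha_t^2$-type sums; dividing by $T$ then gives the claimed $C_1/\sqrt T+C_2/T$ with no extra $\log T$. Everything else in your plan goes through once you make that switch, so I would call this a fixable normalization error rather than a conceptual gap, but as written your argument proves a strictly weaker statement than the theorem.
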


Comparing with Theorem \ref{thm:AMSGradTDcons} and observing $\tau^*=\mathcal{O}(\log T)$, we conclude that TD-ASMGrad with the diminishing stepsize converges exactly to the global optimum at the rate of $\mathcal{O}\left( \frac{\log T}{\sqrt{T}} \right)$, rather than to a neighborhood.

\section{Conclusion}

This paper provides the first convergence analysis of the Adam-type RL algorithms under Markovian sampling. For PG-AMSGrad with nonlinear function approximation for the policy, 
we show that the algorithm converges to a stationary point with a diminishing stepsize. With a constant size, we show that PG-AMSGrad converges only to a neighborhood of a stationary point but at a faster rate. Furthermore, we also provide the finite-time convergence results for TD-AMSGrad to the global optimum under proper choices of the stepsize.

Several future directions along this topic are interesting. For example, the optimality of the convergence result of PG-AMSGrad is of importance to study. More general value function approximation, and convergence results for constant hyper-parameters in TD-AMSGrad are also of interest.

\section*{Acknowledgements}

The work was supported in part by the U.S. National Science Foundation under Grants CCF-1761506, ECCS-1818904, CCF-1909291 and CCF-1900145.

\newpage

\bibliography{Z_AMSGradPG}
\bibliographystyle{apalike}

\newpage
\onecolumn

\appendix

\textbf{\Large Supplementary Materials}

\section{EstQ algorithm}\label{sec:appdixEstQ}

We introduce the unbiased Q-function estimating algorithm \textit{EstQ} below.
\begin{algorithm}
 	\caption{EstQ \cite{zhang2019global}} \label{alg:EstQ} 
 	\begin{algorithmic}[1]
 		\STATE 	{\bf Input:}   $s,a,\theta$. Initialize $\hat{Q}=0,s^q_1=s,a^q_1=a$
 		\STATE Draw $T\sim\text{Geom}(1-\gamma^{1/2})$.
		\FOR{ $t=1, 2, \ldots, T-1 $}
		\STATE Collect reward $R(s^q_t,a^q_t)$ and update the Q-function $\hat{Q}\leftarrow\hat{Q}+\gamma^{t/2} R(s^q_t,a^q_t)$.
		\STATE Sample $s^q_{t+1}\sim\mathbb{P}(\cdot|s^q_t,a^q_t),a^q_{t+1}\sim\pi_\theta(\cdot|s^q_{t+1})$.
		\ENDFOR
		\STATE Collect reward $R(s^q_T,a^q_T)$ and update the Q-function $\hat{Q}\leftarrow\hat{Q}+\gamma^{T/2} R(s^q_T,a^q_T)$.
        \STATE 	{ {\bf Output:} $\hat Q^{\pi_\theta}\leftarrow\hat{Q} $}.
 	\end{algorithmic}
\end{algorithm}

In Algorithm \ref{alg:EstQ}, we emphasize that the samples $\{(s^q_t,a^q_t)\}$ are used only to estimate the Q-function, and are independent from those used to evaluate the score function in PG-AMSGrad except the initial pair. \cite{zhang2019global} showed that this algorithm provided an unbiased estimator of the Q-function $Q^{\pi_{\theta}}(s,a)$.

\section{Proof of Lemma \ref{lem:pgLip} }

For notational simplicity, we denote $\nabla J(\theta) = \nabla_\theta J(\theta)$ and $\nabla \tilde{J}(\theta;s,a) = \nabla_\theta \tilde{J}(\theta;s,a)$ in the sequel. We first introduce two technical lemmas which are useful for the proof of Lemma \ref{lem:pgLip}.
\begin{lemma}\label{lem:pgGradBound}
The gradient estimator is uniformly bounded, that is,
\begin{equation}
    \norm{g_t}_\infty \leq \norm{g_t} \leq \frac{c_{\Theta}R_{\max}}{1-\gamma}.
\end{equation}
Similarly, we also have 
$$ \norm{\nabla J(\theta)} \leq \frac{c_{\Theta}R_{\max}}{1-\gamma};\qquad \norm{\nabla \tilde{J}(\theta;s,a)} \leq \frac{c_{\Theta}R_{\max}}{1-\gamma}.
$$
\end{lemma}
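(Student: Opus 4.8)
The plan is to reduce all three bounds to two elementary facts: the norm inequality $\norm{x}_\infty\le\norm{x}$ (immediate from the definitions in the Notations subsection, since $\sqrt{\sum_i x_i^2}\ge\max_i|x_i|$), and the uniform bound $\norm{\nabla_\theta\log\pi_\theta(a|s)}\le c_\Theta$ from Assumption \ref{asp:policyLip}. Each object in the lemma is a scalar ``return-type'' factor times the score vector, so once the score is controlled it remains only to bound the scalar factor by $R_{\max}/(1-\gamma)$.

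First I would dispose of the two deterministic quantities. For $\nabla\tilde J(\theta;s,a)=Q^{\pi_\theta}(s,a)\nabla_\theta\log\pi_\theta(a|s)$ as in \eqref{eq:tildeJGra}, observe that $0\le Q^{\pi_\theta}(s,a)=\mathbb{E}[\sum_{t\ge1}\gamma^tR(s_t,a_t)\mid s_1=s,a_1=a]\le R_{\max}\sum_{t\ge1}\gamma^t<R_{\max}/(1-\gamma)$ because $R$ takes values in $[0,R_{\max}]$; multiplying by $c_\Theta$ gives $\norm{\nabla\tilde J(\theta;s,a)}\le c_\Theta R_{\max}/(1-\gamma)$. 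For $\nabla J(\theta)=\mathbb{E}_{\mu_\theta}[Q^{\pi_\theta}(s,a)\nabla_\theta\log\pi_\theta(a|s)]$ as in \eqref{eq:expetedPG}, pull the norm inside the expectation by the triangle inequality and apply the same two pointwise bounds, so $\norm{\nabla J(\theta)}\le\mathbb{E}_{\mu_\theta}[\,|Q^{\pi_\theta}(s,a)|\cdot c_\Theta\,]\le c_\Theta R_{\max}/(1-\gamma)$.

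Next I would handle the estimator $g_t=\hat Q^{\pi_{\theta_t}}(s_t,a_t)\nabla_{\theta_t}\log\pi_{\theta_t}(a_t|s_t)$ from \eqref{eq:PGgradient}, where the scalar factor is the EstQ output $\hat Q=\sum_{t=1}^{T}\gamma^{t/2}R(s^q_t,a^q_t)$ of Algorithm \ref{alg:EstQ}. Since every reward is nonnegative and at most $R_{\max}$, along any realized trajectory (and for every value of the random horizon $T$) we have $0\le\hat Q\le R_{\max}\sum_{t\ge1}\gamma^{t/2}$, a convergent geometric series; hence $\norm{g_t}=|\hat Q|\cdot\norm{\nabla_{\theta_t}\log\pi_{\theta_t}(a_t|s_t)}\le c_\Theta R_{\max}\sum_{t\ge1}\gamma^{t/2}$, and $\norm{g_t}_\infty\le\norm{g_t}$ then follows. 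This is a sure bound, which is what the statement asks for.

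This proof presents essentially no obstacle --- it is a warm-up estimate invoked repeatedly in the later arguments --- so the only point worth a remark is the constant in the last step: $\sum_{t\ge1}\gamma^{t/2}=\gamma^{1/2}/(1-\gamma^{1/2})\le 2/(1-\gamma)$, so the literal sure bound coming out of EstQ is $\norm{g_t}\le 2c_\Theta R_{\max}/(1-\gamma)$, slightly larger than the cleaner $G_\infty=c_\Theta R_{\max}/(1-\gamma)$ quoted in the theorems; this harmless absolute constant can simply be absorbed into $G_\infty$ (or into the $O(\cdot)$ in the convergence rates), since it rescales no time-dependence.
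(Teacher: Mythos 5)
Your proof is correct and follows the same basic route as the paper's: bound the scalar Q-factor by a geometric series times $R_{\max}$, multiply by the score bound $c_\Theta$ from Assumption \ref{asp:policyLip}, and use $\norm{\cdot}_\infty\le\norm{\cdot}$. The one place you diverge is worth highlighting, because you are more faithful to the algorithm than the paper is. The paper's proof silently replaces the EstQ output by the idealized infinite-horizon return $\sum_{t'\ge t}\gamma^{t'-t}R(s_{t'},a_{t'})$, for which the constant $R_{\max}/(1-\gamma)$ is immediate; but Algorithm \ref{alg:EstQ} actually returns $\hat Q=\sum_{k=1}^{T}\gamma^{k/2}R(s^q_k,a^q_k)$ with $T\sim\mathrm{Geom}(1-\gamma^{1/2})$, and the correct sure bound on this quantity is $R_{\max}\sum_{k\ge1}\gamma^{k/2}=R_{\max}(\gamma^{1/2}+\gamma)/(1-\gamma)$, which exceeds $R_{\max}/(1-\gamma)$ whenever $\gamma^{1/2}+\gamma>1$ and approaches $2R_{\max}/(1-\gamma)$ as $\gamma\to1$. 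So your observation that the literal bound on $\norm{g_t}$ should carry an extra factor of at most $2$ is a genuine (if harmless) correction to the lemma as stated; it affects only the absolute constant $G_\infty$ propagated through Theorems \ref{thm:AMSGradPGcons} and \ref{thm:AMSGradPGdimi}, not any rate. Your handling of $\nabla J$ and $\nabla\tilde J$ matches the paper's ``similar'' claim exactly, since there the scalar factor is the true $Q^{\pi_\theta}$, for which $R_{\max}/(1-\gamma)$ is the right constant.
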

\begin{proof}
The proof can be proceeded by the definition of $g_t$.
\begin{align*}
    g_t &= \hat{Q}_t(s_T,a_t)\nabla_{\theta_t}\log(\pi_{\theta_t}(a_t|s_t))\\
    &= \left( \sum_{t'=t}^{\infty}\gamma^{t'-t}R(s_{t'},a_{t'}) \right)\cdot \log(\pi_{\theta_t}(a_t|s_t))\\
    &\leq \norm{\sum_{t'=t}^{\infty}\gamma^{t'-t}R(s_{t'},a_{t'})}\cdot ]\norm{\log(\pi_{\theta_t}(a_t|s_t))}\\
    &\overset{\text{(i)}}{\leq} \frac{c_{\Theta}R_{\max}}{1-\gamma},
\end{align*}
where (i) follows from Assumption \ref{asp:policyLip} and from the fact that the reward function is uniformly bounded. The proof of the second claim is similar to the first one.
\end{proof}

\begin{lemma}\label{lem:muTV}
\cite[Lemma 3]{zou2019finite} For any $\theta_1,\theta_2\in\mathbb{R}^d$, under Assumption \ref{asp:markov} we have
\begin{align*}
    \lTV{\mu_{\theta_1}-\mu_{\theta_2}} \leq |\mathcal{A}|L_\pi\left( 1 + \lceil \log_{\rho}\sigma^{-1} \rceil + \frac{1}{1-\rho} \right)\norm{\theta_1-\theta_2}.
\end{align*}
\end{lemma}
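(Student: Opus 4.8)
The plan is to bound the total-variation distance between the two state-action visitation distributions by separating a contribution coming from the difference of the policies (at a fixed state marginal) from a contribution coming from the difference of the state-marginal visitation distributions, and then to control the latter by a Markov-chain perturbation (sensitivity) argument. Recall that $\mu_\theta(s,a)=\mu_\theta(s)\pi_\theta(a|s)$, where $\mu_\theta(\cdot)$ is the stationary distribution of the state chain under Assumption \ref{asp:markov}. Adding and subtracting a cross term, I would start from
\begin{align*}
    \mu_{\theta_1}(s,a) - \mu_{\theta_2}(s,a) = \mu_{\theta_1}(s)\big(\pi_{\theta_1}(a|s) - \pi_{\theta_2}(a|s)\big) + \big(\mu_{\theta_1}(s) - \mu_{\theta_2}(s)\big)\pi_{\theta_2}(a|s),
\end{align*}
so that, using the total-variation norm in its $\ell_1$ form and the triangle inequality, $\lTV{\mu_{\theta_1}-\mu_{\theta_2}}$ is at most the sum of $\sum_s \mu_{\theta_1}(s)\sum_a|\pi_{\theta_1}(a|s)-\pi_{\theta_2}(a|s)|$ and $\sum_s |\mu_{\theta_1}(s)-\mu_{\theta_2}(s)|\sum_a\pi_{\theta_2}(a|s)$.

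For the first term, the policy Lipschitz bound in Assumption \ref{asp:policyLip} gives $|\pi_{\theta_1}(a|s)-\pi_{\theta_2}(a|s)|\le L_\pi\norm{\theta_1-\theta_2}$ for each action, hence $\sum_a|\pi_{\theta_1}(a|s)-\pi_{\theta_2}(a|s)|\le|\mathcal{A}|L_\pi\norm{\theta_1-\theta_2}$; since $\mu_{\theta_1}(\cdot)$ is a probability measure, this term is bounded by $|\mathcal{A}|L_\pi\norm{\theta_1-\theta_2}$, which accounts for the factor $1$ in the claimed constant. For the second term, $\sum_a\pi_{\theta_2}(a|s)=1$, so it collapses exactly to the state-marginal distance $\sum_s|\mu_{\theta_1}(s)-\mu_{\theta_2}(s)|=\lTV{\mu_{\theta_1}(\cdot)-\mu_{\theta_2}(\cdot)}$. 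The problem thus reduces to bounding the difference of the two state-marginal stationary distributions.

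For this reduced problem I would use the fact that under the construction $\hat P(\cdot|s,a)=\gamma\mathbb{P}(\cdot|s,a)+(1-\gamma)\zeta(\cdot)$, the distribution $\mu_\theta(\cdot)$ is the unique stationary distribution of the state chain with kernel $\hat P_\theta(s'|s)=\sum_a\pi_\theta(a|s)\hat P(s'|s,a)$, and that this chain mixes geometrically by Assumption \ref{asp:markov}. The main technical step, and the crux of the argument, is a Mitrophanov-type perturbation bound \cite{mitrophanov2005sensitivity}, which converts the geometric ergodicity rate $\sigma\rho^t$ together with a one-step kernel perturbation into a stationary-distribution perturbation,
\begin{align*}
    \lTV{\mu_{\theta_1}(\cdot)-\mu_{\theta_2}(\cdot)} \le \left(\lceil\log_\rho\sigma^{-1}\rceil+\frac{1}{1-\rho}\right)\ \sup_{s}\ \lTV{\hat P_{\theta_1}(\cdot|s)-\hat P_{\theta_2}(\cdot|s)}.
\end{align*}
It then remains to bound the kernel perturbation: since $\hat P_{\theta_1}(s'|s)-\hat P_{\theta_2}(s'|s)=\sum_a(\pi_{\theta_1}(a|s)-\pi_{\theta_2}(a|s))\hat P(s'|s,a)$ and each row $\hat P(\cdot|s,a)$ sums to one, summing over $s'$ and reusing the same Lipschitz estimate gives $\sup_s\lTV{\hat P_{\theta_1}(\cdot|s)-\hat P_{\theta_2}(\cdot|s)}\le|\mathcal{A}|L_\pi\norm{\theta_1-\theta_2}$. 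Combining the policy contribution (coefficient $1$) with the state-marginal contribution (coefficient $\lceil\log_\rho\sigma^{-1}\rceil+\frac{1}{1-\rho}$) yields the claimed factor $|\mathcal{A}|L_\pi\big(1+\lceil\log_\rho\sigma^{-1}\rceil+\frac{1}{1-\rho}\big)$. I expect the perturbation bound to be the only nonroutine ingredient; the two Lipschitz estimates and the decomposition are elementary.
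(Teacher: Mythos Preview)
Your proposal is correct and is precisely the standard argument underlying the cited result: the paper does not supply its own proof of this lemma but simply invokes \cite[Lemma 3]{zou2019finite}, whose proof proceeds exactly via the decomposition you give (policy-difference term plus state-marginal term) together with the Mitrophanov perturbation bound \cite{mitrophanov2005sensitivity} for the uniformly ergodic state chain. The identification of the constant $1+\lceil\log_\rho\sigma^{-1}\rceil+\frac{1}{1-\rho}$ as coming from the direct policy contribution (the $1$) and the Mitrophanov sensitivity coefficient (the remaining two summands) is also correct.
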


\textbf{Proof of Lemma \ref{lem:pgLip}:} To derive the Lipschitz continuity of $\nabla J(\theta)$, we first use the policy gradient theorem \cite{sutton2000policy} which gives
$$
\begin{aligned}  
\nabla J(\theta) &= \underset{(s,a)}{\int} \sum_{t=1}^\infty \gamma^t p(s_t=s|s_1,\pi_\theta) \cdot Q^{\pi_\theta}(s,a)\nabla\pi_\theta (a|s)dsda\\
&:=  \underset{(s,a)}{\int} Q^{\pi_\theta}(s,a) \nabla_{\theta}\log (\pi_{\theta}(a|s)) d\mu_\theta,
\end{aligned}
$$
where $\mu_\theta$ is denoted as the discounted visitation distribution. We further proceed the proof of Lipschitz continuity of $\nabla J(\theta)$ as
\begin{align}\label{eq:amsgradpg_pf1}
    &\norm{ \nabla J(\theta_1) - \nabla J(\theta_2) }\nonumber\\
    &\quad=  \underset{(s,a)}{\int}Q^{\pi_{\theta_1}}(s,a) \nabla_{\theta_1}\log (\pi_{\theta_1}(a|s))d\mu_{\theta_1} - \underset{(s,a)}{\int}Q^{\pi_{\theta_2}}(s,a) \nabla_{\theta_2}\log (\pi_{\theta_2}(a|s))d\mu_{\theta_2}\nonumber\\
    &\quad= \left[ \underset{(s,a)}{\int}Q^{\pi_{\theta_1}}(s,a) \nabla_{\theta_1}\log (\pi_{\theta_1}(a|s))d\mu_{\theta_1} -  \underset{(s,a)}{\int}Q^{\pi_{\theta_1}}(s,a) \nabla_{\theta_2}\log (\pi_{\theta_2}(a|s))d\mu_{\theta_1} \right]\nonumber\\
    &\quad\quad\ + \left[ \underset{(s,a)}{\int}Q^{\pi_{\theta_1}}(s,a) \nabla_{\theta_2}\log (\pi_{\theta_2}(a|s))d\mu_{\theta_1} -  \underset{(s,a)}{\int}Q^{\pi_{\theta_2}}(s,a) \nabla_{\theta_2}\log (\pi_{\theta_2}(a|s))d\mu_{\theta_1} \right]\nonumber\\
    &\quad\quad\ + \left[ \underset{(s,a)}{\int}Q^{\pi_{\theta_2}}(s,a) \nabla_{\theta_2}\log (\pi_{\theta_2}(a|s))d\mu_{\theta_1} -  \underset{(s,a)}{\int}Q^{\pi_{\theta_2}}(s,a) \nabla_{\theta_2}\log (\pi_{\theta_2}(a|s))d\mu_{\theta_2} \right]\nonumber\\
    &\quad=  \underset{(s,a)}{\int}Q^{\pi_{\theta_1}}(s,a)\left( \nabla_{\theta_1}\log (\pi_{\theta_1}(a|s)) - \nabla_{\theta_2}\log (\pi_{\theta_2}(a|s)) \right)d\mu_{\theta_1} \nonumber\\
    &\quad\quad\ +  \underset{(s,a)}{\int} (Q^{\pi_{\theta_1}}(s,a) - Q^{\pi_{\theta_2}}(s,a) ) \nabla_{\theta_2}\log (\pi_{\theta_2}(a|s))d\mu_{\theta_1} \nonumber\\
    &\quad\quad\ +\left[ \underset{(s,a)}{\int}Q^{\pi_{\theta_2}}(s,a) \nabla_{\theta_2}\log (\pi_{\theta_2}(a|s))d\mu_{\theta_1} -  \underset{(s,a)}{\int}Q^{\pi_{\theta_2}}(s,a) \nabla_{\theta_2}\log (\pi_{\theta_2}(a|s))d\mu_{\theta_2} \right]\nonumber\\
    &\quad \overset{\text{(i)}}{\leq} \frac{R_{\max} L}{1-\gamma}\norm{\theta_1 - \theta_2} \underset{(s,a)}{\int} d\mu_{\theta_1} +  c_{\Theta} \underset{(s,a)}{\int} (Q^{\pi_{\theta_1}}(s,a) - Q^{\pi_{\theta_2}}(s,a) ) d\mu_{\theta_1}\nonumber\\
    &\quad\quad\ + \left[ \underset{(s,a)}{\int}Q^{\pi_{\theta_2}}(s,a) \nabla_{\theta_2}\log (\pi_{\theta_2}(a|s))d\mu_{\theta_1} -  \underset{(s,a)}{\int}Q^{\pi_{\theta_2}}(s,a) \nabla_{\theta_2}\log (\pi_{\theta_2}(a|s))d\mu_{\theta_2} \right]\nonumber\\
    &\quad \overset{\text{(ii)}}{=} \frac{R_{\max} L}{1-\gamma}\norm{\theta_1 - \theta_2} +  c_{\Theta} \underset{(s,a)}{\int} (Q^{\pi_{\theta_1}}(s,a) - Q^{\pi_{\theta_2}}(s,a) ) d\mu_{\theta_1}\nonumber\\
    &\quad\quad\ + \left[ \underset{(s,a)}{\int}Q^{\pi_{\theta_2}}(s,a) \nabla_{\theta_2}\log (\pi_{\theta_2}(a|s))d\mu_{\theta_1} -  \underset{(s,a)}{\int}Q^{\pi_{\theta_2}}(s,a) \nabla_{\theta_2}\log (\pi_{\theta_2}(a|s))d\mu_{\theta_2} \right],
\end{align}
where (i) follows from Assumption \ref{asp:policyLip} and boundedness of $Q^{\pi_\theta}(s,a)$, and (ii) follows since $\underset{(s,a)}{\int} d\mu_{\theta_1}=1$. In the following, we bound the last two terms in the right hand side. First, we show that $Q^{\pi_{\theta}}(s,a)$ is also Lipschitz continuous with respect to $\theta$. To see that, we have
\begin{align*}
    |Q^{\pi_{\theta_1}}(s,a) - Q^{\pi_{\theta_2}}(s,a)| &= \underset{\nu_{\theta_1}}{\mathbb{E}} \left[ \sum_{t=0}^\infty\gamma^t R(s_t,a_t) \right] - \underset{\nu_{\theta_2}}{\mathbb{E}} \left[ \sum_{t=0}^\infty\gamma^t R(s_t,a_t) \right]\\
    &\leq \left\lvert \sum_{t=0}^\infty\gamma^t R(s_t,a_t) \right\rvert \norm{ \nu_{\theta_1} - \nu_{\theta_2} }_{TV}\\
    &\leq \frac{R_{\max}}{1-\gamma} \norm{ \nu_{\theta_1} - \nu_{\theta_2} }_{TV},
\end{align*}
where $\nu_\theta$ is the distribution of the state-action trajectory corresponding to the transition probability $\mathbb P$ and the policy $\pi$. Although $\nu_\theta$ is different from the state-action visitation distribution $\mu_\theta$, both of them share the same property. That is, for a different $\theta$, $\nu_\theta$ is determined by the same transition kernel and only differs in the policy. Therefore, Lemma \ref{lem:muTV} applies to two distributions $\nu_{\theta_1}$ and $\nu_{\theta_2}$, which yields
\begin{equation}\label{eq:amsgradpg_pf2}
    |Q^{\pi_{\theta_1}}(s,a) - Q^{\pi_{\theta_2}}(s,a)|\leq  \frac{R_{\max}}{1-\gamma}\cdot |\mathcal{A}|L_\pi\left( 1 + \lceil \log_{\rho}\sigma^{-1} \rceil + \frac{1}{1-\rho} \right)\norm{\theta_1-\theta_2}.
\end{equation}

Then we bound the last term of the right hand side of \eqref{eq:amsgradpg_pf1} via using the similar techniques above. 
\begin{align}\label{eq:amsgradpg_pf3}
    &\underset{(s,a)}{\int}Q^{\pi_{\theta_2}}(s,a) \nabla_{\theta_2}\log (\pi_{\theta_2}(a|s))d\mu_{\theta_1} -  \underset{(s,a)}{\int}Q^{\pi_{\theta_2}}(s,a) \nabla_{\theta_2}\log (\pi_{\theta_2}(a|s))d\mu_{\theta_2}\nonumber\\
    &\quad \leq \norm{Q^{\pi_{\theta_2}}(s,a) \nabla_{\theta_2}\log (\pi_{\theta_2}(a|s))}\norm{ \mu_{\theta_1} - \mu_{\theta_2} }_{TV}\nonumber\\
    &\quad \overset{\text{(i)}}{\leq} \frac{c_{\Theta} R_{\max}}{1-\gamma} \norm{ \mu_{\theta_1} - \mu_{\theta_2} }_{TV}\nonumber\\
    &\quad \overset{\text{(ii)}}{\leq} \frac{c_{\Theta} R_{\max}}{1-\gamma}\cdot |\mathcal{A}|L_\pi\left( 1 + \lceil \log_{\rho}\sigma^{-1} \rceil + \frac{1}{1-\rho} \right)\norm{\theta_1-\theta_2},
\end{align}
where (i) follows from Assumption \ref{asp:policyLip} and boundedness of $Q^{\pi_{\theta}}(s,a)$, and (ii) follows from Lemma \ref{lem:muTV}. By substituting the bounds from \eqref{eq:amsgradpg_pf2} and \eqref{eq:amsgradpg_pf3} to \eqref{eq:amsgradpg_pf1}, we obtain the Lipschitz condition of $\nabla J(\theta)$, i.e.,
\begin{align*}
    \norm{ \nabla J(\theta_1) - \nabla J(\theta_2) } \leq& \frac{R_{\max} L}{1-\gamma}\norm{\theta_1 - \theta_2} +  c_{\Theta} \underset{(s,a)}{\int} (Q^{\pi_{\theta_1}}(s,a) - Q^{\pi_{\theta_2}}(s,a) ) d\mu_{\theta_1}\\
    &+ \left[ \underset{(s,a)}{\int}Q^{\pi_{\theta_2}}(s,a) \nabla_{\theta_2}\log (\pi_{\theta_2}(a|s))d\mu_{\theta_1} -  \underset{(s,a)}{\int}Q^{\pi_{\theta_2}}(s,a) \nabla_{\theta_2}\log (\pi_{\theta_2}(a|s))d\mu_{\theta_2} \right]\\
    \leq& \left[ \frac{R_{\max} L}{1-\gamma} + \frac{(1+c_{\Theta}) R_{\max}}{1-\gamma}\cdot |\mathcal{A}|L_\pi\left( 1 + \lceil \log_{\rho}\sigma^{-1} \rceil + \frac{1}{1-\rho} \right) \right]\norm{\theta_1-\theta_2}.
\end{align*}

Next we prove \eqref{eq:tildeJLip}. Note that the Q-function is bounded. Then we have
\begin{align*}
    \norm{ \nabla \tilde{J}(\theta_1;s,a) - \nabla \tilde{J}(\theta_2;s,a) }=& Q^{\pi_{\theta_1}}(s,a) \nabla_{\theta_1}\log (\pi_{\theta_1}(a|s)) - Q^{\pi_{\theta_2}}(s,a) \nabla_{\theta_2}\log (\pi_{\theta_2}(a|s))\\
    =& Q^{\pi_{\theta_1}}(s,a) \left( \nabla_{\theta_1}\log (\pi_{\theta_1}(a|s)) - \nabla_{\theta_2}\log (\pi_{\theta_2}(a|s)) \right)\\
    & + \nabla_{\theta_2}\log (\pi_{\theta_2}(a|s))\left( Q^{\pi_{\theta_1}}(s,a)-Q^{\pi_{\theta_2}}(s,a) \right)\\
    \overset{\text{(i)}}{\leq}& \frac{R_{\max} L}{1-\gamma}\norm{\theta_1-\theta_2} +  c_{\Theta} \left( Q^{\pi_{\theta_1}}(s,a)-Q^{\pi_{\theta_2}}(s,a) \right)\\
    \overset{\text{(ii)}}{\leq}& \left[ \frac{R_{\max} L}{1-\gamma} + c_{\Theta}|\mathcal{A}|L_\pi\left( 1 + \lceil \log_{\rho}\sigma^{-1} \rceil + \frac{1}{1-\rho} \right)  \right]\norm{\theta_1-\theta_2},
\end{align*}
where (i) follows from Assumption \ref{asp:policyLip} and the boundedness of $Q^{\pi_{\theta}}(s,a)$, and (ii) follows from \eqref{eq:amsgradpg_pf2}.

\section{Proof of Theorem \ref{thm:AMSGradPGcons}  }

In this appendix we will first introduce some useful lemmas in Appendix \ref{subsec:lemmas for proof}, and then provide technical proofs in Appendix \ref{subsec:proof of Theorem 1}.

\subsection{Supporting Lemmas}\label{subsec:lemmas for proof}

We first provide two lemmas to deal with the bias of the gradient estimation.

\begin{lemma}\label{lem:vecMatPro}
Given $x,y\in\mathbb{R}^d$ and a positive diagonal matrix $V\in\mathbb{R}^{d\times d}$, then
\begin{equation}
    x^TVy \leq Tr(V)\norm{x}_\infty\norm{y}_\infty.
\end{equation}
\end{lemma}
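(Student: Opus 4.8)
The plan is to expand the bilinear form coordinate-wise and bound each term separately. Since $V$ is diagonal, write $V = \mathrm{diag}(v_1,\dots,v_d)$ with $v_i > 0$ for all $i$ (positivity of $V$). Then the quadratic form decomposes as
\begin{equation}\nonumber
    x^T V y = \sum_{i=1}^d v_i x_i y_i.
\end{equation}
First I would bound the summand pointwise: for each $i$, $v_i x_i y_i \le v_i |x_i| |y_i| \le v_i \norm{x}_\infty \norm{y}_\infty$, using the definition of the infinity norm ($|x_i| \le \norm{x}_\infty$ and $|y_i| \le \norm{y}_\infty$) together with $v_i > 0$.

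Summing over $i\in[d]$ and pulling the norm factors out of the sum gives
\begin{equation}\nonumber
    x^T V y \le \Big(\sum_{i=1}^d v_i\Big)\norm{x}_\infty\norm{y}_\infty = \mathrm{Tr}(V)\,\norm{x}_\infty\norm{y}_\infty,
\end{equation}
where the last equality is just the fact that the trace of a diagonal matrix is the sum of its diagonal entries, all of which are positive. This completes the argument.

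There is no real obstacle here: the statement is an elementary consequence of the diagonal structure of $V$ and the definition of the $\ell_\infty$ norm, and the only mild points to keep explicit are that positivity of $V$ guarantees $v_i \ge 0$ (so the step $v_i x_i y_i \le v_i |x_i||y_i|$ is valid and the $v_i$ may be factored without sign issues) and that $\mathrm{Tr}(V) = \sum_i v_i$. The lemma will then be invoked later to control cross terms of the form (parameter error)$^T \hat V_t^{1/2}$ (gradient-type vector) arising in the AMSGrad analysis.
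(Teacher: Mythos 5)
Your proof is correct and follows essentially the same route as the paper: expand the diagonal bilinear form as $\sum_i v_i x_i y_i$, bound each term by $v_i\norm{x}_\infty\norm{y}_\infty$ using positivity of the diagonal entries, and sum to get the trace. Your version is in fact slightly more explicit than the paper's (which compresses the pointwise bound into one step), so there is nothing to add.
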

\begin{proof}
Since $V$ is a diagonal matrix with $V_{i,i} > 0, \forall i\in [d]$, we have
\begin{align*}
    x^TVy &= \sum_{i=1}^d V_{i,i}x_iy_i \overset{\text{(i)}}{\leq} \sum_{i=1}^d V_{i,i}\norm{x}_\infty\norm{y}_\infty = Tr(V)\norm{x}_\infty\norm{y}_\infty,
\end{align*}
where (i) follows because $V_{i,i} > 0, \forall i\in [d]$.
\end{proof}

\begin{lemma}\label{lem:pgBias}
Fix time $t$ and any $\tau < t$. Initialize Algorithm \ref{alg:AMSGradPG} such that $|g_{1,i}|\geq G_0$ for all $i\in[d]$ with some $G_0>0$. Suppose Assumption \ref{asp:policyLip} and \ref{asp:markov} hold. Under Assumption \ref{asp:policyLip} and Assumption \ref{asp:markov}, we have
\begin{equation}
    \norm{\nabla J(\theta_{t-\tau})-\mathbb{E}\left[\nabla \tilde{J} (\theta_{t-\tau};s_t,a_t)|\mathcal{F}_{t-\tau} \right]}\leq G_\infty\left[ \sigma\rho^\tau + \frac{L_\pi G_{\infty}}{2G_0} \left(\sum_{k=t-\tau}^{t-1}\alpha_k + \sum_{i=t-\tau}^{t-1}\sum_{k=t-\tau}^{i}\alpha_k\right) \right],
\end{equation}
where $G_\infty = \frac{c_{\Theta} R_{\max}}{1-\gamma}$.
\end{lemma}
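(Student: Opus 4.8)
The plan is to decompose the bias into two parts: a \emph{distribution-mixing error} coming from the gap between the conditional distribution of $(s_t,a_t)$ given $\mathcal{F}_{t-\tau}$ and the stationary visitation distribution $\mu_{\theta_{t-\tau}}$, and a \emph{policy-drift error} coming from the fact that the samples between time $t-\tau$ and $t$ are generated under the time-varying policies $\pi_{\theta_{t-\tau}},\dots,\pi_{\theta_{t-1}}$ rather than under the frozen policy $\pi_{\theta_{t-\tau}}$. Concretely, I would introduce an auxiliary Markov chain that, starting from the state at time $t-\tau$, evolves using the \emph{fixed} policy $\pi_{\theta_{t-\tau}}$ and the fixed kernel $\hat P$; call the resulting state-action pair at time $t$ by $(\tilde s_t,\tilde a_t)$ with conditional law $\tilde\mu_\tau$. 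Then I would write
\[
\nabla J(\theta_{t-\tau}) - \mathbb{E}\big[\nabla\tilde J(\theta_{t-\tau};s_t,a_t)\mid\mathcal F_{t-\tau}\big]
= \underbrace{\big(\nabla J(\theta_{t-\tau}) - \mathbb{E}_{\tilde\mu_\tau}[\nabla\tilde J(\theta_{t-\tau};\tilde s_t,\tilde a_t)]\big)}_{\text{(A): mixing}}
+ \underbrace{\big(\mathbb{E}_{\tilde\mu_\tau}[\cdots] - \mathbb{E}[\nabla\tilde J(\theta_{t-\tau};s_t,a_t)\mid\mathcal F_{t-\tau}]\big)}_{\text{(B): policy drift}}.
\]

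For term (A), since $\nabla J(\theta_{t-\tau}) = \mathbb{E}_{\mu_{\theta_{t-\tau}}}[\nabla\tilde J(\theta_{t-\tau};s,a)]$ by \eqref{eq:expetedPG}, \eqref{eq:tildeJGra}, and $\nabla\tilde J(\theta_{t-\tau};\cdot,\cdot)$ is uniformly bounded by $G_\infty$ (Lemma \ref{lem:pgGradBound}), the difference of expectations under two laws is bounded by $G_\infty$ times the total-variation distance between $\tilde\mu_\tau$ and $\mu_{\theta_{t-\tau}}$, which by Assumption \ref{asp:markov} (applied to the chain under the fixed policy $\pi_{\theta_{t-\tau}}$) is at most $\sigma\rho^\tau$. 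This gives the $G_\infty\sigma\rho^\tau$ summand.

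For term (B), I would couple the two trajectories step by step: both start from the same $(s_{t-\tau},a_{t-\tau})$ but one uses policies $\pi_{\theta_{t-\tau+1}},\dots$ while the other keeps $\pi_{\theta_{t-\tau}}$. At each step $k\in\{t-\tau,\dots,t-1\}$ the one-step transition kernels differ only through the policy, and $\norm{\pi_{\theta_k}(\cdot|s)-\pi_{\theta_{t-\tau}}(\cdot|s)}\le L_\pi\norm{\theta_k-\theta_{t-\tau}}\le L_\pi\sum_{j=t-\tau}^{k-1}\alpha_j G_\infty/G_0$, where the last inequality uses the AMSGrad update \eqref{eq:AMSGrad4}, the bound $\norm{g_j}_\infty\le G_\infty$, the initialization $|g_{1,i}|\ge G_0$ which forces $\hat v_{j,i}\ge \beta_2 G_0^2$ (actually $\hat v_{j,i}\ge$ a constant multiple of $G_0^2$ since $\hat v$ is non-decreasing) so that $\norm{\hat V_j^{-1/2}m_j}_\infty\le G_\infty/G_0$. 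Propagating the single-step TV discrepancies through the $\tau$ steps (a standard telescoping/union-bound argument over coupled chains) yields a bound on $\lTV{\tilde\mu_\tau - \mathrm{Law}(s_t,a_t\mid\mathcal F_{t-\tau})}$ of order $L_\pi \frac{G_\infty}{G_0}\sum_{i=t-\tau}^{t-1}\sum_{k=t-\tau}^{i}\alpha_k$, and multiplying by the uniform bound $G_\infty$ on $\nabla\tilde J$, together with collecting the extra first-order term $\sum_{k=t-\tau}^{t-1}\alpha_k$ that arises from comparing $\nabla\tilde J(\theta_{t-\tau};\cdot)$ against itself along slightly shifted arguments, gives the claimed $\frac{L_\pi G_\infty^2}{2G_0}\big(\sum_{k=t-\tau}^{t-1}\alpha_k + \sum_{i=t-\tau}^{t-1}\sum_{k=t-\tau}^{i}\alpha_k\big)$ term.

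The main obstacle is the careful bookkeeping in term (B): one must track how the parameter drift $\norm{\theta_k-\theta_{t-\tau}}$ accumulates along the window and how per-step policy perturbations compound into a TV bound on the $\tau$-step marginals, all while correctly invoking the $\hat V_t^{-1/2}$ normalization to control the per-step parameter increment by $\alpha_k G_\infty/G_0$ rather than by $\alpha_k G_\infty$. I expect the coupling argument to require a supporting sub-lemma bounding $\lTV{\mu_1^{(\tau)}-\mu_2^{(\tau)}}$ for chains with slowly varying kernels in terms of the accumulated kernel differences, after which the rest is routine. The factor $\tfrac12$ in the final bound suggests the author organizes the double sum so that $\sum_{i}\sum_{k\le i}\alpha_k$ absorbs constants cleanly; I would mirror that organization.
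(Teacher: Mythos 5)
Your proposal is correct and follows essentially the same route as the paper: freeze the policy at $\pi_{\theta_{t-\tau}}$ to build an auxiliary chain, bound the mixing part by $G_\infty\sigma\rho^\tau$ via Assumption \ref{asp:markov}, and control the policy-drift part by propagating per-step TV discrepancies using $|\pi_{\theta_k}(a|s)-\pi_{\theta_{t-\tau}}(a|s)|\le L_\pi\|\theta_k-\theta_{t-\tau}\|$ together with the increment bound $\|\theta_{j+1}-\theta_j\|\le \alpha_j G_\infty/G_0$ from the AMSGrad normalization (the paper's recursion \eqref{incre17}--\eqref{incre18} is exactly your ``slowly varying kernels'' sub-lemma). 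The only cosmetic difference is the origin of the single sum $\sum_{k=t-\tau}^{t-1}\alpha_k$: in the paper it comes from the final-step action-distribution mismatch $\pi_{\theta_t}$ vs.\ $\pi_{\theta_{t-\tau}}$, not from shifting the argument of $\nabla\tilde J$, but this does not affect the validity of your argument.
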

\begin{proof}

For notational brevity we denote $O_t = (s_t,a_T)$. Observe that
\begin{align*}
    &\norm{\nabla J(\theta_{t-\tau})-\mathbb{E}\left[\nabla \tilde{J} (\theta_{t-\tau};O_t)|\mathcal{F}_{t-\tau} \right]}\\
    &\quad = \norm{ \underset{(s,a)}{\int}Q^{\pi_{\theta_{t-\tau}}}\nabla_{\theta}\log (\pi_{\theta_{t-\tau}}(a|s))d\mu_{\theta_{t-\tau}}  - \underset{(s,a)}{\int}Q^{\pi_{\theta_{t-\tau}}}\nabla_{\theta}\log (\pi_{\theta_{t-\tau}}(a|s))dP(s_t,a_t|\mathcal{F}_{t-\tau})  }\\
    &\quad \overset{\text{(i)}}{\leq} \frac{c_\Theta R_{\max}}{1-\gamma} \lTV{\mu_{\theta_{t-\tau}}(\cdot,\cdot)-P(s_t,a_t|\mf_{t-\tau})}\\
    &\quad =G_\infty\lTV{\mu_{\theta_{t-\tau}}(\cdot,\cdot)-P(s_t,a_t|\mf_{t-\tau})},
\end{align*}
where (i) follows from the boundedness of $Q^{\pi_\theta}$. In the following, we bound $\lTV{\mu_{\theta_{t-\tau}}(\cdot,\cdot)-P(s_t,a_t|\mf_{t-\tau})}$. To this end, we need to build a new Markov chain in which the policy is fixed after time $t-\tau$. To be specific, in Algorithm \ref{alg:AMSGradPG}, we sample actions via a changing policy $\pi_{\theta_t}(a_t|s_t)$, which yields a Markov chain:
\begin{equation}
    O_{t-\tau} \rightarrow O_{t-\tau+1} \cdots \rightarrow  O_{t} \rightarrow  O_{t+1}.
\end{equation}
Now, from time $t-\tau$, we repeatedly use the policy $\pi_{\theta_{t-\tau}}$, which yields an auxiliary Markov chain:
\begin{equation}\label{eq:newMarChain}
    O_{t-\tau} \rightarrow \tilde{O}_{t-\tau+1} \cdots \rightarrow  \tilde{O}_{t} \rightarrow  \tilde{O}_{t+1},
\end{equation}
where we denote $\tilde{O}_t=(\tilde{s}_t,\tilde{a}_t)$ correspondingly. Clearly we have for $k=t-\tau+1,\dots,t$,
\begin{equation}
    P(\tilde{s}_k=\cdot|\theta_{t-\tau},\tilde{s}_{k-1}) = \sum_{a\in\mathcal{A}} \pi_{\theta_{t-\tau}}(\tilde{a}_{k-1}=a|\tilde{s}_{k-1})\mathbb P(\tilde{s}_k=\cdot|\tilde{a}_{k-1}=a,\tilde{s}_{k-1}).
\end{equation}

Since we use the same policy from time $t-\tau$, the Markov chain given in \eqref{eq:newMarChain} in this time slot is uniformly ergodic, and thus satisfies Assumption \ref{asp:markov}. Therefore, we can bound $\lTV{\mu_{\theta_{t-\tau}}(\cdot,\cdot)-P(\tilde{s}_t,\tilde{a}_t|\mf_{t-\tau})}$ as
\begin{equation}\label{incre13}
   \lTV{\mu_{\theta_{t-\tau}}(\cdot,\cdot)-P(\tilde{s}_t,\tilde{a}_t|\mf_{t-\tau})}\leq \sigma\rho^{\tau}.  
\end{equation}

Observe that
\begin{align}\label{eq:pfReduce1}
    &\lTV{\mu_{\theta_{t-\tau}}(\cdot,\cdot)-P(s_t,a_t|\mf_{t-\tau})}\nonumber\\
    &\quad\leq \lTV{\mu_{\theta_{t-\tau}}(\cdot,\cdot)-P(\tilde{s}_t,\tilde{a}_t|\mf_{t-\tau})} + \lTV{P(\tilde{s}_t,\tilde{a}_t|\mf_{t-\tau})-P(s_t,a_t|\mf_{t-\tau})}.
\end{align}
It remains to deal with $\lTV{P(\tilde{s}_t,\tilde{a}_t|\mf_{t-\tau})-P(s_t,a_t|\mf_{t-\tau})}$. We proceed as follows 
\begin{align}\label{incre14}
&\lTV{P(\tilde{s}_t,\tilde{a}_t|\mf_{t-\tau})-P(s_t,a_t|\mf_{t-\tau})}\nonumber\\
&\quad= \lTV{P(\tilde{s}_t|\mf_{t-\tau})\pi_{\theta_{t-\tau}}(\tilde{a}_t|\tilde{s}_t)-P(s_t|\mf_{t-\tau})\pi_{\theta_t}(a_t|s_t)}\nonumber\\
&\quad=\frac{1}{2}\underset{(s,a)}{\int} \lone{P(\tilde{s}_t=s|\mf_{t-\tau})\pi_{\theta_{t-\tau}}(\tilde{a}_t=a|\tilde{s}_t=s)-P(s_t=s|\mf_{t-\tau})\pi_{\theta_t}(a_t=a|s_t=s)}dsda\nonumber\\
&\quad=\frac{1}{2}\underset{(s,a)}{\int} \Big|P(\tilde{s}_t=s|\mf_{t-\tau})\pi_{\theta_{t-\tau}}(\tilde{a}_t=a|\tilde{s}_t=s)-P(\tilde{s}_t=s|\mf_{t-\tau})\pi_{\theta_t}(a_t=a|s_t=s) \nonumber\\
&\quad\quad\ + P(\tilde{s}_t=s|\mf_{t-\tau})\pi_{\theta_t}(a_t=a|s_t=s) - P(s_t=s|\mf_{t-\tau})\pi_{\theta_t}(a_t=a|s_t=s) \Big| dsda\nonumber\\ 
&\quad\leq\frac{1}{2}\underset{(s,a)}{\int}P(\tilde{s}_t=s|\mf_{t-\tau}) \lone{\pi_{\theta_{t-\tau}}(\tilde{a}_t=a|\tilde{s}_t=s)-\pi_{\theta_t}(a_t=a|s_t=s)}dads\nonumber\\
&\quad\quad\ + \frac{1}{2}\int_{s}\lone{P(\tilde{s}_t=s|\mf_{t-\tau})- P(s_t=s|\mf_{t-\tau})}\int_{a} \pi_{\theta_t}(a_t=a|s_t=s)
dads\nonumber\\
&\quad\overset{\text{(i)}}{\leq} \frac{1}{2}L_\pi\norm{\theta_t - \theta_{t-\tau}} + \frac{1}{2}\int_{s}\lone{P(\tilde{s}_t=s|\mf_{t-\tau})- P(s_t=s|\mf_{t-\tau})}ds \nonumber\\
&\quad\leq \frac{1}{2}L_\pi\sum_{k=t-\tau}^{t-1}\norm{\theta_{k+1}-\theta_k} + \frac{1}{2}\int_{s}\lone{P(\tilde{s}_t=s|\mf_{t-\tau})- P(s_t=s|\mf_{t-\tau})}ds \nonumber\\
&\quad\overset{\text{(ii)}}{\leq} \frac{L_\pi G_{\infty}}{2G_0} \sum_{k=t-\tau}^{t-1}\alpha_k + \frac{1}{2}\int_{s}\lone{P(\tilde{s}_t=s|\mf_{t-\tau})- P(s_t=s|\mf_{t-\tau})}ds\nonumber\\
&\quad= \frac{L_\pi G_{\infty}}{2G_0} \sum_{k=t-\tau}^{t-1}\alpha_k + \lTV{P(\tilde{s}_t|\mf_{t-\tau})- P(s_t|\mf_{t-\tau})},
\end{align}
where (i) follows from Assumption \ref{asp:policyLip} and (ii) follows from Lemma \ref{lem:conThetaBound}. It remains to bound the second term of the right hand side. To this end, we first write $P(\tilde{s}_t=\cdot|\mf_{t-\tau})$ as
\begin{align} \label{incre15}
P(\tilde{s}_t=\cdot|\mf_{t-\tau}) \nonumber
&=\int_{s} P(\tilde{s}_{t-1}=s|\mf_{t-\tau})P(\tilde{s}_t=\cdot|\tilde{s}_{t-1}=s)ds \nonumber\\
&=\int_{s} P(\tilde{s}_{t-1}=s|\mf_{t-\tau})\int_{a} P(\tilde{s}_t=\cdot|\tilde{s}_{t-1}=s,\tilde{a}_{t-1}=a)\pi_{\theta_{t-\tau}}(\tilde{a}_{t-1}=a|\tilde{s}_{t-1}=s) dads.
\end{align}

Similarly, for $P(s_t=\cdot|\mf_{t-\tau})$ we have
\begin{align}\label{incre16}
P(s_t=\cdot|\mf_{t-\tau})\nonumber &=\int_{s} P(s_{t-1}=s|\mf_{t-\tau}) P(s_t=\cdot|s_{t-1}=s)ds\nonumber\\
&=\int_{s} P(s_{t-1}=s|\mf_{t-\tau})\int_{a} P(s_t=\cdot|s_{t-1}=s,a_{t-1}=a)\pi_{\theta_{t-1}}(a_{t-1}=a|s_{t-1}=s) dads.
\end{align}

Then we obtain the following bound
\begin{align*}
&\lone{P(\tilde{s}_t=\cdot|\mf_{t-\tau})- P(s_t=\cdot|\mf_{t-\tau})} \\
&\quad=\Bigg| \int_{s} P(\tilde{s}_{t-1}=s|\mf_{t-\tau})\int_{a} P(\tilde{s}_t=\cdot|\tilde{s}_{t-1}=s,\tilde{a}_{t-1}=a)\pi_{\theta_{t-\tau}}(\tilde{a}_{t-1}=a|\tilde{s}_{t-1}=s) dads \\
&\quad\qquad - \int_{s} P(s_{t-1}=s|\mf_{t-\tau})\int_{a} P(s_t=\cdot|s_{t-1}=s,a_{t-1}=a)\pi_{\theta_{t-1}}(a_{t-1}=a|s_{t-1}=s) dads \Bigg|  \\
&\quad\leq  \int_{s} P(\tilde{s}_{t-1}=s|\mf_{t-\tau})  \Bigg|\int_{a} P(\tilde{s}_t=\cdot|\tilde{s}_{t-1}=s,\tilde{a}_{t-1}=a)\pi_{\theta_{t-\tau}}(\tilde{a}_{t-1}=a|\tilde{s}_{t-1}=s)  \\
&\quad\quad\ - \int_{a} P(s_t=\cdot|s_{t-1}=s,a_{t-1}=a)\pi_{\theta_{t-1}}(a_{t-1}=a|s_{t-1}=s) \Bigg| da ds \\
&\quad\quad\ + \int_{s} \Bigg|P(\tilde{s}_{t-1}=s|\mf_{t-\tau})  - P(s_{t-1}=s|\mf_{t-\tau}) \Bigg| \cdot\\
&\quad\quad\qquad\int_{a} P(s_t=\cdot|s_{t-1}=s,a_{t-1}=a)\pi_{\theta_{t-1}}(a_{t-1}=a|s_{t-1}=s) dads \\
&\quad\leq  \int_{s} P(\tilde{s}_{t-1}=s|\mf_{t-\tau}) \cdot\\
&\quad\quad\qquad \int_{a} P(s_t=\cdot|s_{t-1}=s,a_{t-1}=a) \lone{\pi_{\theta_{t-\tau}}(\tilde{a}_{t-1}=a|\tilde{s}_{t-1}=s) - \pi_{\theta_{t-1}}(a_{t-1}=a|s_{t-1}=s)}  dads  \\
&\quad\quad + \int_{s} \lone{P(\tilde{s}_{t-1}=s|\mf_{t-\tau})  - P(s_{t-1}=s|\mf_{t-\tau})} ds \\
&\quad\leq  \int_{s} P(\tilde{s}_{t-1}=s|\mf_{t-\tau})  \int_{a} \lone{\pi_{\theta_{t-\tau}}(\tilde{a}_{t-1}=a|\tilde{s}_{t-1}=s) - \pi_{\theta_{t-1}}(a_{t-1}=a|s_{t-1}=s)}  dads  \\
&\quad\quad + \int_{s} \lone{P(\tilde{s}_{t-1}=s|\mf_{t-\tau})  - P(s_{t-1}=s|\mf_{t-\tau})} ds \\
&\quad\leq L_\pi\norm{\theta_{t-\tau}-\theta_{t-1}} + \int_{s} \lone{P(\tilde{s}_{t-1}=s|\mf_{t-\tau})  - P(s_{t-1}=s|\mf_{t-\tau})} ds \\
&\quad\leq L_\pi\norm{\theta_{t-\tau}-\theta_{t-1}} +  2\lTV{P(\tilde{s}_{t-1}|\mf_{t-\tau})  - P(s_{t-1}|\mf_{t-\tau})}. 
\end{align*}

Then we have a dynamical form as
\begin{align}
\lTV{P(\tilde{s}_t|\mf_{t-\tau})- P(s_t|\mf_{t-\tau})}
&=\frac{1}{2}\int_{s}\lone{P(\tilde{s}_t=s|\mf_{t-\tau})- P(s_t=s|\mf_{t-\tau})}ds \nonumber\\
&\leq \frac{1}{2}L_\pi\norm{\theta_{t-\tau}-\theta_{t-1}} +  \lTV{P(\tilde{s}_{t-1}|\mf_{t-\tau})  - P(s_{t-1}|\mf_{t-\tau})}. \label{incre17}
\end{align}

Applying \eqref{incre17} recursively yields

\begin{align} \label{incre18}
\lTV{P(\tilde{s}_t|\mf_{t-\tau})- P(s_t|\mf_{t-\tau})} 
&\leq \frac{1}{2}L_\pi\sum_{i=t-\tau}^{t-1}\norm{\theta_{t-\tau}-\theta_i}\nonumber \\
&\leq \frac{L_\pi G_\infty}{2G_0} \sum_{i=t-\tau}^{t-1}\sum_{k=t-\tau}^{i}\alpha_k.
\end{align}

Substituting \eqref{incre18} into \eqref{incre14} and according to the fact $\tau\geq 1$, we have
\begin{equation}\label{incre19} 
\lTV{P(\tilde{s}_t,\tilde{a}_t|\mf_{t-\tau})-P(s_t,a_t|\mf_{t-\tau})}\leq \frac{L_\pi G_{\infty}}{2G_0} \left(\sum_{k=t-\tau}^{t-1}\alpha_k + \sum_{i=t-\tau}^{t-1}\sum_{k=t-\tau}^{i}\alpha_k\right).
\end{equation}

Combining \eqref{incre13} and \eqref{incre19}, we obtain
\begin{align*}
\norm{\nabla J(\theta_{t-\tau})-\mathbb{E}\left[\nabla \tilde{J} (\theta_{t-\tau};O_t)|\mathcal{F}_{t-\tau} \right]}
&\leq G_\infty\lTV{\mu_{\theta_{t-\tau}}(\cdot,\cdot)-P_t(\tilde{s}_t,\tilde{a}_t|\mf_{t-\tau})}\\
&\leq G_\infty\left[ \sigma\rho^\tau + \frac{L_\pi G_{\infty}}{2G_0} \left(\sum_{k=t-\tau}^{t-1}\alpha_k + \sum_{i=t-\tau}^{t-1}\sum_{k=t-\tau}^{i}\alpha_k\right) \right]. 
\end{align*}

\end{proof}

The next a few lemmas are closely related to the update rule of AMSGrad.

\begin{lemma}\cite[Lemma A.1]{zhou2018convergence}
\label{lem:mvbound}
Let $\{g_t, m_t, \hat{v}_t\}$ for $t=1,2,\dots$ be sequences generated by Algorithm \ref{alg:AMSGradPG}, and denote $ G_\infty = \frac{c_{\Theta}R_{\max}}{1-\gamma} $. Given $g_t\leq G_\infty$ as in Lemma \ref{lem:pgGradBound}, $\norm{m_t} \leq G_\infty, \norm{\hat{v}_t} \leq G_\infty^2$.
\end{lemma}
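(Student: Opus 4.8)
The plan is to prove both inequalities by a short induction on $t$; this is exactly the bookkeeping of \cite[Lemma A.1]{zhou2018convergence} instantiated with the gradient bound of Lemma~\ref{lem:pgGradBound}. The only property of PG-AMSGrad that is needed is $\norm{g_t}\leq G_\infty$ for all $t$, where $G_\infty=\frac{c_\Theta R_{\max}}{1-\gamma}$; in particular this also gives the coordinatewise bound $g_{t,i}^2\leq\linf{g_t}^2\leq\norm{g_t}^2\leq G_\infty^2$ for every $i\in[d]$.

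First I would bound the first moment. Since $\beta_1\in(0,1)$, the update \eqref{eq:AMSGrad1} expresses $m_t=(1-\beta_1)m_{t-1}+\beta_1 g_t$ as a convex combination of $m_{t-1}$ and $g_t$, so the triangle inequality gives $\norm{m_t}\leq(1-\beta_1)\norm{m_{t-1}}+\beta_1\norm{g_t}$. Taking $m_0=0$ as the base case and assuming inductively $\norm{m_{t-1}}\leq G_\infty$, the right-hand side is at most $(1-\beta_1)G_\infty+\beta_1 G_\infty=G_\infty$, which closes the induction and yields $\norm{m_t}\leq G_\infty$ for all $t$.

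Next I would bound the second moment coordinatewise, i.e.\ show $\hat v_{t,i}\leq G_\infty^2$ for every $i\in[d]$ and every $t$ (which is what is actually used through $\hat V_t^{-1/2}$ later on, and what the statement $\norm{\hat v_t}\leq G_\infty^2$ should be read as). With $\hat v_0=0$ as the base case, suppose $\hat v_{t-1,i}\leq G_\infty^2$ for all $i$. Then \eqref{eq:AMSGrad2} writes $v_{t,i}=(1-\beta_2)\hat v_{t-1,i}+\beta_2 g_{t,i}^2$ as a convex combination, so $v_{t,i}\leq(1-\beta_2)G_\infty^2+\beta_2 G_\infty^2=G_\infty^2$ by the inductive hypothesis and the coordinatewise gradient bound; and then \eqref{eq:AMSGradhatV} gives $\hat v_{t,i}=\max(\hat v_{t-1,i},v_{t,i})\leq G_\infty^2$, closing the induction. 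I do not expect a genuine obstacle here: each inductive step is a one-line convexity argument. The only point that requires care — and the closest thing to a subtlety — is precisely this coordinatewise reading of the $\hat v_t$ bound: since $\hat v_t$ is built from an entrywise maximum in \eqref{eq:AMSGradhatV}, one cannot naively propagate an $\ell_2$ bound through the $\max$, whereas for $m_t$ the update is an honest convex combination and the $\ell_2$ bound holds verbatim.
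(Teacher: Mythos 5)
Your proof is correct and is essentially the argument behind the cited Lemma A.1 of \cite{zhou2018convergence}, which the paper invokes without reproducing: a one-line convexity bound plus induction for $m_t$, and the coordinatewise version of the same for $\hat v_t$. Your remark that the $\hat v_t$ bound must be read entrywise — because the elementwise $\max$ in \eqref{eq:AMSGradhatV} does not preserve an $\ell_2$ bound — is the right reading of the statement and matches how the bound is actually used later through $\hat V_t^{-1/2}$.
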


\begin{lemma}\label{lem:conThetaBound}
Let $\{m_t, \hat{v}_t\}$ for $t=1,2,\dots$ be sequences generated by Algorithm \ref{alg:AMSGradPG} and dentoe $G_\infty = R_{\max} + (1+\gamma) D_{\infty}$. Then 
\begin{equation}
    \norm{\theta_{t+1} -\theta_t } = \norm{\alpha_t \hat V_t^{\frac{1}{2}} m_t } \leq \frac{\alpha_t G_\infty}{G_0}.
\end{equation} 
\end{lemma}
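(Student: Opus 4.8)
The plan is to expand the AMSGrad update and bound its two factors separately. By the update rule \eqref{eq:AMSGrad4}, $\theta_{t+1}-\theta_t = -\alpha_t \hat V_t^{-1/2} m_t$, so that $\norm{\theta_{t+1}-\theta_t} = \alpha_t \norm{\hat V_t^{-1/2} m_t}$, and since $\hat V_t = \mathrm{diag}(\hat v_{t,1},\dots,\hat v_{t,d})$ has positive diagonal entries,
\[
  \norm{\hat V_t^{-1/2} m_t}^2 = \sum_{i=1}^d \frac{m_{t,i}^2}{\hat v_{t,i}}.
\]
It therefore suffices to produce a uniform lower bound on each $\hat v_{t,i}$ and to invoke the known upper bound $\norm{m_t}\le G_\infty$ from Lemma \ref{lem:mvbound} (with $G_\infty$ as in Lemma \ref{lem:pgGradBound}).

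First I would show $\hat v_{t,i}\ge G_0^2$ for all $t$ and all $i\in[d]$. From \eqref{eq:AMSGrad2}--\eqref{eq:AMSGradhatV} with $\hat v_0=0$ we get $\hat v_{1,i}=v_{1,i}=\beta_2 g_{1,i}^2$, and since \eqref{eq:AMSGradhatV} sets $\hat v_t=\max(\hat v_{t-1},v_t)$ componentwise, the sequence $\{\hat v_{t,i}\}_t$ is non-decreasing in $t$; hence $\hat v_{t,i}\ge \hat v_{1,i}\ge G_0^2$ using the initialization hypothesis $|g_{1,i}|\ge G_0$ (absorbing the $\beta_2$ constant into $G_0$). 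Plugging this in together with $\norm{m_t}\le G_\infty$ gives
\[
  \norm{\hat V_t^{-1/2} m_t}^2 = \sum_{i=1}^d \frac{m_{t,i}^2}{\hat v_{t,i}} \le \frac{1}{G_0^2}\sum_{i=1}^d m_{t,i}^2 = \frac{\norm{m_t}^2}{G_0^2} \le \frac{G_\infty^2}{G_0^2},
\]
and taking square roots yields $\norm{\theta_{t+1}-\theta_t}=\alpha_t\norm{\hat V_t^{-1/2} m_t}\le \alpha_t G_\infty/G_0$, as claimed.

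There is no substantial obstacle in this lemma; the only delicate point is the lower bound $\hat v_{t,i}\ge G_0^2$, which relies on exactly two ingredients --- the $\max$ in \eqref{eq:AMSGradhatV} that makes $\hat v_t$ non-decreasing (the feature distinguishing AMSGrad from Adam), and the nondegenerate-initialization assumption $|g_{1,i}|\ge G_0$ imposed in the theorems --- since without a positive lower bound on $\hat v_{t,i}$ the preconditioner $\hat V_t^{-1/2}$ could be unbounded. (I note that the $G_\infty$ written in the statement, $R_{\max}+(1+\gamma)D_\infty$, appears to be carried over from the TD section; for PG-AMSGrad it should read $c_\Theta R_{\max}/(1-\gamma)$, consistent with Lemma \ref{lem:mvbound}.)
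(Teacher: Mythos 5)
Your proof is correct and follows essentially the same route as the paper's: expand the update, lower-bound each $\hat v_{t,i}$ by $G_0^2$ via the monotonicity from the $\max$ step and the initialization $|g_{1,i}|\ge G_0$, then apply $\norm{m_t}\le G_\infty$. Your side remarks are also on target: the $\beta_2$ factor in $\hat v_{1,i}=\beta_2 g_{1,i}^2$ is glossed over in the paper as well, and the expression $G_\infty=R_{\max}+(1+\gamma)D_\infty$ in the lemma statement is indeed a copy-over from the TD section and should read $G_\infty=c_\Theta R_{\max}/(1-\gamma)$ here.
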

\begin{proof}
The proof can proceed easily given Lemma \ref{lem:mvbound2} and $\hat{v}_{t,i} \geq \hat{v}_{0,i} \geq G_0^2, \forall t, \forall i$.
\begin{equation}
    \begin{aligned}
    \norm{\theta_{t+1} -\theta_t } = \norm{\alpha_t \hat V_t^{\frac{1}{2}} m_t }
    = \alpha_t \sqrt{ \sum_{i=1}^d\frac{m_{t,i}^2}{v_{t,i}} }
    \leq \frac{\alpha_t}{G_0}\norm{m_t}
    \leq \frac{\alpha_t G_\infty}{G_0}.
    \end{aligned}
\end{equation}
\end{proof}

\begin{lemma}\label{lem:pgAMSGradVm}
\cite[Lemma A.2]{zhou2018convergence}
Let $\alpha_t$ be the stepsize in Algorithm \ref{alg:AMSGradPG} and $\beta_1,\beta_2$ be the constant hyper-parameters with $\beta_1 < \beta_2$. Then we have 
\begin{equation}
    \sum_{t=1}^T\mathbb{E}\norm{\hat{V}_{t}^{-\frac{1}{2}}m_t}^2 \leq \frac{d(1-\beta_1)}{(1-\beta_2)(1-\beta_1/\beta_2)}.
\end{equation}
In addition, taking $\beta_1 = 0$ yields $m_t = g_t$. Thus we further have
\begin{equation}
    \sum_{t=1}^T\mathbb{E}\norm{\hat{V}_{t}^{-\frac{1}{2}}g_t}^2 \leq \frac{d}{1-\beta_2}.
\end{equation}
\end{lemma}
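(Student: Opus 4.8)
The plan is to bound the quantity coordinate by coordinate and then sum over the $d$ coordinates, using $\norm{\hat V_t^{-1/2}m_t}^2=\sum_{i=1}^d m_{t,i}^2/\hat v_{t,i}$. Fix a coordinate $i$. First I would unroll the first-moment recursion \eqref{eq:AMSGrad1} from $m_0=0$ to write $m_{t,i}$ as an exponentially weighted average of the past gradient components, so that the coefficients multiplying $g_{1,i},\dots,g_{t,i}$ are nonnegative and sum to at most $1$ and decay geometrically in $t-j$ at a rate governed by $\beta_1$. Jensen's inequality (equivalently Cauchy-Schwarz against these weights) then gives $m_{t,i}^2\le\sum_{j=1}^t w_{t,j}\,g_{j,i}^2$ with $w_{t,j}$ geometric in $t-j$; this is the step that converts the squared momentum into a convex combination of squared past gradients that can be matched term by term against the second-moment estimate.

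Second, I would produce a matching lower bound on the denominator. From \eqref{eq:AMSGrad2}--\eqref{eq:AMSGradhatV} we have $\hat v_{t,i}\ge v_{t,i}$, and unrolling the $v$-recursion shows $\hat v_{t,i}$ dominates an exponentially weighted average of the squared gradients; in particular, for each $j\le t$, $\hat v_{t,i}$ is at least a constant multiple of $g_{j,i}^2$ times a factor decaying geometrically in $t-j$ (at a rate governed by $\beta_2$), and moreover $\hat v_{t,i}$ is nondecreasing in $t$ because of the running maximum. Dividing the bound from the first step by $\hat v_{t,i}$ and applying this lower bound term by term, each $g_{j,i}^2$ cancels and what is left is a geometric series in the ratio $\beta_1/\beta_2$; the hypothesis $\beta_1<\beta_2$ is used precisely here, so that this ratio is strictly less than $1$ and the series converges, producing the factor $1/(1-\beta_1/\beta_2)$. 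Carrying the remaining summation over $t=1,\dots,T$ through and using that the leftover weights still decay geometrically in $t$ (so the $t$-summation also collapses to a geometric factor and contributes $1/(1-\beta_1)$ and $1/(1-\beta_2)$ rather than growing with $T$) yields the coordinate-wise bound $\frac{1-\beta_1}{(1-\beta_2)(1-\beta_1/\beta_2)}$; summing over $i\in[d]$ gives the first inequality. The second inequality is the degenerate case with no first-moment averaging, where $m_t=g_t$: the Jensen step is vacuous, one of the two geometric summations disappears, and the identical argument yields the constant $d/(1-\beta_2)$.

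The main obstacle I anticipate is the bookkeeping in the "divide-and-sum" step: one has to line up the decay exponents coming from the $m_t$-expansion against those coming from the $\hat v_t$-expansion so that (i) the cancellation of $g_{j,i}^2$ is valid for every index simultaneously---which forces the use of the \emph{per-index} lower bound on $\hat v_{t,i}$ together with its monotonicity, not merely a bound of the form $\hat v_{t,i}\gtrsim g_{t,i}^2$---and (ii) after interchanging the order of the double summation over $t$ and $j$, both resulting sums are convergent geometric series whose product is exactly the stated constant, with the $\beta_1<\beta_2$ hypothesis being what prevents one of them from diverging. A minor additional point is to check $\hat v_{t,i}>0$ throughout, so the divisions are well defined; this follows from $\hat v_{1,i}\ge\beta_2 g_{1,i}^2>0$ together with $\hat v_{t,i}\ge\hat v_{1,i}$.
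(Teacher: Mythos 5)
The paper gives no self-contained proof of this lemma; it simply quotes \cite[Lemma A.2]{zhou2018convergence} (with $p=\tfrac12$, $q=1$), so the only thing to check is whether your argument actually delivers the stated bound. Up to the last step it is the standard argument and is sound: expanding $m_{t,i}$ as a geometric average of $g_{1,i},\dots,g_{t,i}$, applying Jensen, lower-bounding $\hat v_{t,i}$ by the matching geometric average of the $g_{j,i}^2$, cancelling term by term, and summing the geometric series in the ratio of the two decay rates gives, for each \emph{fixed} $t$ and $i$, the bound $m_{t,i}^2/\hat v_{t,i}\le \frac{1-\beta_1}{(1-\beta_2)(1-\beta_1/\beta_2)}$. (A side caveat: with the recursions exactly as written in \eqref{eq:AMSGrad1}--\eqref{eq:AMSGrad2} the decay rates are $1-\beta_1$ and $1-\beta_2$, so the ratio that actually appears is $(1-\beta_1)/(1-\beta_2)$ and the hypothesis $\beta_1<\beta_2$ points the wrong way; the stated constant corresponds to the usual convention $m_t=\beta_1 m_{t-1}+(1-\beta_1)g_t$ used by the cited source. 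Your sketch silently adopts that convention.)

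The genuine gap is your final claim that the summation over $t$ ``also collapses to a geometric factor \dots rather than growing with $T$.'' It does not. After the cancellation of $g_{j,i}^2$, the double sum is $\sum_{t=1}^T\sum_{j=1}^t(\beta_1/\beta_2)^{t-j}$: summing over $j$ first gives a constant at most $1/(1-\beta_1/\beta_2)$ \emph{per value of $t$}, and interchanging to sum over $t$ first gives $\sum_{j=1}^T\sum_{t=j}^T(\beta_1/\beta_2)^{t-j}\le T/(1-\beta_1/\beta_2)$; either way the total is $\Theta(T)$, and there are no leftover weights decaying in $t$ to save you. What your argument proves is $\sum_{t=1}^T\mathbb{E}\|\hat V_t^{-1/2}m_t\|^2\le \frac{dT(1-\beta_1)}{(1-\beta_2)(1-\beta_1/\beta_2)}$, with an extra factor of $T$. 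No repair is possible, because the $T$-free inequality is false as transcribed: take $d=1$ and $g_t\equiv 1$ (which satisfies all the boundedness and initialization hypotheses); then $\hat v_t\le 1$ for all $t$ and $m_t\to 1$, so $\sum_{t\le T}m_t^2/\hat v_t$ (and likewise $\sum_{t\le T}g_t^2/\hat v_t$) grows linearly in $T$ while the right-hand side is a constant. The lemma in \cite{zhou2018convergence} carries stepsize weights $\alpha_t^2$ (equivalently, a data-dependent quantity of the form $\sum_{i}\|g_{1:T,i}\|$) that are dropped in the version stated here; any correct proof must retain such a $T$-dependent or stepsize-weighted quantity.
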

\begin{proof}
We refer the reader to the proof of Lemma A.2 in \cite{zhou2018convergence} for more details by reducing their proof to the case where $p=\frac{1}{2},q=1$.
\end{proof}

Since the update rule of AMSGrad is complicated, it is often useful to introduce an auxiliary sequences $z_t$. If we define $\theta_0 = \theta_1$, then for $t\geq 1$, let
\begin{equation}\label{eq:zt}
    z_t = \theta_t + \frac{\beta_1}{1-\beta_1}\theta_{t-1}.
\end{equation}
The following lemma captures the property of $z_t$ and its connection with $\theta_t$.
\begin{lemma}\label{lem:auxseqprop}
\cite[Lemma A.3-A.5]{zhou2018convergence}
Given $z_t$ defined in \eqref{eq:zt}, we have for $t\geq 2$ 
\begin{equation}
    z_{t+1} - z_t = \frac{\beta_1}{1-\beta_1}\left( \alpha_{t-1}\hat{V}_{t-1}^{-\frac{1}{2}} - \alpha_t\hat{V}_{t}^{-\frac{1}{2}} \right)m_{t-1} - \alpha_t\hat{V}_{t}^{-\frac{1}{2}}g_t,
\end{equation}
and for $t=1$,
\begin{equation}
    z_2-z_1 = - \alpha_1\hat{V}_{1}^{-\frac{1}{2}}g_1.
\end{equation}
Further, we can bound $\norm{z_{t+1} - z_t}$ as
\begin{equation}
    \norm{z_{t+1} - z_t} \leq \norm{\alpha_t\hat{V}_{t}^{-\frac{1}{2}}g_t} + \frac{\beta_1}{1-\beta_1}\norm{\theta_{t+1} - \theta_t},
\end{equation}
and we can also bound $\norm{\nabla J(z_t) - \nabla J(\theta_t)}$ as
\begin{equation}
    \norm{\nabla J(z_t) - \nabla J(\theta_t)} \leq \frac{c_J\beta_1}{1-\beta_1}\norm{\theta_{t+1} - \theta_t}.
\end{equation}
\end{lemma}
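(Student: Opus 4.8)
The plan is to verify each of the three assertions by direct substitution of the AMSGrad recursions \eqref{eq:AMSGrad1}--\eqref{eq:AMSGrad4} into the definition \eqref{eq:zt} of the auxiliary sequence $\{z_t\}$ and regrouping terms, essentially reproducing the argument of \cite[Lemmas A.3--A.5]{zhou2018convergence} but now with the AMSGrad-specific monotonicity of $\hat v_t$ made explicit. The guiding idea is that the shift by a multiple of $\theta_{t-1}$ in \eqref{eq:zt} is chosen precisely so that the momentum contribution to $z_{t+1}-z_t$ collapses, leaving the increment dependent on the past only through a ``preconditioner-drift'' term that is small whenever $\hat V_t$ and $\alpha_t$ vary slowly.

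\emph{Increment identity.} First I would expand $z_{t+1}-z_t$ from \eqref{eq:zt}, writing it as a fixed linear combination of $\theta_{t+1}-\theta_t$ and $\theta_t-\theta_{t-1}$. Substituting the parameter update $\theta_{t+1}-\theta_t=-\alpha_t\hat V_t^{-1/2}m_t$ (and the same identity one step back) and then eliminating $m_t$ via the momentum recursion \eqref{eq:AMSGrad1}, the terms carrying $m_{t-1}$ combine into the single drift term $\tfrac{\beta_1}{1-\beta_1}\big(\alpha_{t-1}\hat V_{t-1}^{-1/2}-\alpha_t\hat V_t^{-1/2}\big)m_{t-1}$, and what is left over is exactly $-\alpha_t\hat V_t^{-1/2}g_t$; this yields the stated formula for $t\ge2$. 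For $t=1$ the initialization $\theta_0=\theta_1$ (equivalently $m_0=0$) kills the drift term, giving $z_2-z_1=-\alpha_1\hat V_1^{-1/2}g_1$.

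\emph{Norm bound on the increment.} Here I would apply the triangle inequality to the identity just obtained, keeping $\norm{\alpha_t\hat V_t^{-1/2}g_t}$ as is. For the drift term, the key observation is that the AMSGrad max-step \eqref{eq:AMSGradhatV} forces each coordinate $\hat v_{t,i}$ to be non-decreasing in $t$ and the stepsizes $\{\alpha_t\}$ are non-increasing, so the diagonal matrix $\alpha_{t-1}\hat V_{t-1}^{-1/2}-\alpha_t\hat V_t^{-1/2}$ has non-negative entries and is dominated entrywise by $\alpha_t\hat V_t^{-1/2}$; hence $\norm{(\alpha_{t-1}\hat V_{t-1}^{-1/2}-\alpha_t\hat V_t^{-1/2})m_{t-1}}\le\norm{\alpha_t\hat V_t^{-1/2}m_t}=\norm{\theta_{t+1}-\theta_t}$, and multiplying by $\tfrac{\beta_1}{1-\beta_1}$ gives the claimed inequality. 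This sign-definiteness is exactly what fails for vanilla Adam, so controlling this drift term is the one step that genuinely exploits the AMSGrad modification, and I expect it to be the main point requiring care.

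\emph{Gradient gap.} Finally, by construction $z_t-\theta_t$ is the fixed scalar multiple $\tfrac{\beta_1}{1-\beta_1}$ of a one-step parameter increment, so $\norm{z_t-\theta_t}\le\tfrac{\beta_1}{1-\beta_1}\norm{\theta_{t+1}-\theta_t}$ (matching the indexing of Algorithm \ref{alg:AMSGradPG}); applying the $c_J$-Lipschitz continuity of $\nabla J$ from Lemma \ref{lem:pgLip} then gives $\norm{\nabla J(z_t)-\nabla J(\theta_t)}\le c_J\norm{z_t-\theta_t}=\tfrac{c_J\beta_1}{1-\beta_1}\norm{\theta_{t+1}-\theta_t}$, which is the asserted bound. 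Apart from the monotonicity argument above, every step is routine algebra with diagonal matrices, so the proof amounts to carefully bookkeeping these substitutions.
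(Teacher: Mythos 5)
Your treatment of the increment identity is the standard argument and is essentially right (note that it only goes through if \eqref{eq:zt} is read as $z_t=\theta_t+\tfrac{\beta_1}{1-\beta_1}(\theta_t-\theta_{t-1})$, with the momentum recursion in the form $m_t=\beta_1 m_{t-1}+(1-\beta_1)g_t$; only with that pairing do the $m_{t-1}$ terms collapse into the stated drift term, so that is clearly the intended reading). The genuine gap is in your proof of the norm bound. Monotonicity of $\hat v_{t,i}$ and of $\alpha_t$ gives the entrywise sandwich $0\le \alpha_{t-1}\hat v_{t-1,i}^{-1/2}-\alpha_t\hat v_{t,i}^{-1/2}\le \alpha_{t-1}\hat v_{t-1,i}^{-1/2}$; it does \emph{not} give domination by $\alpha_t\hat v_{t,i}^{-1/2}$, which would require $\hat v_{t,i}\le 4(\alpha_t/\alpha_{t-1})^2\,\hat v_{t-1,i}$ and fails whenever a large gradient coordinate makes $\hat v_{t,i}$ jump (e.g.\ $\hat v_{t-1,i}$ near $G_0^2$ and $g_{t,i}^2$ near $G_\infty^2$). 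Moreover, even granting that domination you would only reach $\norm{\alpha_t\hat V_t^{-1/2}m_{t-1}}$; your final step silently replaces $m_{t-1}$ by $m_t$ to land on $\norm{\theta_{t+1}-\theta_t}$, and there is no reason that $|m_{t-1,i}|\le |m_{t,i}|$. What the monotonicity argument actually yields is $\norm{(\alpha_{t-1}\hat V_{t-1}^{-1/2}-\alpha_t\hat V_t^{-1/2})m_{t-1}}\le\norm{\alpha_{t-1}\hat V_{t-1}^{-1/2}m_{t-1}}=\norm{\theta_t-\theta_{t-1}}$, i.e.\ a bound in terms of the \emph{backward} increment.

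The same indexing problem infects your last step: by construction $z_t-\theta_t=\tfrac{\beta_1}{1-\beta_1}(\theta_t-\theta_{t-1})$, so Lipschitz continuity of $\nabla J$ gives $\norm{\nabla J(z_t)-\nabla J(\theta_t)}\le\tfrac{c_J\beta_1}{1-\beta_1}\norm{\theta_t-\theta_{t-1}}$; the parenthetical ``matching the indexing of Algorithm \ref{alg:AMSGradPG}'' is not a justification for trading $\theta_t-\theta_{t-1}$ for $\theta_{t+1}-\theta_t$. The cited source (Zhou et al., Lemmas A.4--A.5) states both bounds with $\norm{\theta_t-\theta_{t-1}}$; the $\norm{\theta_{t+1}-\theta_t}$ in the statement here is an index slip, and your attempt to force the proof to match it is exactly what produces the two false steps. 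With the backward increment the lemma is routine, and its use in the proof of Theorem \ref{thm:AMSGradPGcons} survives unchanged up to a shift of the summation index.
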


\subsection{Proof of Theorem \ref{thm:AMSGradPGcons}}\label{subsec:proof of Theorem 1}

In the remaining, we can complete this proof by taking the following three steps.

\textbf{Step1: Establishing convergent sequence}

First, we observe that $\nabla J(\theta)$ is Lipschitz continuous according to Lemma \ref{lem:pgLip}. Then we have
\begin{align}\label{eq:pgthempf1}
    J(z_{t+1}) \leq& J(z_t) + \nabla J(z_t)^T(z_{t+1} - z_t) + \frac{c_J}{2}\norm{z_{t+1} - z_t}^2\nonumber\\
    =& J(z_t) + \nabla J(\theta_t)^T(z_{t+1} - z_t) + (\nabla J(z_t) - \nabla J(\theta_t))^T(z_{t+1} - z_t) + \frac{c_J}{2}\norm{z_{t+1} - z_t}^2\nonumber\\
    =& J(z_t) + \nabla J(\theta_t)^T\left[ \frac{\beta_1}{1-\beta_1}\left( \alpha_{t-1}\hat{V}_{t-1}^{-\frac{1}{2}} - \alpha_t\hat{V}_{t}^{-\frac{1}{2}} \right)m_{t-1} - \alpha_t\hat{V}_{t}^{-\frac{1}{2}}g_t \right]\nonumber\\
    &+ (\nabla J(z_t) - \nabla J(\theta_t))^T(z_{t+1} - z_t) + \frac{c_J}{2}\norm{z_{t+1} - z_t}^2\nonumber\\
    =& J(z_t) + \underbrace{\frac{\beta_1}{1-\beta_1}\nabla J(\theta_t)^T \left( \alpha_{t-1}\hat{V}_{t-1}^{-\frac{1}{2}} - \alpha_t\hat{V}_{t}^{-\frac{1}{2}} \right)m_{t-1}}_{T_1} \underbrace{ - \alpha_t\nabla J(\theta_t)^T\hat{V}_{t}^{-\frac{1}{2}}g_t}_{T_2} \nonumber\\
    &+ \underbrace{(\nabla J(z_t) - \nabla J(\theta_t))^T(z_{t+1} - z_t)}_{T_3} + \underbrace{\frac{c_J}{2}\norm{z_{t+1} - z_t}^2}_{T_4}.
\end{align}

Next we bound the tail terms. The term $T_1$ can be bounded as
\begin{align*}
    T_1 &\overset{\text{(i)}}{\leq} \frac{\beta_1}{1-\beta_1} Tr\left( \alpha_{t-1}\hat{V}_{t-1}^{-\frac{1}{2}} - \alpha_t\hat{V}_{t}^{-\frac{1}{2}} \right) \norm{\nabla J(\theta_t)}_\infty \norm{m_{t-1}}_\infty\\
    &= \frac{\beta_1}{1-\beta_1} G_{\infty}^2 Tr\left( \alpha_{t-1}\hat{V}_{t-1}^{-\frac{1}{2}} - \alpha_t\hat{V}_{t}^{-\frac{1}{2}} \right)\\
    &= \frac{\beta_1}{1-\beta_1} G_{\infty}^2 \left[Tr\left( \alpha_{t-1}\hat{V}_{t-1}^{-\frac{1}{2}}\right) - Tr\left(\alpha_t\hat{V}_{t}^{-\frac{1}{2}} \right)\right],
\end{align*}
where (i) follows from Lemma \ref{lem:vecMatPro}.

The term $T_2$ is the key to deal with under non-i.i.d.\ sampling, where a non-zero bias arises in the gradient estimation. We bound this term as
\begin{align*}
    T_2 =& - \alpha_{t-1}\nabla J(\theta_t)^T\hat{V}_{t-1}^{-\frac{1}{2}}g_t + \nabla J(\theta_t)^T\left( \alpha_{t-1}\hat{V}_{t-1}^{-\frac{1}{2}} - \alpha_t\hat{V}_{t}^{-\frac{1}{2}} \right)g_t\\
    \overset{\text{(i)}}{\leq}& - \alpha_{t-1}\nabla J(\theta_t)^T\hat{V}_{t-1}^{-\frac{1}{2}}g_t + Tr\left( \alpha_{t-1}\hat{V}_{t-1}^{-\frac{1}{2}} - \alpha_t\hat{V}_{t}^{-\frac{1}{2}} \right) \norm{\nabla J(\theta_t)}_\infty \norm{g_t}_\infty\\
    =&  - \alpha_{t-1}\nabla J(\theta_t)^T\hat{V}_{t-1}^{-\frac{1}{2}}g_t + G_{\infty}^2 \left[Tr\left( \alpha_{t-1}\hat{V}_{t-1}^{-\frac{1}{2}}\right) - Tr\left(\alpha_t\hat{V}_{t}^{-\frac{1}{2}} \right)\right]\\
    =& - \alpha_{t-1}\nabla J(\theta_t)^T\hat{V}_{t-1}^{-\frac{1}{2}}\nabla J(\theta_t) + \alpha_{t-1}\nabla J(\theta_t)^T\hat{V}_{t-1}^{-\frac{1}{2}}(\nabla J(\theta_t)-g_t)\\
    &+ G_{\infty}^2 \left[Tr\left( \alpha_{t-1}\hat{V}_{t-1}^{-\frac{1}{2}}\right) - Tr\left(\alpha_t\hat{V}_{t}^{-\frac{1}{2}} \right)\right]\\
    \overset{\text{(ii)}}{\leq}& - \frac{\alpha_{t-1}}{G_\infty}\norm{\nabla J(\theta_t)}^2 + \alpha_{t-1}\nabla J(\theta_t)^T\hat{V}_{t-1}^{-\frac{1}{2}}(\nabla J(\theta_t)-g_t)\\
    &+ G_{\infty}^2 \left[Tr\left( \alpha_{t-1}\hat{V}_{t-1}^{-\frac{1}{2}}\right) - Tr\left(\alpha_t\hat{V}_{t}^{-\frac{1}{2}} \right)\right],
\end{align*}
where (i) follows from Lemma \ref{lem:vecMatPro} and (ii) follows because $\hat{V}_t$ is positive diagonal matrix and each entry is bounded as in Lemma \ref{lem:mvbound}. 

Next we bound the term $T_3$ as
\begin{align*}
    T_3 &\overset{\text{(i)}}{\leq} \norm{\nabla J(z_t) - \nabla J(\theta_t)}\norm{z_{t+1} - z_t}\\
    &\overset{\text{(ii)}}{\leq} \left( \norm{\alpha_t\hat{V}_{t}^{-\frac{1}{2}}g_t} + \frac{\beta_1}{1-\beta_1}\norm{\theta_{t+1} - \theta_t} \right)\frac{c_J\beta_1}{1-\beta_1}\norm{\theta_{t+1} - \theta_t}\\
    &= \frac{c_J\beta_1}{1-\beta_1}\norm{\theta_{t+1} - \theta_t}\norm{\alpha_t\hat{V}_{t}^{-\frac{1}{2}}g_t} + c_J\left(\frac{\beta_1}{1-\beta_1}\right)^2\norm{\theta_{t+1} - \theta_t}^2\\
    &= \sqrt{c_J}\norm{\alpha_t\hat{V}_{t}^{-\frac{1}{2}}g_t}\cdot\frac{\sqrt{c_J}\beta_1}{1-\beta_1}\norm{\theta_{t+1} - \theta_t} + c_J\left(\frac{\beta_1}{1-\beta_1}\right)^2\norm{\theta_{t+1} - \theta_t}^2\\
    &\overset{\text{(iii)}}{\leq} c_J\norm{\alpha_t\hat{V}_{t}^{-\frac{1}{2}}g_t}^2 + 2c_J\left(\frac{\beta_1}{1-\beta_1}\right)^2\norm{\theta_{t+1} - \theta_t}^2\\
    &\overset{\text{(iv)}}{=} c_J\norm{\alpha_t\hat{V}_{t}^{-\frac{1}{2}}g_t}^2 + 2c_J\left(\frac{\beta_1}{1-\beta_1}\right)^2\norm{\alpha_t\hat{V}_{t}^{-\frac{1}{2}}m_t}^2,
\end{align*}
where (i) follows from Cauchy-Schwarz inequality, (ii) follows from Lemma \ref{lem:auxseqprop}, (iii) follows because $xy\leq x^2 + y^2$ and (iv) follows by the update rule of Algorithm \ref{alg:AMSGradPG}.

Last, we bound the term $T_4$ as
\begin{align*}
    T_4 &\overset{\text{(i)}}{\leq} \frac{c_J}{2} \left[ \norm{\alpha_t\hat{V}_{t}^{-\frac{1}{2}}g_t} + \frac{\beta_1}{1-\beta_1}\norm{\theta_{t+1} - \theta_t} \right]^2\\
    &\overset{\text{(ii)}}{\leq} c_J\norm{\alpha_t\hat{V}_{t}^{-\frac{1}{2}}g_t}^2 + c_J\left(\frac{\beta_1}{1-\beta_1}\right)^2\norm{\theta_{t+1} - \theta_t}^2\\
    &= c_J\norm{\alpha_t\hat{V}_{t}^{-\frac{1}{2}}g_t}^2 + c_J\left(\frac{\beta_1}{1-\beta_1}\right)^2\norm{\alpha_t\hat{V}_{t}^{-\frac{1}{2}}m_t}^2,
\end{align*}
where (i) follows from Lemma \ref{lem:auxseqprop}, and (ii) follows due to the fact $(x+y)^2 \leq 2x^2 + 2y^2$.

Substituting the upper bounds of the terms $T_1,T_2,T_3$ and $T_4$ in \eqref{eq:pgthempf1} yields
\begin{align*}
    J(z_{t+1}) \leq& J(z_t) - \frac{\alpha_{t-1}}{G_\infty}\norm{\nabla J(\theta_t)}^2 + \alpha_{t-1}\nabla J(\theta_t)^T\hat{V}_{t-1}^{-\frac{1}{2}}(\nabla J(\theta_t)-g_t)\\
    &+ \frac{1}{1-\beta_1} G_{\infty}^2 \left[Tr\left( \alpha_{t-1}\hat{V}_{t-1}^{-\frac{1}{2}}\right) - Tr\left(\alpha_t\hat{V}_{t}^{-\frac{1}{2}} \right)\right]\\
    &+ 2c_J\norm{\alpha_t\hat{V}_{t}^{-\frac{1}{2}}g_t}^2 + 3c_J\left(\frac{\beta_1}{1-\beta_1}\right)^2\norm{\alpha_t\hat{V}_{t}^{-\frac{1}{2}}m_t}^2.
\end{align*}

Next we rearrange the above inequality and take expectation over all the randomness to obtain
\begin{align}\label{eq:pgthempf2}
     \frac{\alpha_{t-1}}{G_\infty}\mathbb{E}\left[\norm{\nabla J(\theta_t)}^2\right]
     \leq& \left(\mathbb{E}[J(z_t)] - \mathbb{E}[J(z_{t+1})] \right) + \frac{G_{\infty}^2}{1-\beta_1}  \left(\mathbb{E}\left[Tr\left( \alpha_{t-1}\hat{V}_{t-1}^{-\frac{1}{2}}\right)\right] -\mathbb{E}\left[ Tr\left(\alpha_t\hat{V}_{t}^{-\frac{1}{2}} \right)\right] \right)\nonumber\\
    &+ 2c_J\mathbb{E}\norm{\alpha_t\hat{V}_{t}^{-\frac{1}{2}}g_t}^2 + 3c_J\left(\frac{\beta_1}{1-\beta_1}\right)^2\mathbb{E}\norm{\alpha_t\hat{V}_{t}^{-\frac{1}{2}}m_t}^2\nonumber\\
    &+ \alpha_{t-1}\mathbb{E}\left[\nabla J(\theta_t)^T\hat{V}_{t-1}^{-\frac{1}{2}}(\nabla J(\theta_t)-g_t) \right]\nonumber\\
    =& \left(\mathbb{E}[J(z_t)] - \mathbb{E}[J(z_{t+1})] \right) + \frac{G_{\infty}^2}{1-\beta_1}  \left(\mathbb{E}\left[Tr\left( \alpha_{t-1}\hat{V}_{t-1}^{-\frac{1}{2}}\right)\right] -\mathbb{E}\left[ Tr\left(\alpha_t\hat{V}_{t}^{-\frac{1}{2}} \right)\right] \right)\nonumber\\
    &+ 2c_J\mathbb{E}\norm{\alpha_t\hat{V}_{t}^{-\frac{1}{2}}g_t}^2 + 3c_J\left(\frac{\beta_1}{1-\beta_1}\right)^2\mathbb{E}\norm{\alpha_t\hat{V}_{t}^{-\frac{1}{2}}m_t}^2\nonumber\\
    &+ \alpha_{t-1}\mathbb{E}\left[ \mathbb{E}\left[\nabla J(\theta_t)^T\hat{V}_{t-1}^{-\frac{1}{2}}(\nabla J(\theta_t)-g_t)\rvert \mathcal{F}_{t}^q \right] \right]\nonumber\\
    =& \left(\mathbb{E}[J(z_t)] - \mathbb{E}[J(z_{t+1})] \right) + \frac{G_{\infty}^2}{1-\beta_1}  \left(\mathbb{E}\left[Tr\left( \alpha_{t-1}\hat{V}_{t-1}^{-\frac{1}{2}}\right)\right] -\mathbb{E}\left[ Tr\left(\alpha_t\hat{V}_{t}^{-\frac{1}{2}} \right)\right] \right)\nonumber\\
    &+ 2c_J\mathbb{E}\norm{\alpha_t\hat{V}_{t}^{-\frac{1}{2}}g_t}^2 + 3c_J\left(\frac{\beta_1}{1-\beta_1}\right)^2\mathbb{E}\norm{\alpha_t\hat{V}_{t}^{-\frac{1}{2}}m_t}^2\nonumber\\
    &+ \alpha_{t-1}\mathbb{E}\left[\nabla J(\theta_t)^T\hat{V}_{t-1}^{-\frac{1}{2}}(\nabla J(\theta_t)-\nabla \tilde{J}(\theta_t;O_t)) \right],
\end{align}
where in the last equation we denote $O_t=(s_t,a_t)$ for brevity, and this notation is used in the sequel as well.
We emphasize that the filtration $\mathcal{F}_{t}^q$ contains all the samples up to time $t$ except the samples for estimating $Q^{\pi_{\theta_t}}(s_t,a_t)$. Thus we have $\mathbb{E} [g_t\rvert \mathcal{F}_{t}^q] = \nabla \tilde{J}(\theta_t;O_t)$ where the expectation is taken over the randomness in EstQ algorithm.

\textbf{Step2: Bounding bias term}

In the following, we bound the bias term $\mathbb{E}\left[\nabla J(\theta_t)^T\hat{V}_{t-1}^{-\frac{1}{2}}(\nabla J(\theta_t)-\nabla \tilde{J}(\theta_t;O_t)) \right]$. Observe that
\begin{align*}
    \mathbb{E}\left[\right.&\left.\nabla J(\theta_t)^T\hat{V}_{t-1}^{-\frac{1}{2}}(\nabla J(\theta_t)-\nabla \tilde{J}(\theta_t;O_t)) \right] \\
    =& \mathbb{E}\left[ \sum_{i=1}^d \frac{1}{\sqrt{\hat{v}_{t-1,i}}} \nabla_i J(\theta_t)\cdot(\nabla_i J(\theta_t)-\nabla \tilde{J}_i(\theta_t;O_t)) \right]\\
    =& \mathbb{E}\left[ \sum_{i=1}^d \frac{1}{\sqrt{\hat{v}_{t-1,i}}} \left[ \nabla_i J(\theta_{t-\tau})\cdot(\nabla_i J(\theta_{t-\tau})-\nabla \tilde{J}_i(\theta_{t-\tau};O_t)) + \nabla_i J(\theta_{t-\tau})\cdot(\nabla \tilde{J}_i(\theta_{t-\tau};O_t)-\nabla \tilde{J}_i(\theta_{t};O_t))\right.\right.\\
    &+ \left.\left.  \nabla_i J(\theta_{t-\tau})\cdot(\nabla_i J(\theta_{t})-\nabla_i J(\theta_{t-\tau})) + (\nabla_i J(\theta_{t})-\nabla_i J(\theta_{t-\tau}))\cdot(\nabla_i J(\theta_t)-\nabla \tilde{J}_i(\theta_t;O_t)) \right] \right]\\
    =& \mathbb{E} \left[ \underbrace{\nabla J(\theta_{t-\tau})^T \hat{V}_{t-1}^{-\frac{1}{2}}(\nabla J(\theta_{t-\tau})-\nabla \tilde{J}(\theta_{t-\tau};O_t))}_{T5} + \underbrace{\nabla J(\theta_{t-\tau})^T \hat{V}_{t-1}^{-\frac{1}{2}}(\nabla \tilde{J}(\theta_{t-\tau};O_t)-\nabla \tilde{J}(\theta_{t};O_t))}_{T_6}  \right. \\
    & \left.+ \underbrace{\nabla J(\theta_{t-\tau})^T \hat{V}_{t-1}^{-\frac{1}{2}}(\nabla J(\theta_{t})-\nabla J(\theta_{t-\tau}))}_{T_7}  + \underbrace{(\nabla J(\theta_{t})-\nabla J(\theta_{t-\tau}))^T \hat{V}_{t-1}^{-\frac{1}{2}}(\nabla J(\theta_t)-\nabla \tilde{J}(\theta_t;O_t))}_{T_8} \right].
\end{align*}

It turns out that the terms $T_6,T_7,T_8$ are easier to bound and the term $T_5$ is the key to bound the bias. For the clarity of presentation, we first bound the terms $T_6,T_7,T_8$.

To bound the term $T_6$, we have
\begin{align*}
    T_6 =& \nabla J(\theta_{t-\tau})^T \hat{V}_{t-1}^{-\frac{1}{2}}(\nabla \tilde{J}(\theta_{t-\tau};O_t)-\nabla \tilde{J}(\theta_{t};O_t))\\
    =& \left(\nabla J(\theta_{t-\tau})^T \hat{V}_{t-1}^{-\frac{1}{4}} \right) \cdot \left(\hat{V}_{t-1}^{-\frac{1}{4}}(\nabla \tilde{J}(\theta_{t-\tau};O_t)-\nabla \tilde{J}(\theta_{t};O_t)) \right)\\
    \overset{\text{(i)}}{\leq}& \norm{\nabla J(\theta_{t-\tau})^T \hat{V}_{t-1}^{-\frac{1}{4}}  } \cdot \norm{\hat{V}_{t-1}^{-\frac{1}{4}}(\nabla \tilde{J}(\theta_{t-\tau};O_t)-\nabla \tilde{J}(\theta_{t};O_t))  }\\
    \overset{\text{(ii)}}{\leq}& \frac{1}{G_0} \norm{\nabla J(\theta_{t-\tau}) } \cdot \norm{(\nabla \tilde{J}(\theta_{t-\tau};O_t)-\nabla \tilde{J}(\theta_{t};O_t))  }\\
    \overset{\text{(iii)}}{\leq}& \frac{G_\infty}{G_0}\cdot c_{\tilde{J}}\norm{\theta_t-\theta_{t-\tau}}\\
    \overset{\text{(iv)}}{\leq}& \frac{G_\infty}{G_0}\cdot c_{\tilde{J}} \sum_{k=t-\tau}^{t-1}\norm{\theta_{k+1}-\theta_{k}}\\
    \overset{\text{(v)}}{\leq}& \frac{G_\infty}{G_0}\cdot \frac{c_{\tilde{J}} G_\infty}{G_0} \sum_{k=t-\tau}^{t-1}\alpha_k,
\end{align*}
where (i) follows from Cauchy-Schwarz inequality, (ii) follows because $\hat{v}_{t,i}\geq G_0^2$, (iii) follows from Lemma \ref{lem:pgLip} and Lemma \ref{lem:mvbound}, (iv) follows by the triangle inequality, and (v) follows due to Lemma \ref{lem:conThetaBound}.

Similarly, we can bound the term $T_7$ as follows:
\begin{align*}
    T_7 =& \nabla J(\theta_{t-\tau})^T \hat{V}_{t-1}^{-\frac{1}{2}}(\nabla J(\theta_{t-\tau})-\nabla J(\theta_{t}))\\
    \leq& \norm{\nabla J(\theta_{t-\tau})^T \hat{V}_{t-1}^{-\frac{1}{4}}  } \cdot \norm{\hat{V}_{t-1}^{-\frac{1}{4}}(\nabla J(\theta_{t-\tau})-\nabla J(\theta_{t}))  }\\
    \leq& \frac{1}{G_0} \norm{\nabla J(\theta_{t-\tau}) } \cdot \norm{(\nabla J(\theta_{t-\tau})-\nabla J(\theta_{t}))  }\\
    \leq& \frac{G_\infty}{G_0}\cdot c_{J}\norm{\theta_t-\theta_{t-\tau}}\\
    \leq& \frac{G_\infty}{G_0}\cdot c_{J} \sum_{k=t-\tau}^{t-1}\norm{\theta_{k+1}-\theta_{k}}\\
    \leq& \frac{G_\infty}{G_0}\cdot \frac{c_{J} G_\infty}{G_0} \sum_{k=t-\tau}^{t-1}\alpha_k.
\end{align*}

Next, we bound the term $T_8$ and obtain
\begin{align*}
    T_8 =& (\nabla J(\theta_{t})-\nabla J(\theta_{t-\tau}))^T \hat{V}_{t-1}^{-\frac{1}{2}}(\nabla J(\theta_t)-\nabla \tilde{J}(\theta_t;O_t))\\
    \leq& \norm{ (\nabla J(\theta_{t})-\nabla J(\theta_{t-\tau}))^T \hat{V}_{t-1}^{-\frac{1}{4}} }\cdot \norm{ \hat{V}_{t-1}^{-\frac{1}{4}}(\nabla J(\theta_t)-\nabla \tilde{J}(\theta_t;O_t)) }\\
    \leq& \frac{1}{G_0} \norm{\nabla J(\theta_{t-\tau})-\nabla J(\theta_{t})}\cdot \norm{\nabla J(\theta_{t})-\nabla \tilde{J}(\theta_{t};O_t)}\\
    \leq& \frac{1}{G_0} \left(\norm{\nabla J(\theta_{t})} + \norm{\nabla \tilde{J}(\theta_{t};O_t)}\right) \cdot  c_{J}\norm{\theta_t-\theta_{t-\tau}}\\
    \leq& \frac{2G_\infty}{G_0}\cdot \frac{c_{J} G_\infty}{G_0} \sum_{k=t-\tau}^{t-1}\alpha_k.
\end{align*}

Last, it remains to bound the term $\mathbb{E}[T_5]$. Observe that $\mathbb{E}[T_5] = \mathbb{E}[\mathbb{E}[T_5|\mathcal{F}_{t-\tau}]]$. We first deal with $\mathbb{E}[T_5|\mathcal{F}_{t-\tau}]$ as
\begin{align*}
    \mathbb{E}[T_5|\mathcal{F}_{t-\tau}] =& \mathbb{E}\left[ \nabla J(\theta_{t-\tau})^T \hat{V}_{t-1}^{-\frac{1}{2}}(\nabla J(\theta_{t-\tau})-\nabla \tilde{J}(\theta_{t-\tau};O_t))|\mathcal{F}_{t-\tau} \right]\\
    \overset{\text{(i)}}{\leq}& Tr\left( \mathbb{E}\left[\hat{V}_{t-1}^{-\frac{1}{2}}\rvert\mathcal{F}_{t-\tau} \right] \right) \norm{\nabla J(\theta_{t-\tau})}_\infty\cdot \norm{\nabla J(\theta_{t-\tau})-\mathbb{E}\left[\nabla \tilde{J} (\theta_{t-\tau};O_t)|\mathcal{F}_{t-\tau} \right]}_\infty\\
    \leq& \frac{d}{G_0} \norm{\nabla J(\theta_{t-\tau})}\cdot \norm{\nabla J(\theta_{t-\tau})-\mathbb{E}\left[\nabla \tilde{J} (\theta_{t-\tau};O_t)|\mathcal{F}_{t-\tau} \right]}\\
    \leq& \frac{d G_\infty}{G_0}\cdot \norm{\nabla J(\theta_{t-\tau})-\mathbb{E}\left[\nabla \tilde{J} (\theta_{t-\tau};O_t)|\mathcal{F}_{t-\tau} \right]},
\end{align*}
where (i) follows from Lemma \ref{lem:vecMatPro}.
The key to bound the bias is to bound the term $\norm{\nabla J(\theta_{t-\tau})-\mathbb{E}\left[\nabla \tilde{J} (\theta_{t-\tau};O_t)|\mathcal{F}_{t-\tau} \right]}$, which has been done in Lemma \ref{lem:pgBias}.

To conclude, the bias term ca be bounded for a fixed $\tau < t$ as:
\begin{align}\label{eq:pgthempf3}
    \mathbb{E}&\left[\nabla J(\theta_t)^T\hat{V}_{t-1}^{-\frac{1}{2}}(\nabla J(\theta_t)-\nabla \tilde{J}(\theta_t;O_t)) \right] \nonumber\\
    \leq&  \frac{G_\infty}{G_0}\left[ \frac{(3c_{J}+c_{\tilde{J}}) G_\infty}{G_0} \sum_{k=t-\tau}^{t-1}\alpha_k + d G_{\infty}\left[ \sigma\rho^\tau + \frac{L_\pi G_{\infty}}{2G_0} \left(\sum_{k=t-\tau}^{t-1}\alpha_k + \sum_{i=t-\tau}^{t-1}\sum_{k=t-\tau}^{i}\alpha_k\right) \right]\right].
\end{align}

\textbf{Step3: Establishing convergence to stationary point}

For the case with a constant stepsize $\alpha_t=\alpha$, we choose $\tau^{*} = \min\{\tau:\sigma\rho^{\tau} \leq \alpha \} $. To take the summation over the time steps, notice that the bound in \eqref{eq:pgthempf3} holds only when $t > \tau^*$, and hence we separate the summation of the bias term into two parts as follows:
\begin{align}\label{eq:pgthempf4}
     \sum_{t=1}^T&\frac{\alpha}{G_\infty}\mathbb{E}\left[\norm{\nabla J(\theta_t)}^2\right]\nonumber\\
     \leq& \sum_{t=1}^T\left(\mathbb{E}[J(z_t)] - \mathbb{E}[J(z_{t+1})] \right) + \frac{G_{\infty}^2}{1-\beta_1}  \sum_{t=1}^T\left(\mathbb{E}\left[Tr\left( \alpha\hat{V}_{t-1}^{-\frac{1}{2}}\right)\right] -\mathbb{E}\left[ Tr\left(\alpha\hat{V}_{t}^{-\frac{1}{2}} \right)\right] \right)\nonumber\\
    &+ 2c_J\sum_{t=1}^T\mathbb{E}\norm{\alpha\hat{V}_{t}^{-\frac{1}{2}}g_t}^2 + 3c_J\left(\frac{\beta_1}{1-\beta_1}\right)^2\sum_{t=1}^T\mathbb{E}\norm{\alpha\hat{V}_{t}^{-\frac{1}{2}}m_t}^2\nonumber\\
    &+ \sum_{t=1}^T\alpha\mathbb{E}\left[\nabla J(\theta_t)^T\hat{V}_{t-1}^{-\frac{1}{2}}(\nabla J(\theta_t)-\nabla \tilde{J}(\theta_t;O_t)) \right]\nonumber\\
    =& \sum_{t=1}^T\left(\mathbb{E}[J(z_t)] - \mathbb{E}[J(z_{t+1})] \right) + \frac{G_{\infty}^2}{1-\beta_1}  \sum_{t=1}^T\left(\mathbb{E}\left[Tr\left( \alpha\hat{V}_{t-1}^{-\frac{1}{2}}\right)\right] -\mathbb{E}\left[ Tr\left(\alpha\hat{V}_{t}^{-\frac{1}{2}} \right)\right] \right)\nonumber\\
    &+ 2c_J\sum_{t=1}^T\mathbb{E}\norm{\alpha\hat{V}_{t}^{-\frac{1}{2}}g_t}^2 + 3c_J\left(\frac{\beta_1}{1-\beta_1}\right)^2\sum_{t=1}^T\mathbb{E}\norm{\alpha\hat{V}_{t}^{-\frac{1}{2}}m_t}^2\nonumber\\
    &+ \sum_{t=1}^{\tau^*}\alpha\mathbb{E}\left[\nabla J(\theta_t)^T\hat{V}_{t-1}^{-\frac{1}{2}}(\nabla J(\theta_t)-\nabla \tilde{J}(\theta_t;O_t)) \right] + \sum_{t=\tau^*+1}^{T}\alpha\mathbb{E}\left[\nabla J(\theta_t)^T\hat{V}_{t-1}^{-\frac{1}{2}}(\nabla J(\theta_t)-\nabla \tilde{J}(\theta_t;O_t)) \right],
\end{align}
Then applying \eqref{eq:pgthempf3} to \eqref{eq:pgthempf4} yields
\begin{align}\label{eq:pgthm1res}
     \sum_{t=1}^T\frac{\alpha}{G_\infty}\mathbb{E}\left[\norm{\nabla J(\theta_t)}^2\right]
     \leq& \sum_{t=1}^T\left(\mathbb{E}[J(z_t)] - \mathbb{E}[J(z_{t+1})] \right) + \frac{G_{\infty}^2}{1-\beta_1}  \sum_{t=1}^T\left(\mathbb{E}\left[Tr\left( \alpha\hat{V}_{t-1}^{-\frac{1}{2}}\right)\right] -\mathbb{E}\left[ Tr\left(\alpha\hat{V}_{t}^{-\frac{1}{2}} \right)\right] \right)\nonumber\\
    &+ 2c_J\sum_{t=1}^T\mathbb{E}\norm{\alpha\hat{V}_{t}^{-\frac{1}{2}}g_t}^2 + 3c_J\left(\frac{\beta_1}{1-\beta_1}\right)^2\sum_{t=1}^T\mathbb{E}\norm{\alpha\hat{V}_{t}^{-\frac{1}{2}}m_t}^2\nonumber\\
    &+ \sum_{t=1}^T\alpha\mathbb{E}\left[\nabla J(\theta_t)^T\hat{V}_{t-1}^{-\frac{1}{2}}(\nabla J(\theta_t)-\nabla \tilde{J}(\theta_t;O_t)) \right]\nonumber\\
    \overset{\text{(i)}}{\leq}& \mathbb{E}[J(z_1)] + \frac{G_{\infty}^2}{1-\beta_1} \mathbb{E}\left[Tr\left( \alpha_{1}\hat{V}_{1}^{-\frac{1}{2}}\right)\right] + \frac{2dc_J\alpha^2}{1-\beta_2} + \frac{3dc_J\beta_1^2\alpha^2}{(1-\beta_1)(1-\beta_2)(1-\beta_1/\beta_2)}\nonumber\\
    &+ \sum_{t=1}^{\tau^*}\alpha\mathbb{E}\left[\nabla J(\theta_t)^T\hat{V}_{t-1}^{-\frac{1}{2}}(\nabla J(\theta_t)-\nabla \tilde{J}(\theta_t;O_t)) \right]\nonumber\\
    &+ \frac{\alpha G_\infty}{G_0} \sum_{t=\tau^*+1}^T\left[ \frac{(3c_{J}+c_{\tilde{J}}) G_\infty\tau^*}{G_0} \alpha + d G_{\infty}\left( \sigma\rho^{\tau^*} + \frac{L_\pi G_{\infty} (2\tau^*+(\tau^*)^2)}{2G_0}\alpha  \right)\right]\nonumber\\
    \overset{\text{(ii)}}{\leq}& \mathbb{E}[J(z_1)] + \frac{\alpha d G_{\infty}^2}{G_0(1-\beta_1)}  + \frac{2dc_J\alpha^2}{1-\beta_2} + \frac{3dc_J\beta_1^2\alpha^2}{(1-\beta_1)(1-\beta_2)(1-\beta_1/\beta_2)}\nonumber\\
    & + \sum_{t=1}^{\tau^*}\alpha\mathbb{E}\left[\nabla J(\theta_t)^T\hat{V}_{t-1}^{-\frac{1}{2}}(\nabla J(\theta_t)-\nabla \tilde{J}(\theta_t;O_t)) \right]\nonumber\\
    &+ \frac{\alpha^2 G_\infty}{G_0} \left[ \frac{(3c_{J}+c_{\tilde{J}}) G_\infty\tau^*}{G_0}  + d G_{\infty}\left( 1 + \frac{L_\pi G_{\infty} (2\tau^*+(\tau^*)^2)}{2G_0}  \right)\right](T-\tau^*)\nonumber\\
    \overset{\text{(iii)}}{\leq}& \mathbb{E}[J(z_1)] +  \frac{\alpha d G_{\infty}^2}{G_0(1-\beta_1)}  +  \frac{dc_J\alpha^2(3\beta_1^2+2(1-\beta_1)(1-\beta_1/\beta_2))}{(1-\beta_1)(1-\beta_2)(1-\beta_1/\beta_2)}  +  \frac{2\alpha\tau^*}{G_0} G_{\infty}^2\nonumber\\
    &+ \frac{\alpha^2 G_\infty}{G_0} \left[ \frac{(3c_{J}+c_{\tilde{J}}) G_\infty\tau^*}{G_0}  + d G_{\infty}\left( 1 + \frac{L_\pi G_{\infty} (2\tau^*+(\tau^*)^2)}{2G_0}  \right)\right](T-\tau^*),
\end{align}
where (i) follows from Lemma \ref{lem:pgAMSGradVm}, (ii) follows from the definition of $\tau^*$ and (iii) follows since 
\begin{align*}
\nabla J(\theta_t)^T\hat{V}_{t-1}^{-\frac{1}{2}}(\nabla J(\theta_t)-\nabla \tilde{J}(\theta_t;O_t)) &\leq \norm{\nabla J(\theta_t)^T \hat{V}_{t-1}^{-\frac{1}{4}}  } \cdot \norm{\hat{V}_{t-1}^{-\frac{1}{4}}(\nabla J(\theta_{t})-\nabla \tilde{J}(\theta_{t}))  }\\
&\leq \frac{1}{G_0}\norm{ \nabla J(\theta_t) }\left( \norm{ \nabla J(\theta_t) } + \norm{ \nabla \tilde{J}(\theta_t;O_t) } \right)\\
&\leq \frac{2}{G_0} G_{\infty}^2.
\end{align*}

Finally, we complete our proof by letting both sides of \eqref{eq:pgthm1res} be divided by $T$, which yields
\begin{align*}
    \underset{t\in [T]}{\min}\mathbb{E}\left[\norm{\nabla J(\theta_t)}^2\right] \leq \frac{1}{T}\sum_{t=1}^T\mathbb{E}\left[\norm{\nabla J(\theta_t)}^2\right]\leq \frac{C_1}{T} + \frac{\alpha C_2(T-\tau^*)}{T}\leq  \frac{C_1}{T} + \alpha C_2,
\end{align*}
where 
\begin{align*}
    &C_1 = \frac{G_{\infty}\mathbb{E}[J(z_1)]}{\alpha} +  \frac{d G_{\infty}^3}{G_0(1-\beta_1)}  +  \frac{dc_J\alpha G_\infty(3\beta_1^2+2(1-\beta_1)(1-\beta_1/\beta_2))}{(1-\beta_1)(1-\beta_2)(1-\beta_1/\beta_2)}  +  \frac{2G_\infty\tau^*}{G_0} G_{\infty}^2,\\
    &C_2 = \frac{G_\infty^2}{G_0} \left[ \frac{(3c_{J}+c_{\tilde{J}}) G_\infty\tau^*}{G_0}  + d G_{\infty}\left( 1 + \frac{L_\pi G_{\infty} (2\tau^*+(\tau^*)^2)}{2G_0}  \right)\right].
\end{align*}

\section{Proof of Theorem \ref{thm:AMSGradPGdimi} }

The proof of Theorem \ref{thm:AMSGradPGdimi} starts from steps similar to those of Theorem \ref{thm:AMSGradPGcons}. The difference starts from \eqref{eq:pgthempf4}. Here we consider $\alpha_t$ is not constant. Then we divide both sides of \eqref{eq:pgthempf4} by $\alpha_{t-1}$ and obtain
\begin{align*}
     \frac{1}{G_\infty}&\mathbb{E}\left[\norm{\nabla J(\theta_t)}^2\right]\\
     &\leq \frac{\left(\mathbb{E}[J(z_t)] - \mathbb{E}[J(z_{t+1})] \right)}{\alpha_{t-1}} + \frac{G_{\infty}^2}{\alpha_{t-1}(1-\beta_1)}  \left(\mathbb{E}\left[Tr\left( \alpha_{t-1}\hat{V}_{t-1}^{-\frac{1}{2}}\right)\right] -\mathbb{E}\left[ Tr\left(\alpha_t\hat{V}_{t}^{-\frac{1}{2}} \right)\right] \right)\\
    &\quad + \frac{2c_J}{\alpha_{t-1}}\mathbb{E}\norm{\alpha_t\hat{V}_{t}^{-\frac{1}{2}}g_t}^2 + \frac{3c_J}{\alpha_{t-1}}\left(\frac{\beta_1}{1-\beta_1}\right)^2\mathbb{E}\norm{\alpha_t\hat{V}_{t}^{-\frac{1}{2}}m_t}^2\\
    &\quad + \mathbb{E}\left[\nabla J(\theta_t)^T\hat{V}_{t-1}^{-\frac{1}{2}}(\nabla J(\theta_t)-\nabla \tilde{J}(\theta_t;O_t)) \right]\\
    &:= \frac{f_t-f_{t+1}}{\alpha_{t-1}} + \frac{2c_J}{\alpha_{t-1}}\mathbb{E}\norm{\alpha_t\hat{V}_{t}^{-\frac{1}{2}}g_t}^2 + \frac{3c_J}{\alpha_{t-1}}\left(\frac{\beta_1}{1-\beta_1}\right)^2\mathbb{E}\norm{\alpha_t\hat{V}_{t}^{-\frac{1}{2}}m_t}^2\\
    &\quad + \mathbb{E}\left[\nabla J(\theta_t)^T\hat{V}_{t-1}^{-\frac{1}{2}}(\nabla J(\theta_t)-\nabla \tilde{J}(\theta_t;O_t)) \right],
\end{align*}
where 
$$ f_t = \mathbb{E}[J(z_t)] + \frac{G_{\infty}^2}{1-\beta_1}  \mathbb{E}\left[Tr\left( \alpha_{t-1}\hat{V}_{t-1}^{-\frac{1}{2}}\right)\right].
$$

We choose $\tau^{*} = \min\{\tau:\sigma\rho^{\tau} \leq \alpha_T \} $. Again we choose $\tau = t$ if $t\leq\tau^{*}$ and $\tau = \tau^{*}$ if $t>\tau^{*}$. If $t>\tau^*$, then choice of $\alpha_t$ yields
\begin{align*}
    &\mathbb{E}\left[\nabla J(\theta_t)^T\hat{V}_{t-1}^{-\frac{1}{2}}(\nabla J(\theta_t)-\nabla \tilde{J}(\theta_t;O_t)) \right]\\
    &\quad\leq \frac{G_\infty}{G_0}\left[ \frac{(3c_{J}+c_{\tilde{J}}) G_\infty}{G_0}\sum_{k=t-\tau^* }^{t-1}\alpha_k + d G_{\infty}\left[ \sigma\rho^{\tau^*} + \frac{L_\pi G_{\infty}}{2G_0} \left(\sum_{k=t-\tau^* }^{t-1}\alpha_k + \sum_{i=t-\tau^*}^{t-1}\sum_{k=t-\tau^*  }^{i}\alpha_k\right) \right]\right]\\
    &\quad\leq \frac{G_\infty}{G_0}\left[ \frac{(3c_{J}+c_{\tilde{J}}) G_\infty}{G_0} \frac{\alpha\tau^*}{\sqrt{t-\tau^{*}}} + d G_{\infty}\left( \frac{\alpha}{\sqrt{T}} + \frac{L_\pi G_{\infty} (1+\tau^*)}{2G_0} \frac{\alpha\tau^*}{\sqrt{t-\tau^{*}}} \right)\right],
\end{align*}
where the last inequality follows because
\begin{equation}\label{eq:alphaSum}
    \sum_{k=t-\tau^{*}}^{t}\alpha_k = \alpha\sum_{k=t-\tau^{*}}^{t}\frac{1}{\sqrt{k}} \leq \alpha\int_{x=t-\tau^{*}+1}^{t+1} \frac{1}{\sqrt{x-1}}dx\leq 2\alpha(\sqrt{t}-\sqrt{t-\tau^{*}})\leq \frac{\alpha\tau^*}{\sqrt{t-\tau^{*}}} .
\end{equation}

By taking the summation over time steps, we obtain
\begin{align}\label{eq:pgthempf6}
     \frac{1}{G_\infty}\sum_{t=1}^T&\mathbb{E}\left[\norm{\nabla J(\theta_t)}^2\right]\nonumber\\
     \overset{\text{(i)}}{\leq}& \sum_{t=1}^T\frac{f_t-f_{t+1}}{\alpha_{t-1}} + 2c_J\alpha\sum_{t=1}^T\mathbb{E}\norm{\hat{V}_{t}^{-\frac{1}{2}}g_t}^2 + 3c_J\alpha\sum_{t=1}^T\left(\frac{\beta_1}{1-\beta_1}\right)^2\mathbb{E}\norm{\hat{V}_{t}^{-\frac{1}{2}}m_t}^2\nonumber\\
     &+ \sum_{t=1}^{\tau^*}\mathbb{E}\left[\nabla J(\theta_t)^T\hat{V}_{t-1}^{-\frac{1}{2}}(\nabla J(\theta_t)-\nabla \tilde{J}(\theta_t;O_t)) \right]\nonumber\\
    & + \frac{\alpha G_\infty}{G_0} \sum_{t=\tau^*+1}^T \left[ \frac{(3c_{J}+c_{\tilde{J}}) G_\infty}{G_0} \frac{\tau^*}{\sqrt{t-\tau^*}} \!+\! d G_{\infty}\left( \frac{1}{\sqrt{T}} \!+\! \frac{L_\pi G_{\infty} (1+\tau^*)}{2G_0} \frac{\tau^*}{\sqrt{t-\tau^*}} \right)\right]\nonumber\\
    \overset{\text{(ii)}}{\leq}& \sum_{t=1}^T\frac{f_t-f_{t+1}}{\alpha_{t-1}} +\frac{2dc_J\alpha}{1-\beta_2} + \frac{3dc_J\beta_1^2\alpha}{(1-\beta_1)(1-\beta_2)(1-\beta_1/\beta_2)}\nonumber\\
    & + \sum_{t=1}^{\tau^*}\mathbb{E}\left[\nabla J(\theta_t)^T\hat{V}_{t-1}^{-\frac{1}{2}}(\nabla J(\theta_t)-\nabla \tilde{J}(\theta_t;O_t)) \right]\nonumber\\
    & + \frac{\alpha G_\infty}{G_0} \left[ \frac{2(3c_{J}+c_{\tilde{J}}) G_\infty\tau^*\sqrt{T-\tau^*}}{G_0} \!+\! d G_{\infty}\left( \sqrt{T} \!+\! \frac{L_\pi G_{\infty} (\tau^*+(\tau^*)^2)\sqrt{T-\tau^*}}{G_0}  \right)\right]\nonumber\\
    \leq& \sum_{t=1}^T\frac{f_t-f_{t+1}}{\alpha_{t-1}} +\frac{2dc_J\alpha}{1-\beta_2} + \frac{3dc_J\beta_1^2\alpha}{(1-\beta_1)(1-\beta_2)(1-\beta_1/\beta_2)} + \frac{2\tau^*}{G_0} G_{\infty}^2\nonumber\\
    & + \frac{\alpha G_\infty}{G_0} \left[ \frac{2(3c_{J}+c_{\tilde{J}}) G_\infty\tau^*\sqrt{T-\tau^*}}{G_0} \!+\! d G_{\infty}\left( \sqrt{T} \!+\! \frac{L_\pi G_{\infty} (\tau^*+(\tau^*)^2)\sqrt{T-\tau^*}}{G_0}  \right)\right]\nonumber\\
    \leq& \sum_{t=1}^T\frac{f_t-f_{t+1}}{\alpha_{t-1}} +\frac{2dc_J\alpha}{1-\beta_2} + \frac{3dc_J\beta_1^2\alpha}{(1-\beta_1)(1-\beta_2)(1-\beta_1/\beta_2)} + \frac{2\tau^*}{G_0} G_{\infty}^2\nonumber\\
    & + \frac{\alpha G_\infty}{G_0} \left[ \frac{2(3c_{J}+c_{\tilde{J}}) G_\infty\tau^*}{G_0} \!+\! d G_{\infty}\left( 1 \!+\! \frac{L_\pi G_{\infty} (\tau^*+(\tau^*)^2)}{G_0}  \right)\right]\cdot \sqrt{T},
\end{align}
where (i) follows since $\alpha_t$ is decreasing and from the definition of $\tau^*$, and (ii) follows from Lemma \ref{lem:pgAMSGradVm}.

For clarity we bound $\sum_{t=1}^T\frac{f_t-f_{t+1}}{\alpha_{t-1}}$ separately. The key observation is that $f_t$ is uniformly bounded as
\begin{equation}
    f_t = \mathbb{E}[J(z_t)] + \frac{G_{\infty}^2}{1-\beta_1}  \mathbb{E}\left[Tr\left( \alpha_{t-1}\hat{V}_{t-1}^{-\frac{1}{2}}\right)\right]
    \leq \frac{R_{\max}}{1-\gamma} + \frac{\alpha d G_{\infty}^2}{G_0(1-\beta_1)}. 
\end{equation}
Thus
\begin{align}\label{eq:pgthempf7}
    \sum_{t=1}^T\frac{f_t-f_{t+1}}{\alpha_{t-1}} &= \frac{f_1}{\alpha_0} + \sum_{t=2}^{T}f_t\left( \frac{1}{\alpha_{t-1}} - \frac{1}{\alpha_{t-2}} \right) -\frac{f_{T+1}}{\alpha_{T-1}}\nonumber\\
    &\leq \frac{f_1}{\alpha_0} + \left( \frac{R_{\max}}{1-\gamma} + \frac{\alpha dG_{\infty}^2}{G_0(1-\beta_1)} \right)\sum_{t=2}^{T}\left( \frac{1}{\alpha_{t-1}} - \frac{1}{\alpha_{t-2}} \right)\nonumber\\
    &\leq \frac{f_1}{\alpha_0} + \left( \frac{R_{\max}}{1-\gamma} + \frac{\alpha dG_{\infty}^2}{G_0(1-\beta_1)} \right)/\alpha_{T-1}\nonumber\\
    &= \frac{f_1}{\alpha} + \left( \frac{R_{\max}}{1-\gamma} + \frac{\alpha dG_{\infty}^2}{G_0(1-\beta_1)} \right)\frac{\sqrt{T-1}}{\alpha}.
\end{align}

Finally, we complete our proof by substituting \eqref{eq:pgthempf7} in \eqref{eq:pgthempf6} and letting both sides of \eqref{eq:pgthempf6} be divided by $T$, which yields
\begin{align*}
    \underset{t\in [T]}{\min}\mathbb{E}\left[\norm{\nabla J(\theta_t)}^2\right] \leq
    \frac{1}{T}\sum_{t=1}^T\mathbb{E}\left[\norm{\nabla J(\theta_t)}^2\right]\leq \frac{C_1}{T} + \frac{\alpha C_2(T-\tau^*)}{T}\leq  \frac{C_1}{T} + \frac{C_2}{\sqrt{T}},
\end{align*}
where
\begin{align*}
     C_1 &= \frac{f_1 G_\infty}{\alpha} + \frac{2dc_J\alpha G_\infty}{1-\beta_2} + \frac{3dc_J\beta_1^2\alpha G_\infty}{(1-\beta_1)(1-\beta_2)(1-\beta_1/\beta_2)} + \frac{2\tau^* G_\infty}{G_0} G_{\infty}^2\\
     C_2 &= \frac{R_{\max}G_\infty}{\alpha(1-\gamma)} + \frac{ dG_{\infty}^3}{ G_0(1-\beta_1)} + \frac{\alpha  G_\infty^3}{G_0} \left[ \frac{2(3c_{J}+c_{\tilde{J}}) \tau^*}{G_0} \!+\! d \left( 1 + \frac{L_\pi G_{\infty} (\tau^*+(\tau^*)^2)}{G_0}  \right)\right].
\end{align*}

\section{Proof of Proposition \ref{prop:PGSGD}}

In the following, we show how to adapt the proof techniques in analyzing PG-AMSGrad to PG-SGD. We first reduce Lemma \ref{lem:pgBias} to the vanilla PG case.
\begin{lemma}\label{lem:PGSGDBias}
Fix time $t$ and any $\tau < t$. Suppose Assumptions \ref{asp:policyLip} and \ref{asp:markov} hold for PG-SGD. Then we have
\begin{equation}
    \norm{\nabla J(\theta_{t-\tau})-\mathbb{E}\left[\nabla \tilde{J} (\theta_{t-\tau};s_t,a_t)|\mathcal{F}_{t-\tau} \right]}\leq G_\infty\left[ \sigma\rho^\tau + \frac{L_\pi G_\infty}{2} \left(\sum_{k=t-\tau}^{t-1}\alpha_k + \sum_{i=t-\tau}^{t-1}\sum_{k=t-\tau}^{i}\alpha_k\right) \right],
\end{equation}
where $G_\infty = \frac{c_{\Theta} R_{\max}}{1-\gamma}$.
\end{lemma}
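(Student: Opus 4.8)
The plan is to mirror the proof of Lemma~\ref{lem:pgBias} essentially line for line, the only substantive change being the per-step increment bound on the iterates. First I would note that, writing $O_t=(s_t,a_t)$ and invoking the boundedness of $Q^{\pi_\theta}$ from Lemma~\ref{lem:pgGradBound},
\begin{equation}\nonumber
\norm{\nabla J(\theta_{t-\tau})-\mathbb{E}\left[\nabla \tilde{J}(\theta_{t-\tau};O_t)\mid\mf_{t-\tau}\right]} \le G_\infty \lTV{\mu_{\theta_{t-\tau}}(\cdot,\cdot)-P(s_t,a_t\mid\mf_{t-\tau})},
\end{equation}
so it suffices to bound the total-variation term on the right. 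As in Lemma~\ref{lem:pgBias}, I would introduce the auxiliary Markov chain $O_{t-\tau}\to\tilde O_{t-\tau+1}\to\cdots\to\tilde O_{t}$ obtained by freezing the policy to $\pi_{\theta_{t-\tau}}$ from time $t-\tau$ onward; since that chain is uniformly ergodic, Assumption~\ref{asp:markov} gives $\lTV{\mu_{\theta_{t-\tau}}(\cdot,\cdot)-P(\tilde s_t,\tilde a_t\mid\mf_{t-\tau})}\le\sigma\rho^\tau$, and a triangle inequality reduces everything to bounding $\lTV{P(\tilde s_t,\tilde a_t\mid\mf_{t-\tau})-P(s_t,a_t\mid\mf_{t-\tau})}$, i.e.\ the drift between the two chains caused by the time-varying policy.

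The one place where PG-SGD genuinely differs from PG-AMSGrad is the bound on $\norm{\theta_{k+1}-\theta_k}$: in PG-AMSGrad this was $\alpha_k G_\infty/G_0$ via Lemma~\ref{lem:conThetaBound}, whereas for PG-SGD the update is simply $\theta_{k+1}=\theta_k-\alpha_k g_k$, so $\norm{\theta_{k+1}-\theta_k}=\alpha_k\norm{g_k}\le \alpha_k G_\infty$ directly by Lemma~\ref{lem:pgGradBound}. Substituting this estimate wherever the factor $G_\infty/G_0$ appeared in the original argument, the same two-layer unrolling goes through: first peel off one step of the policy-transition difference using the Lipschitz continuity of $\pi_\theta$ (Assumption~\ref{asp:policyLip}), producing a term $\tfrac12 L_\pi\norm{\theta_t-\theta_{t-\tau}}$ plus the one-step-earlier TV distance, then recurse down to time $t-\tau$. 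This yields
\begin{equation}\nonumber
\lTV{P(\tilde s_t,\tilde a_t\mid\mf_{t-\tau})-P(s_t,a_t\mid\mf_{t-\tau})}\le \frac{L_\pi G_\infty}{2}\left(\sum_{k=t-\tau}^{t-1}\alpha_k+\sum_{i=t-\tau}^{t-1}\sum_{k=t-\tau}^{i}\alpha_k\right),
\end{equation}
and combining this with the $\sigma\rho^\tau$ mixing bound and the prefactor $G_\infty$ gives the claimed inequality.

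The main obstacle is really just bookkeeping rather than a new idea: one must check that replacing $G_\infty/G_0$ by $G_\infty$ throughout is consistent, which it is because in the proof of Lemma~\ref{lem:pgBias} the constant $G_0$ enters only through Lemma~\ref{lem:conThetaBound} (the AMSGrad-specific quantities $\hat V_t$, $m_t$ never appear in the bias argument itself), and to confirm that the ergodicity and Lipschitz estimates for the auxiliary chain are untouched since they depend only on the MDP and the policy class, not on the optimizer used to update $\theta$.
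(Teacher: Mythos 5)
Your proposal matches the paper's own proof: the paper likewise reduces to the total-variation bound, reuses the auxiliary frozen-policy chain and the two-layer unrolling from Lemma~\ref{lem:pgBias}, and makes exactly the one substitution you identify, namely replacing the AMSGrad increment bound $\norm{\theta_{k+1}-\theta_k}\le \alpha_k G_\infty/G_0$ from Lemma~\ref{lem:conThetaBound} with the SGD bound $\norm{\theta_{k+1}-\theta_k}=\alpha_k\norm{g_k}\le \alpha_k G_\infty$. Your observation that $G_0$ enters the bias argument only through that increment lemma is also exactly the justification the paper relies on, so the proposal is correct and essentially identical to the paper's proof.
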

\begin{proof}
since the major part of the proof is similar to that of Lemma \ref{lem:pgBias}, we only emphasize the different steps. 

For notational brevity, we denote $O_t = (s_t,a_T)$. Then we still have
\begin{align}
    &\norm{\nabla J(\theta_{t-\tau})-\mathbb{E}\left[\nabla \tilde{J} (\theta_{t-\tau};O_t)|\mathcal{F}_{t-\tau} \right]} \leq G_\infty\lTV{\mu_{\theta_{t-\tau}}(\cdot,\cdot)-P(s_t,a_t|\mf_{t-\tau})}. \label{eq:PGSGDpf1}
\end{align}
Then we use the steps similar to those in building an auxiliary Markov chain \eqref{eq:newMarChain} which is generated by the policy $\pi_{\theta_{t-\tau}}$ from time $t-\tau$. Similarly to \eqref{eq:pfReduce1}, we have
\begin{align}
    &\lTV{\mu_{\theta_{t-\tau}}(\cdot,\cdot)-P(s_t,a_t|\mf_{t-\tau})}\nonumber\\
    &\quad\leq \lTV{\mu_{\theta_{t-\tau}}(\cdot,\cdot)-P(\tilde{s}_t,\tilde{a}_t|\mf_{t-\tau})} + \lTV{P(\tilde{s}_t,\tilde{a}_t|\mf_{t-\tau})-P(s_t,a_t|\mf_{t-\tau})}\nonumber\\
    &\quad\overset{\text{(i)}}{\leq} \sigma \rho^{\tau} + \frac{1}{2}L_\pi\sum_{i=t-\tau}^{t-1}\norm{\theta_{i+1}-\theta_i} + \frac{1}{2}L_\pi\sum_{i=t-\tau}^{t-1}\sum_{k=t-\tau}^{i}\norm{\theta_{k+1}-\theta_k}, \label{eq:PGSGDpf1_1}
\end{align}
where (i) follows from \eqref{incre13}, \eqref{incre14} and \eqref{incre18}. Observe that in PG-SGD, for any $t$ we have,
$$ \norm{\theta_{t+1}-\theta_t} = \alpha_t\norm{g_t}\leq G_\infty\alpha_t.
$$
Thus, we complete the proof by substituting the above observation to \eqref{eq:PGSGDpf1_1} and then \eqref{eq:PGSGDpf1}.

\end{proof}

\textbf{Proof of Proposition \ref{prop:PGSGD}:} Following from the Lipschitz condition of $\nabla J(\theta)$ in Lemma \ref{lem:pgLip}, we obtain
\begin{align*}
    J(\theta_{t+1}) \leq& J(\theta_t) + \nabla J(\theta_t)^T(\theta_{t+1} - \theta_t) + \frac{c_J}{2}\norm{\theta_{t+1} - \theta_t}^2\nonumber\\
    =& J(\theta_t) - \alpha_t\nabla J(\theta_t)^T g_t + \frac{c_J}{2}\norm{\theta_{t+1} - \theta_t}^2\nonumber\\
    =& J(\theta_t) - \alpha_t\nabla J(\theta_t)^T \nabla J(\theta_t) + \alpha_t\nabla J(\theta_t)^T(\nabla J(\theta_t) - g_t) + \frac{c_J}{2}\norm{\theta_{t+1} - \theta_t}^2\nonumber.
\end{align*}
Then we rearrange the above inequality and take expectation over all the randomness to have
\begin{align}
    \alpha_t \mathbb E\left[ \norm{\nabla J(\theta_t)}^2 \right] \leq& 
    \mathbb E[J(\theta_t)] - \mathbb E[J(\theta_{t+1})] + \alpha_t\mathbb E\left[ \nabla J(\theta_t)^T(\nabla J(\theta_t) - g_t)\right] + \frac{c_J}{2}\mathbb E\left[ \norm{\theta_{t+1} - \theta_t}^2\right]\nonumber\\
    \leq& \mathbb E[J(\theta_t)] - \mathbb E[J(\theta_{t+1})] + \alpha_t\mathbb E\left[ \nabla J(\theta_t)^T(\nabla J(\theta_t) - g_t)\right] + \frac{c_J G_\infty^2}{2}\alpha_t^2\nonumber\\
    =& \mathbb E[J(\theta_t)] - \mathbb E[J(\theta_{t+1})] + \alpha_t\mathbb E\left[ \mathbb{E}\left[\nabla J(\theta_t)^T(\nabla J(\theta_t)-g_t)\rvert \mathcal{F}_{t}^q \right] \right] + \frac{c_J G_\infty^2}{2}\alpha_t^2\nonumber\\
    =& \mathbb E[J(\theta_t)] - \mathbb E[J(\theta_{t+1})] + \alpha_t\mathbb E\left[ \nabla J(\theta_t)^T(\nabla J(\theta_t) - \nabla \tilde{J}(\theta_t;O_t))\right] + \frac{c_J G_\infty^2}{2}\alpha_t^2, \label{eq:PGSGDpf2}
\end{align}
where the filtration $\mathcal{F}_{t}^q$ contains all the samples up to time $t$ except the samples for estimating $Q^{\pi_{\theta_t}}(s_t,a_t)$ in the EstQ algorithm. Thus we have $\mathbb{E} [g_t\rvert \mathcal{F}_{t}^q] = \nabla \tilde{J}(\theta_t;O_t)$ where the expectation is taken over the randomness in EstQ algorithm.
Observe that if $\tau < t$, we have
\begin{align*}
    &\mathbb E\left[ \nabla J(\theta_t)^T(\nabla J(\theta_t) - \nabla \tilde{J}(\theta_t;O_t))\right]\\
    &\quad= \mathbb E\left[\underbrace{\nabla J(\theta_{t-\tau})^T (\nabla J(\theta_{t-\tau})-\nabla \tilde{J}(\theta_{t-\tau};O_t))}_{T1} + \underbrace{\nabla J(\theta_{t-\tau})^T (\nabla \tilde{J}(\theta_{t-\tau};O_t)-\nabla \tilde{J}(\theta_{t};O_t))}_{T_2}\right.  \\
    &\quad\quad \left. + \underbrace{\nabla J(\theta_{t-\tau})^T (\nabla J(\theta_{t})-\nabla J(\theta_{t-\tau}))}_{T_3}  + \underbrace{(\nabla J(\theta_{t})-\nabla J(\theta_{t-\tau}))^T (\nabla J(\theta_t)-\nabla \tilde{J}(\theta_t;O_t))}_{T_4}\right].
\end{align*}
Next we bound the terms $T_1,T_2,T_3$ and $T_4$.

Observe that $\mathbb{E}[T_1] = \mathbb{E}[\mathbb{E}[T_1|\mathcal{F}_{t-\tau}]]$. We deal with $\mathbb{E}[T_1|\mathcal{F}_{t-\tau}]$ and have
\begin{align*}
    \mathbb{E}[T_1|\mathcal{F}_{t-\tau}] =& \mathbb{E}\left[ \nabla J(\theta_{t-\tau})^T (\nabla J(\theta_{t-\tau})-\nabla \tilde{J}(\theta_{t-\tau};O_t))|\mathcal{F}_{t-\tau} \right]\\
    \leq& \norm{\nabla J(\theta_{t-\tau})}\cdot \norm{\nabla J(\theta_{t-\tau})-\mathbb{E}\left[\nabla \tilde{J} (\theta_{t-\tau};O_t)|\mathcal{F}_{t-\tau} \right]}\\
    \leq& G_\infty\cdot\norm{\nabla J(\theta_{t-\tau})-\mathbb{E}\left[\nabla \tilde{J} (\theta_{t-\tau};O_t)|\mathcal{F}_{t-\tau} \right]}\\
    \leq& G_\infty^2\left[ \sigma\rho^\tau + \frac{L_\pi G_\infty}{2} \left(\sum_{k=t-\tau}^{t-1}\alpha_k + \sum_{i=t-\tau}^{t-1}\sum_{k=t-\tau}^{i}\alpha_k\right) \right].
\end{align*}

Next we bound $T_2$ and obtain
\begin{align*}
    T_2 =& \nabla J(\theta_{t-\tau})^T(\nabla \tilde{J}(\theta_{t-\tau};O_t)-\nabla \tilde{J}(\theta_{t};O_t))\\
    \leq& \norm{\nabla J(\theta_{t-\tau}) } \cdot \norm{(\nabla \tilde{J}(\theta_{t-\tau};O_t)-\nabla \tilde{J}(\theta_{t};O_t))  }\\
    \leq& G_\infty\cdot c_{\tilde{J}}\norm{\theta_t-\theta_{t-\tau}}\\
    \leq& G_\infty\cdot c_{\tilde{J}} \sum_{k=t-\tau}^{t-1}\norm{\theta_{k+1}-\theta_{k}}\\
    \leq& c_{\tilde{J}} G_\infty^2 \sum_{k=t-\tau}^{t-1}\alpha_k.
\end{align*}

Similarly, we can bound term $T_3$ as
\begin{align*}
    T_3 =& \nabla J(\theta_{t-\tau})^T (\nabla J(\theta_{t-\tau})-\nabla J(\theta_{t}))\\
    \leq& \norm{\nabla J(\theta_{t-\tau}) } \cdot \norm{(\nabla J(\theta_{t-\tau})-\nabla J(\theta_{t}))  }\\
    \leq& G_\infty\cdot c_{J}\norm{\theta_t-\theta_{t-\tau}}\\
    \leq& G_\infty\cdot c_{J} \sum_{k=t-\tau}^{t-1}\norm{\theta_{k+1}-\theta_{k}}\\
    \leq& c_{J} G_\infty^2 \sum_{k=t-\tau}^{t-1}\alpha_k.
\end{align*}

Last, we bound term $T_4$ and obtain
\begin{align*}
    T_4 =& (\nabla J(\theta_{t})-\nabla J(\theta_{t-\tau}))^T (\nabla J(\theta_t)-\nabla \tilde{J}(\theta_t;O_t))\\
    \leq& \norm{\nabla J(\theta_{t-\tau})-\nabla J(\theta_{t})}\cdot \norm{\nabla J(\theta_{t})-\nabla \tilde{J}(\theta_{t};O_t)}\\
    \leq& \left(\norm{\nabla J(\theta_{t})} + \norm{\nabla \tilde{J}(\theta_{t};O_t)}\right) \cdot  c_{J}\norm{\theta_t-\theta_{t-\tau}}\\
    \leq& 2c_{J} G_\infty^2 \sum_{k=t-\tau}^{t-1}\alpha_k.
\end{align*}

Thus for any $\tau<t$, we have
\begin{align}
    &\mathbb E\left[ \nabla J(\theta_t)^T(\nabla J(\theta_t) - \nabla \tilde{J}(\theta_t;O_t))\right]\nonumber\\
    &\quad\leq G_\infty^2\left[(3c_{J}+c_{\tilde{J}})\sum_{k=t-\tau}^{t-1}\alpha_k + \sigma\rho^\tau + \frac{L_\pi G_\infty}{2} \left(\sum_{k=t-\tau}^{t-1}\alpha_k + \sum_{i=t-\tau}^{t-1}\sum_{k=t-\tau}^{i}\alpha_k\right) \right]. \label{eq:PGSGDpf3}
\end{align}


\textbf{Proof of convergence under constant stepsize:}

We first consider a constant stepsize $\alpha_t=\alpha$ for $t=1,\dots,T$. Choose $\tau^{*} = \min\{\tau:\sigma\rho^{\tau} \leq \alpha \} $. We rearrange \eqref{eq:PGSGDpf2} and take the summation over the time steps to obtain
\begin{align*}
     \alpha\sum_{t=1}^T\mathbb{E}\left[\norm{\nabla J(\theta_t)}^2\right]\nonumber
     \leq& \sum_{t=1}^T\left(\mathbb{E}[J(\theta_t)] - \mathbb{E}[J(\theta_{t+1})] \right) + \sum_{t=1}^T\frac{c_J G_\infty^2}{2}\alpha^2 + \alpha\sum_{t=1}^T\mathbb E\left[ \nabla J(\theta_t)^T(\nabla J(\theta_t) - \nabla \tilde{J}(\theta_t;O_t))\right]\nonumber\\
     \leq& J(\theta_1) + \frac{c_J G_\infty^2\alpha^2}{2} T + \alpha\sum_{t=1}^{\tau^*}\mathbb E\left[ \nabla J(\theta_t)^T(\nabla J(\theta_t) - \nabla \tilde{J}(\theta_t;O_t))\right]\nonumber\\
     &+ \alpha\sum_{t=\tau^*+1}^{T}\mathbb E\left[ \nabla J(\theta_t)^T(\nabla J(\theta_t) - \nabla \tilde{J}(\theta_t;O_t))\right]\nonumber\\
     \overset{\text{(i)}}{\leq}& J(\theta_1) + \frac{c_J G_\infty^2\alpha^2}{2} T + 2\alpha G_\infty^2\tau^* + \alpha\sum_{t=\tau^*+1}^{T}\mathbb E\left[ \nabla J(\theta_t)^T(\nabla J(\theta_t) - \nabla \tilde{J}(\theta_t;O_t))\right]\nonumber\\
     \overset{\text{(ii)}}{\leq}& J(\theta_1) + \frac{c_J G_\infty^2\alpha^2}{2} T + 2\alpha G_\infty^2\tau^* + \alpha^2 G_\infty^2 \sum_{t=\tau^*+1}^{T}\left[(3c_{J}+c_{\tilde{J}})\tau^* + 1 + \frac{L_\pi G_\infty\tau^*(2+\tau^*)}{2}  \right],
\end{align*}
where (i) follows because $  \nabla J(\theta_t)^T(\nabla J(\theta_t) - \nabla \tilde{J}(\theta_t;O_t))\leq \norm{\nabla J(\theta_t)}\cdot\norm{\nabla J(\theta_t) - \nabla \tilde{J}(\theta_t;O_t)}\leq 2G_\infty^2 $, and (ii) follows from \eqref{eq:PGSGDpf3}.
Then we divide both sides of the above inequality by $\alpha T$, and have
\begin{align*}
    \underset{t\in [T]}{\min}\mathbb{E}\left[\norm{\nabla J(\theta_t)}^2\right] &\leq
    \frac{1}{T}\sum_{t=1}^T\mathbb{E}\left[\norm{\nabla J(\theta_t)}^2\right]\\
    &\leq \frac{J(\theta_1)/\alpha + 2G_{\infty}^2\tau^*}{T} + \alpha  G_\infty^2 \left[ \frac{c_J}{2} +  (3c_{J}+c_{\tilde{J}}) \tau^* +  1 + \frac{L_\pi G_{\infty} (2\tau^*+(\tau^*)^2)}{2}  \right].
\end{align*}

\textbf{Proof of convergence under diminishing stepsize:}

Now we consider a diminishing stepsize $\alpha_t=\frac{1-\gamma}{\sqrt{t}}$ for $t=1,\dots,T$. Choose $\tau^{*} = \min\{\tau:\sigma\rho^{\tau} \leq \alpha_T=\frac{1-\gamma}{\sqrt{T}} \} $. We rearrange \eqref{eq:PGSGDpf2} and take the summation over the time steps to obtain
\begin{align*}
     \sum_{t=1}^T&\mathbb{E}\left[\norm{\nabla J(\theta_t)}^2\right]\nonumber\\
     \leq& \sum_{t=1}^T\frac{\mathbb{E}[J(\theta_t)] - \mathbb{E}[J(\theta_{t+1})]}{\alpha_t} + \frac{c_J G_\infty^2}{2}\sum_{t=1}^T\alpha_t + \sum_{t=1}^T\mathbb E\left[ \nabla J(\theta_t)^T(\nabla J(\theta_t) - \nabla \tilde{J}(\theta_t;O_t))\right]\nonumber\\
     \leq& \frac{J(\theta_1)}{\alpha_1} + \sum_{t=2}^T\mathbb{E}[J(\theta_t)]\left(\frac{1}{\alpha_t} - \frac{1}{\alpha_{t-1}} \right) + \frac{c_J G_\infty^2}{2}\sum_{t=1}^T\alpha_t + \sum_{t=1}^T\mathbb E\left[ \nabla J(\theta_t)^T(\nabla J(\theta_t) - \nabla \tilde{J}(\theta_t;O_t))\right]\nonumber\\
     \leq& \frac{J(\theta_1)}{\alpha_1} + \frac{R_{\max}}{1-\gamma}\sum_{t=2}^T\left(\frac{1}{\alpha_t} - \frac{1}{\alpha_{t-1}} \right) + \frac{c_J G_\infty^2}{2}\sum_{t=1}^T\alpha_t + \sum_{t=1}^T\mathbb E\left[ \nabla J(\theta_t)^T(\nabla J(\theta_t) - \nabla \tilde{J}(\theta_t;O_t))\right]\nonumber\\
     \leq& \frac{J(\theta_1)}{\alpha_1} + \frac{R_{\max}}{1-\gamma}\cdot \frac{1}{\alpha_T} + \frac{c_J G_\infty^2}{2}\sum_{t=1}^T\alpha_t + \sum_{t=1}^T\mathbb E\left[ \nabla J(\theta_t)^T(\nabla J(\theta_t) - \nabla \tilde{J}(\theta_t;O_t))\right]\nonumber\\
     \leq& \frac{J(\theta_1)}{\alpha_1} + \frac{R_{\max}\sqrt{T}}{(1-\gamma)^2} + (1-\gamma)c_J G_\infty^2\sqrt{T} + \sum_{t=1}^{\tau^*}\mathbb E\left[ \nabla J(\theta_t)^T(\nabla J(\theta_t) - \nabla \tilde{J}(\theta_t;O_t))\right]\nonumber\\
     & + \sum_{t=\tau^*+1}^{T}\mathbb E\left[ \nabla J(\theta_t)^T(\nabla J(\theta_t) - \nabla \tilde{J}(\theta_t;O_t))\right]\\
     \overset{\text{(i)}}{\leq}& \frac{J(\theta_1)}{\alpha_1} + \frac{R_{\max}\sqrt{T}}{(1-\gamma)^2} + (1-\gamma)c_J G_\infty^2\sqrt{T} + 2(1-\gamma)G_\infty^2\tau^*  \nonumber\\
     & + (1-\gamma)G_\infty^2\sum_{t=\tau^*+1}^{T}\left[\frac{(3c_{J}+c_{\tilde{J}})\tau^*}{\sqrt{t-\tau^*}} + \frac{1}{\sqrt{T}} + \frac{L_\pi G_\infty}{2} \cdot \frac{(1+\tau^*)\tau^*}{\sqrt{t-\tau^*}} \right]\\
     \leq& \frac{J(\theta_1)}{1-\gamma} + \frac{R_{\max}\sqrt{T}}{(1-\gamma)^2} + (1-\gamma)G_\infty^2\left[2\tau^*  + \left[ c_J +  2(3c_{J}+c_{\tilde{J}}) \tau^* +  1 + L_\pi G_{\infty} (\tau^*+(\tau^*)^2)  \right]\sqrt{T}\right],
\end{align*}
where (i) follows from \eqref{eq:alphaSum}.
Then we divide both sides of the above inequality by $T$, and have
\begin{align*}
    &\underset{t\in [T]}{\min}\mathbb{E}\left[\norm{\nabla J(\theta_t)}^2\right] \leq
    \frac{1}{T}\sum_{t=1}^T\mathbb{E}\left[\norm{\nabla J(\theta_t)}^2\right]\\
    &\quad\leq \frac{J(\theta_1) + 2(1-\gamma)^2G_{\infty}^2\tau^*}{(1-\gamma)T} + \frac{R_{\max} + (1-\gamma)^3 G_\infty^2 \left[ c_J +  2(3c_{J}+c_{\tilde{J}}) \tau^* +  1 + L_\pi G_{\infty} (\tau^*+(\tau^*)^2)  \right]}{(1-\gamma)^2\sqrt{T}}.
\end{align*}

\section{Proof of Lemma \ref{lem:TDbias}}

We bound the expectation of bias via constructing a new Markov chain and applying useful techniques from information theory. Before deriving the bound, we first introduce some technical lemmas.

\begin{lemma}\label{lem:gbound2}
Given Assumption \ref{asp:boundedDomain}, the gradient $g_t$ in Algorithm \ref{alg:AMSGradTD} is uniformly bounded as follows,
\begin{equation}
    \norm{g_t}_\infty \leq \norm{g_t} \leq R_{\max} + (1+\gamma) D_{\infty}.
\end{equation}
\end{lemma}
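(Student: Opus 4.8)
The plan is to expand the definition of $g_t$ from \eqref{eq:gradientTD} and bound each factor separately using the boundedness assumptions. Write $g_t = \delta_t\,\phi(s_t)$ where $\delta_t = \phi^T(s_t)\theta_t - R(s_t,a_t) - \gamma\phi^T(s_{t+1})\theta_t$ is the (scalar) temporal-difference error. Then $\norm{g_t} = |\delta_t|\cdot\norm{\phi(s_t)}$, so it suffices to bound $|\delta_t|$ and $\norm{\phi(s_t)}$.

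First I would handle $\norm{\phi(s_t)}$: by Assumption \ref{asp:boundPhi}, $\norm{\phi(s)}\leq 1$ for every $s\in\mathcal{S}$, so $\norm{\phi(s_t)}\leq 1$. Next I would bound $|\delta_t|$ by the triangle inequality: $|\delta_t| \leq |\phi^T(s_t)\theta_t| + |R(s_t,a_t)| + \gamma|\phi^T(s_{t+1})\theta_t|$. For the first and third terms, Cauchy–Schwarz together with $\norm{\phi(\cdot)}\leq 1$ gives $|\phi^T(s)\theta_t|\leq \norm{\phi(s)}\norm{\theta_t}\leq\norm{\theta_t}$; and since by Assumption \ref{asp:boundedDomain} the domain $\mathcal{D}$ is a ball centered at $0$ with $\theta_t\in\mathcal{D}$, we have $\norm{\theta_t}=\norm{\theta_t-0}<D_\infty$. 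For the middle term, the reward is bounded, $|R(s_t,a_t)|\leq R_{\max}$. Combining these yields $|\delta_t|\leq D_\infty + R_{\max} + \gamma D_\infty = R_{\max} + (1+\gamma)D_\infty$, hence $\norm{g_t}\leq R_{\max}+(1+\gamma)D_\infty$. Finally, $\norm{g_t}_\infty\leq\norm{g_t}$ holds trivially for any vector, completing the bound.

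There is no real obstacle here — the statement is an elementary consequence of the two boundedness assumptions and Cauchy–Schwarz. The only minor point to be careful about is that $\norm{\theta_t}$ is controlled because the projection step in Algorithm \ref{alg:AMSGradTD} keeps every iterate inside $\mathcal{D}$, so one should note that the claimed uniform bound is valid for all $t$ along the trajectory produced by the algorithm.
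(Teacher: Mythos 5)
Your proposal is correct and follows essentially the same route as the paper's proof: factor $g_t$ as the scalar TD error times $\phi(s_t)$, apply the triangle inequality to the TD error, and use Cauchy--Schwarz with $\norm{\phi(\cdot)}\leq 1$ (Assumption \ref{asp:boundPhi}) and $\norm{\theta_t}< D_\infty$ (Assumption \ref{asp:boundedDomain}) to get $R_{\max}+(1+\gamma)D_\infty$. Your added remark that the projection step keeps every iterate in $\mathcal{D}$ is a reasonable clarification the paper leaves implicit.
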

\begin{proof}
The proof follows easily from the boundedness of $\phi$ and $\theta$.
\begin{align*}
    \norm{g_t} =& \norm{ \left(\phi(s'_t)^T\theta - r_t - \gamma\phi(s'_t)^T\theta \right)\phi(s_t) }\\
    \leq& \norm{ \phi(s'_t)^T\theta - r_t - \gamma\phi(s'_t)^T\theta } \cdot \norm{\phi(s_t)}\\
    \overset{\text{(i)}}{\leq}& \norm{ \phi(s'_t)^T\theta } + R_{\max} + \gamma\norm{ \phi(s'_t)^T\theta }\\
    \overset{\text{(ii)}}{\leq}& R_{\max} + (1+\gamma) D_{\infty},
\end{align*}
where (i) follows since the reward function is uniformly bounded and (ii) follows from Assumption \ref{asp:boundPhi} and Assumption \ref{asp:boundedDomain}.
\end{proof}

\begin{lemma}\label{lem:mvbound2}\cite[Lemma A.1]{zhou2018convergence}
Let $\{g_t, m_t, \hat{v}_t\}$ for $t=1,2,\dots$ be sequences generated by Algorithm \ref{alg:AMSGradTD} and denote $G_\infty = R_{\max} + (1+\gamma) D_{\infty}$. Given $g_t\leq G_\infty$ as in Lemma \ref{lem:gbound2}, we have $\norm{\bar g_t}\leq G_\infty, \norm{m_t} \leq G_\infty, \norm{\hat{v}_t} \leq G_\infty^2$.
\end{lemma}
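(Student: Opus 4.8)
The plan is to establish the three bounds separately, each by a one-line estimate or a short induction, following \cite[Lemma A.1]{zhou2018convergence} but carrying the TD-specific constant $G_\infty = R_{\max}+(1+\gamma)D_\infty$. First I would bound $\bar g(\theta_t)$. By \eqref{eq:meanGra} it is the expectation, under the stationary distribution $\nu$, of the random vector $\big(\phi^T(s)\theta_t - R(s,\pi(s)) - \gamma\phi^T(s')\theta_t\big)\phi(s)$. Since the projection step in Algorithm \ref{alg:AMSGradTD} keeps every iterate in $\mathcal{D}$, Assumption \ref{asp:boundedDomain} gives $\norm{\theta_t}<D_\infty$, and then the very chain of inequalities used for Lemma \ref{lem:gbound2} (boundedness of the reward by $R_{\max}$ together with $\norm{\phi(\cdot)}\le 1$ from Assumption \ref{asp:boundPhi}) shows this vector has norm at most $R_{\max}+(1+\gamma)D_\infty = G_\infty$ almost surely. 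Jensen's inequality applied to the norm then yields $\norm{\bar g(\theta_t)}\le G_\infty$.

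For $\norm{m_t}$ I would induct on $t$. The base case $m_0=0$ is trivial. For the step, note that $\beta_{1t}=\beta_1\lambda^t\in(0,1)$ (since $\beta_1,\lambda\in(0,1)$), so $m_t=(1-\beta_{1t})m_{t-1}+\beta_{1t}g_t$ is a convex combination of $m_{t-1}$ and $g_t$; combining the inductive hypothesis $\norm{m_{t-1}}\le G_\infty$ with $\norm{g_t}\le G_\infty$ from Lemma \ref{lem:gbound2} gives $\norm{m_t}\le(1-\beta_{1t})G_\infty+\beta_{1t}G_\infty=G_\infty$.

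For $\hat v_t$ I would prove the componentwise bound $\hat v_{t,i}\le G_\infty^2$ for all $i$ (which is the form actually invoked later, e.g.\ in bounding $T_1,T_2$ in the proof of Theorem \ref{thm:AMSGradTDcons}) again by induction: $\hat v_0=0$, and if $\hat v_{t-1,i}\le G_\infty^2$ then, using $|g_{t,i}|\le\norm{g_t}_\infty\le\norm{g_t}\le G_\infty$, we get $v_{t,i}=(1-\beta_2)\hat v_{t-1,i}+\beta_2 g_{t,i}^2\le G_\infty^2$ and hence $\hat v_{t,i}=\max(\hat v_{t-1,i},v_{t,i})\le G_\infty^2$. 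There is no real obstacle here: the only points needing care are verifying $\beta_{1t}\in(0,1)$ so the convex-combination argument is legitimate, confirming that all iterates stay in $\mathcal{D}$ so Lemma \ref{lem:gbound2} applies at every step, and fixing the (entrywise) reading of the asserted bound on $\hat v_t$; with those in place the proof is routine bookkeeping.
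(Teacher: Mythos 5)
Your proof is correct, and it is exactly the standard argument behind the cited Lemma A.1 of Zhou et al.\ (which the paper invokes without reproducing): convex-combination induction for $m_t$, entrywise induction through the $\max$ for $\hat v_t$, and Jensen plus the pointwise bound of Lemma \ref{lem:gbound2} for $\bar g_t$. Your side remarks — that $\beta_{1t}\in(0,1)$ must be checked, that the projection keeps all iterates in $\mathcal{D}$ so Lemma \ref{lem:gbound2} applies, and that the bound on $\hat v_t$ is to be read entrywise (consistent with how it is used, e.g.\ $\norm{\hat V_t^{1/4}}_2^2\le G_\infty$ in the proof of Theorem \ref{thm:AMSGradTDcons}) — are all accurate.
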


\begin{lemma}\label{lem:biasLip}
\cite[Lemma 11]{bhandari2018finite} Let $\xi(\theta;s,a,s') = (g(\theta;s,a,s') - \bar g(\theta))^T(\theta - \theta^\star)$. Fix $(s,a,s')$. Then $\xi$ is uniformly bounded by
$$ \lvert \xi(\theta;s,a,s') \rvert \leq 2D_{\infty}G_{\infty}, \quad\forall \theta\in\mathcal{D},$$
and it is Lipschitz continuous as given by
$$ \lvert \xi(\theta;s,a,s')-\xi(\theta';s,a,s') \rvert \leq 2((1+\gamma)D_{\infty}+G_{\infty})\norm{\theta-\theta'}_2, \quad\forall \theta,\theta'\in\mathcal{D}. $$
\end{lemma}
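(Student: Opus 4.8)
The plan is to recognize the bias term as $\mathbb{E}[\xi(\theta_t;s_t,a_t,s_{t+1})]$, where $\xi$ is exactly the scalar function introduced in Lemma \ref{lem:biasLip}, and to exploit the two properties granted there: $\xi$ is uniformly bounded by $2D_\infty G_\infty$ and is Lipschitz in $\theta$ with constant $2((1+\gamma)D_\infty+G_\infty)$. The single structural fact driving the whole argument is that $\bar g(\theta)=\mathbb{E}_\nu[g(\theta;s,a,s')]$ by its definition in \eqref{eq:meanGra}, so that for any \emph{fixed} $\theta$ one has $\mathbb{E}_\nu[\xi(\theta;s,a,s')]=0$. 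The bias is therefore nonzero only because the sample $(s_t,a_t,s_{t+1})$ is drawn from a non-stationary, Markovian distribution and because $\theta_t$ is correlated with the recent samples. The standard device for decoupling these two effects is to look $\tau$ steps into the past.

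First I would establish a per-step displacement bound for the projected AMSGrad iterate, namely $\norm{\theta_{t+1}-\theta_t}\le \alpha_t G_\infty/G_0$. Because the projection $\Pi_{\mathcal{D},\hat V_t^{1/4}}$ is non-expansive in the $\hat V_t^{1/4}$-weighted norm and $\theta_t\in\mathcal{D}$, taking $\theta_t$ as the comparison point gives $\norm{\hat V_t^{1/4}(\theta_{t+1}-\theta_t)}\le \alpha_t\norm{\hat V_t^{-1/4}m_t}$; combining $\hat v_{t,i}\ge G_0^2$ with $\norm{m_t}\le G_\infty$ from Lemma \ref{lem:mvbound2} yields the claim, whence $\norm{\theta_t-\theta_{t-\tau}}\le (G_\infty/G_0)\sum_{i=t-\tau}^{t-1}\alpha_i$.

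Then I would split the bias via the $\tau$-step-back decomposition $\mathbb{E}[\xi(\theta_t;O_t)] = \mathbb{E}[\xi(\theta_t;O_t)-\xi(\theta_{t-\tau};O_t)] + \mathbb{E}[\xi(\theta_{t-\tau};O_t)]$, writing $O_t=(s_t,a_t,s_{t+1})$. The first (drift) term is handled by the Lipschitz property of $\xi$ together with the displacement bound, producing $2((1+\gamma)D_\infty+G_\infty)(G_\infty/G_0)\sum_{i=t-\tau}^{t-1}\alpha_i$. For the second (mixing) term I would condition on $\mathcal{F}_{t-\tau}$, under which $\theta_{t-\tau}$ is frozen; since the policy is fixed, the state chain is homogeneous and the Markov property gives $P(s_t\in\cdot\mid\mathcal{F}_{t-\tau})=P(s_t\in\cdot\mid s_{t-\tau})$, whose total-variation distance to $\nu$ is at most $\sigma\rho^\tau$ by Assumption \ref{asp:markov}; a data-processing step transfers this bound to the joint law of $O_t$. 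Using $\mathbb{E}_\nu[\xi(\theta_{t-\tau};\cdot)]=0$ and the uniform bound $|\xi|\le 2D_\infty G_\infty$, the conditional mixing term is at most $4D_\infty G_\infty\,\sigma\rho^\tau$. Choosing $\tau=\tau^*$ for $t>\tau^*$ and invoking $\sigma\rho^{\tau^*}\le\alpha_T$ gives the $4D_\infty G_\infty\alpha_T$ contribution, completing the $t>\tau^*$ bound; for $t\le\tau^*$ the look-back cannot reach $\tau^*$, so I would telescope only to the initial time, where the drift term alone yields $2((1+\gamma)D_\infty+G_\infty)(G_\infty/G_0)\tau^*\alpha_0$ (using $\alpha_i\le\alpha_0$ and fewer than $\tau^*$ terms), the transient sampling term being absorbed via the mixed/stationary start.

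I expect the main obstacle to be Step two combined with the conditioning in the mixing term: the displacement bound must pass cleanly through the adaptive, diagonally-preconditioned, and \emph{projected} update, which is where the initialization condition $|g_{1,i}|\ge G_0$ and the non-expansiveness of the weighted projection are essential, and the decoupling requires care in choosing the filtration so that $\theta_{t-\tau}$ is measurable while $s_t\mid s_{t-\tau}$ still enjoys the geometric mixing of Assumption \ref{asp:markov}. The bookkeeping of the burn-in regime $t\le\tau^*$, where genuine mixing is unavailable, is the other delicate point.
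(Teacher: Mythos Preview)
You have proved the wrong statement. Lemma~\ref{lem:biasLip} asks you to establish two \emph{pointwise} properties of the scalar function $\xi(\theta;s,a,s')=(g(\theta;s,a,s')-\bar g(\theta))^T(\theta-\theta^\star)$: (i) the uniform bound $|\xi|\le 2D_\infty G_\infty$, and (ii) Lipschitz continuity in $\theta$ with constant $2((1+\gamma)D_\infty+G_\infty)$. What you wrote instead is a proof of Lemma~\ref{lem:TDbias}, the Markovian bias bound on $\mathbb{E}[(\theta_t-\theta^\star)^T(g_t-\bar g(\theta_t))]$, in which Lemma~\ref{lem:biasLip} is \emph{invoked} as a tool. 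You explicitly cite ``the two properties granted there'' in Lemma~\ref{lem:biasLip}; but those properties are precisely the content you were asked to prove.

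The paper itself does not give a proof of Lemma~\ref{lem:biasLip}; it is cited verbatim from \cite[Lemma~11]{bhandari2018finite}. The argument is elementary: for (i), Cauchy--Schwarz gives $|\xi|\le \|g(\theta)-\bar g(\theta)\|\,\|\theta-\theta^\star\|\le 2G_\infty D_\infty$ by Lemma~\ref{lem:gbound2}, Lemma~\ref{lem:mvbound2}, and Assumption~\ref{asp:boundedDomain}. For (ii), since $g(\theta;s,a,s')=(\phi(s)-\gamma\phi(s'))^T\theta\,\phi(s)-R(s,a)\phi(s)$ is affine in $\theta$ with slope bounded by $(1+\gamma)$ (and likewise $\bar g$), writing $\xi(\theta)-\xi(\theta')=(g(\theta)-\bar g(\theta))^T(\theta-\theta')+\big[(g(\theta)-g(\theta'))-(\bar g(\theta)-\bar g(\theta'))\big]^T(\theta'-\theta^\star)$ and bounding each piece yields the stated Lipschitz constant.

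As a side remark, your write-up is essentially the paper's proof of Lemma~\ref{lem:TDbias}: the same $\tau$-step-back split, the same Lipschitz/drift bound, and the same geometric-mixing control of $\mathbb{E}[\xi(\theta_{t-\tau};O_t)]$ (the paper packages the latter via Lemma~\ref{lem:biasMarkov}). So the content is sound, just attached to the wrong lemma.
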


\begin{lemma}\label{lem:biasMarkov}
\cite[Lemma 9]{bhandari2018finite} Consider two random variables $X$ and $Y$ such that 
\begin{equation}\label{eq:lemMar}
    X\rightarrow s_t \rightarrow s_{t+\tau} \rightarrow Y,
\end{equation} 
for fixed $ t $ and $\tau>0$. Suppose Assumption \ref{asp:markov} holds. Let $X',Y'$ are independent copies drawn from the marginal distribution of $X$ and $Y$, that is $\mathbb{P}(X'=\cdot,Y'=\cdot) = \mathbb{P}(X=\cdot)  \mathbb{P}(Y=\cdot)$. Then, 
for any bounded $v$, we have
$$ \lvert \mathbb{E}[v(X,Y)] - \mathbb{E}[v(X',Y')] \rvert \leq 2 \norm{v}_{\infty}(\sigma\rho^{\tau}),
$$
where $\sigma$ and $\rho$ are defined in Assumption \ref{asp:markov}.
\end{lemma}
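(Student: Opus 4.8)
The plan is to recast the claim as a bound on the total-variation distance between the joint law of $(X,Y)$ and a product measure, and then to control that distance using the Markov structure $X\to s_t\to s_{t+\tau}\to Y$ together with the geometric mixing supplied by Assumption \ref{asp:markov}. Denote by $P_{XY}$ the joint law of $(X,Y)$ and by $P_X,P_Y$ its marginals. For any bounded $v$ and any probability measures $P,Q$ one has $|\mathbb{E}_{P}[v]-\mathbb{E}_{Q}[v]|\le 2\|v\|_\infty\lTV{P-Q}$, so it suffices to exhibit a product measure within $\mathcal{O}(\sigma\rho^\tau)$ of $P_{XY}$ in total variation; the natural choice is $P_X\otimes\pi_Y$, where $\pi_Y:=\int\mu_\theta(ds')\,K(\cdot\mid s')$ and $K(\cdot\mid s')$ denotes the conditional law of $Y$ given $s_{t+\tau}=s'$.

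I would obtain this by constructing the auxiliary chain hinted at in the lemma. Keep $(X,s_t)$ with its true joint law; then, instead of running the true $\tau$-step transition, draw $\tilde s_{t+\tau}\sim\mu_\theta$ independently of $(X,s_t)$, and draw $\tilde Y\sim K(\cdot\mid\tilde s_{t+\tau})$. Since $\tilde Y$ depends on $(X,s_t)$ only through $\tilde s_{t+\tau}$, which is independent of $(X,s_t)$, the pair $(X,\tilde Y)$ is independent, with law exactly $P_X\otimes\pi_Y$. Coupling the true chain with this one by sharing $(X,s_t)$ and using, conditionally on $s_t$, the maximal coupling of $s_{t+\tau}\sim\mathbb{P}(\cdot\mid s_t)$ against $\tilde s_{t+\tau}\sim\mu_\theta$ — which by Assumption \ref{asp:markov} and the Markov property disagree with probability at most $\sigma\rho^\tau$ \emph{uniformly in} $s_t$ — and letting $Y=\tilde Y$ on the event $s_{t+\tau}=\tilde s_{t+\tau}$, we get $\mathbb{P}(Y\neq\tilde Y)\le\sigma\rho^\tau$ and hence $|\mathbb{E}[v(X,Y)]-\mathbb{E}[v(X,\tilde Y)]|\le 2\|v\|_\infty\sigma\rho^\tau$. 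Equivalently, this is the data-processing inequality: passing $s_{t+\tau}\mapsto Y$ through the channel $K$ cannot increase total variation, so $\lTV{\mathcal{L}(Y\mid s_t=s)-\pi_Y}\le\lTV{\mathbb{P}(s_{t+\tau}\in\cdot\mid s_t=s)-\mu_\theta}\le\sigma\rho^\tau$ for every $s$, and averaging over $s_t$ gives $\lTV{P_{XY}-P_X\otimes\pi_Y}\le\sigma\rho^\tau$ as well as $\lTV{P_Y-\pi_Y}\le\sigma\rho^\tau$. Combining with the first paragraph yields the estimate against $P_X\otimes\pi_Y$; if one insists on comparing against the product of the true marginals $P_X\otimes P_Y$, a final triangle inequality together with $\lTV{P_X\otimes\pi_Y-P_X\otimes P_Y}=\lTV{\pi_Y-P_Y}\le\sigma\rho^\tau$ closes the argument.

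I expect the only subtle point to be the decoupling: one must verify carefully that $Y$'s dependence on $X$ is routed entirely through $s_t$ and then through the (non-stationary) $\tau$-step transition, so that resampling $s_{t+\tau}$ from $\mu_\theta$ produces a genuine product law, and one must apply the mixing bound of Assumption \ref{asp:markov} uniformly over the conditioning value $s_t=s$ rather than only in expectation over $s_t$. Once the decoupling and the uniform-in-$s$ mixing are in hand, the maximal-coupling estimate, the data-processing step, and the triangle inequality are all routine.
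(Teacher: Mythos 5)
The paper does not actually prove this lemma: it is quoted verbatim from \cite[Lemma 9]{bhandari2018finite}, so there is no in-paper proof to compare against. Your argument is essentially the standard one behind the cited result: reduce the expectation gap to a total-variation distance between the joint law and a product law, push the mixing bound of Assumption \ref{asp:markov} through the channel $s_{t+\tau}\mapsto Y$ by data processing (equivalently, a maximal coupling), and conclude. The decoupling step you single out as the subtle point is handled correctly: conditioning on $(X,s_t)$ and resampling $s_{t+\tau}$ from $\mu_\theta$ does produce the product law $P_X\otimes\pi_Y$, and the mixing bound of Assumption \ref{asp:markov} is indeed uniform in the conditioning state, which is what the maximal-coupling step needs.

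The one place the write-up falls short of the statement as given is the constant. Your chain of estimates is $\lTV{P_{XY}-P_X\otimes\pi_Y}\le\sigma\rho^\tau$ and $\lTV{P_X\otimes\pi_Y-P_X\otimes P_Y}=\lTV{\pi_Y-P_Y}\le\sigma\rho^\tau$, hence $\lTV{P_{XY}-P_X\otimes P_Y}\le 2\sigma\rho^\tau$ and therefore $\lvert\mathbb{E}[v(X,Y)]-\mathbb{E}[v(X',Y')]\rvert\le 4\norm{v}_\infty\sigma\rho^\tau$ --- a factor of $2$ worse than the displayed bound. The extra triangle-inequality term $\lTV{\pi_Y-P_Y}$ is precisely what the source proof avoids by taking the chain to be stationary, in which case $P_Y=\pi_Y$ and comparing against the product of the \emph{true} marginals costs nothing beyond the first $\sigma\rho^\tau$. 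Without stationarity (and the algorithms here initialize from $\zeta$, not from the stationary distribution), the second term is genuinely present by this route, so your proof establishes the inequality only with constant $4$. This is harmless for every downstream use in the paper --- all constants merely double --- but it does not literally prove the stated inequality; you should either add the stationarity hypothesis under which the factor $2$ holds, or state the conclusion with the constant $4$.
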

\begin{remark}
The notation $X\rightarrow Z\rightarrow Y$ indicates that the random variable $X$ and $Y$ are independent conditioned on $Z$, which is a standard notation in information theory. 
\end{remark}

\textbf{Proof of Lemma \ref{lem:TDbias}:} Now we bound the bias of the gradient estimation. We first develop the connection between $\xi(\theta_t;s_t,a_t,s_{t+1})$ and $\xi(\theta_{t-\tau};s_t,a_t,s_{t+1})$ using Lemma \ref{lem:biasLip}. For notational brevity, we define a random tuple $O_t = (s_t, a_t, s_{t+1})$ and clearly $g_t=g(\theta, O_t)$. We denote 
$$ \xi_t(\theta) := \xi(\theta, O_t) = (g(\theta, O_t) - \bar g(\theta))^T(\theta - \theta^\star),
$$
where $\bar g(\theta) = \mathbb{E}[g(\theta, O_t)]$ as defined in \eqref{eq:meanGra}.

Next, we apply Lemma \ref{lem:conThetaBound} to obtain
$$ \norm{\theta_t - \theta_{t-\tau}}_2 \leq \sum_{i=t-\tau}^{t-1} \norm{\theta_{i+1} - \theta_i}_2\leq \frac{G_{\infty} }{G_0}\sum_{i=t-\tau}^{t-1} a_i.
$$
Thus we can relate $\xi_t(\theta_t)$ and $\xi_t(\theta_{t-\tau})$ by using the Lipschitz property in Lemma \ref{lem:biasLip} as follows.
\begin{align}\label{eq:lembiaspfstep1}
    \xi_t(\theta_t) - \xi_t(\theta_{t-\tau}) \leq& \lvert \xi_t(\theta_t) - \xi_t(\theta_{t-\tau}) \rvert\nonumber\\
    \leq& 2((1+\gamma)D_{\infty}+G_{\infty})\norm{\theta_t-\theta_{t-\tau}}\nonumber\\
    \leq& 2((1+\gamma)D_{\infty}+G_{\infty})\cdot \frac{G_{\infty} }{G_0}\sum_{i=t-\tau}^{t-1} a_i.
\end{align}

Next, we seek to bound $\mathbb{E}[\xi_t(\theta_{t-\tau})]$ using Lemma \ref{lem:biasMarkov}. Observe that given any deterministic $\theta\in\mathcal{D}$, we have 
$$ \mathbb{E}[\xi(\theta, O_t)] = (\mathbb{E}[g(\theta, O_t)] - \bar g(\theta))^T(\theta - \theta^\star) = 0.
$$
Since $\theta_0$ is non-random, we have $\mathbb{E}[\xi(\theta_0, O_t)] = 0$.
Now we are ready to bound $ \mathbb{E}[\xi(\theta_{t-\tau},O_t)]$ with Lemma \ref{lem:biasMarkov} via constructing a random process satisfying \eqref{eq:lemMar}. To do so, consider random variables $\theta'_{t-\tau}$ and $O'_t$ drawn independently from the marginal distribution of $\theta_{t-\tau}$ and $O_t$, thus $\mathbb{P}(\theta'_{t-\tau}=\cdot,O'_t=\cdot) = \mathbb{P}(\theta_{t-\tau}=\cdot)  \mathbb{P}(O_t=\cdot)$. Further, we can obtain $\mathbb{E}[\xi(\theta'_{t-\tau},O'_t)]=\mathbb{E}[\mathbb{E}[\xi(\theta'_{t-\tau},O'_t)]|\theta'_{t-\tau}] = 0 $ since $\theta'_{t-\tau}$ and $O'_t$ are independent. Combining Lemma \ref{lem:biasLip} and Lemma \ref{lem:biasMarkov} yields
\begin{equation}\label{eq:lembiaspfstep2}
    \mathbb{E}[\xi(\theta_{t-\tau},O_t)]\leq 2 (2D_{\infty}G_{\infty}) (\sigma\rho^{\tau}).
\end{equation}

Finally, we are ready to bound the bias. First, we take expectation on both sides of \eqref{eq:lembiaspfstep1} and obtain
$$ \mathbb{E}[\xi_t(\theta_t)] \leq \mathbb{E}[\xi_t(\theta_{t-\tau})] + 2((1+\gamma)D_{\infty}+G_{\infty})\cdot \frac{G_{\infty} }{G_0}\sum_{i=t-\tau}^{t-1} a_i.
$$
Let $\tau^*=\min\{\tau: \sigma\rho^\tau \leq \alpha_T\} $. When $t \leq \tau^{*}$, we choose $\tau = t$ and have
$$\begin{aligned}
\mathbb{E}[\xi_t(\theta_t)] \leq& \mathbb{E}[\xi_t(\theta_{0})] + 2((1+\gamma)D_{\infty}+G_{\infty})\cdot \frac{G_{\infty} }{G_0}\cdot t\alpha_0\\
\leq& 2((1+\gamma)D_{\infty}+G_{\infty})\cdot \frac{G_{\infty} }{G_0}\tau^*\alpha_0.
\end{aligned}
$$
When $t > \tau^{*}$, we choose $\tau = \tau^* $ and have
\begin{align*}
    \mathbb{E}[\xi_t(\theta_t)] \leq& \mathbb{E}[\xi_t(\theta_{t-\tau^*})] + 2((1+\gamma)D_{\infty}+G_{\infty})\cdot \frac{G_{\infty} }{G_0}\sum_{i=t-\tau}^{t-1} a_i\\
    \overset{\text{(i)}}{\leq}& 4D_{\infty}G_{\infty}(\sigma\rho^{\tau^*}) + 2((1+\gamma)D_{\infty}+G_{\infty})\cdot \frac{G_{\infty} }{G_0}\sum_{i=t-\tau}^{t-1} a_i\\
    \overset{\text{(ii)}}{\leq}& 4D_{\infty}G_{\infty}\alpha_T + 2((1+\gamma)D_{\infty}+G_{\infty})\cdot \frac{G_{\infty} }{G_0}\sum_{i=t-\tau}^{t-1} a_i,
\end{align*}
where (i) follows from \eqref{eq:lembiaspfstep2}, and (ii) follows due to the definition of the mixing time.

\section{Proof of Theorem \ref{thm:AMSGradTDcons}}

Differently from the regret bound for AMSGrad in conventional optimization~\cite{reddi2019convergence}, our analysis here focuses on the convergence rate. In fact, a slight modification of our proof also provides the convergence rate of AMSGrad for conventional strongly convex optimization, which can be of independent interest. Moreover, we provide results under the constant stepsize and under Marovian sampling, neither of which has been studied in \cite{reddi2019convergence}.


To proceed the proof, we first observe that
$$\theta_{t+1} = \Pi_{\mathcal{D}, \hat V_t^{1/4}} \left( \theta_t - \alpha_t \hat{V}_t^{-\frac{1}{2}}m_t\right)=\underset{\theta\in\mathcal{D}}{\min}\norm{\hat V_t^{1/4}\left(\theta_t - \alpha_t \hat{V}_t^{-\frac{1}{2}}m_t - \theta\right)}.$$
Clearly $\Pi_{\mathcal{D}, \hat V_t^{1/4}}(\theta^\star)=\theta^\star$ due to Assumption \ref{asp:boundedDomain}. We start from the update of $\theta_t$ when $t\geq 2$.
\begin{align*}
    \norm{\hat V_t^{1/4}(\theta_{t+1}-\theta^\star)}^2
    &= \norm{\Pi_{\mathcal{D}, \hat V_t^{1/4}}\hat V_t^{1/4}\left(\theta_{t} - \theta^\star - \alpha_t\hat{V}_{t}^{-\frac{1}{2}}m_t\right)}^2\\
    &\leq \norm{\hat V_t^{1/4}\left(\theta_{t} - \theta^\star - \alpha_t\hat{V}_{t}^{-\frac{1}{2}}m_t\right)}^2\\
    &= \norm{\hat V_t^{1/4}(\theta_{t}-\theta^\star)}^2 + \norm{\alpha_t\hat{V}_{t}^{-1/4}m_t}^2 - 2\alpha_t(\theta_t - \theta^\star)^T m_t\\
    &= \norm{\hat V_t^{1/4}(\theta_{t}-\theta^\star)}^2 + \norm{\alpha_t\hat{V}_{t}^{-1/4}m_t}^2 - 2\alpha_t(\theta_t - \theta^\star)^T (\beta_{1t}m_{t-1} + (1-\beta_{1t})g_{t})\\
    &\overset{\text{(i)}}{\leq}  \norm{\hat V_t^{1/4}(\theta_{t}-\theta^\star)}^2 + \norm{\alpha_t\hat{V}_{t}^{-1/4}m_t}^2+ \alpha_t\beta_{1t}\left( \frac{1}{\alpha_t}\norm{\hat V_t^{1/4}(\theta_{t}-\theta^\star)}^2+\alpha_t\norm{ \hat{V}_{t}^{-1/4}m_{t-1} }^2 \right) \\
    &\quad - 2\alpha_t(1-\beta_{1t})(\theta_t - \theta^\star)^T g_{t}\\
    &\overset{\text{(ii)}}{\leq} \norm{\hat V_t^{1/4}(\theta_{t}-\theta^\star)}^2 + \norm{\alpha_t\hat{V}_{t}^{-1/4}m_t}^2 + \beta_{1t}\norm{\hat V_t^{1/4}(\theta_{t}-\theta^\star)}^2+\alpha_t^2\beta_{1t}\norm{ \hat{V}_{t-1}^{-1/4}m_{t-1} }^2\\
    &\quad  - 2\alpha_t(1-\beta_{1t})(\theta_t - \theta^\star)^T g_{t},
\end{align*}
where (i) follows from Cauchy-Schwarz inequality, and (ii) holds because $\hat{v}_{t+1,i} \geq \hat{v}_{t,i},\forall t, \forall i$. 

Next, we take the expectation over all samples used up to time step $t$ on both sides, which still preserves the inequality.
Since we consider the Markovian sampling case where the estimation of gradient is biased. That is,
\begin{equation}\label{eq:pf2}
    \mathbb{E}\left[(\theta_t - \theta^\star)^T g_{t}\right] = \mathbb{E}\left[(\theta_t - \theta^\star)^T \bar g_{t}\right] + \mathbb{E}\left[(\theta_t - \theta^\star)^T (g_t - \bar g_{t})\right] \neq \mathbb{E}\left[(\theta_t - \theta^\star)^T \bar g_{t}\right],
\end{equation}
where $\bar g_t:= \bar g(\theta_t)$ and this notation will be used in the remaining for simplicity.

Thus we have
\begin{align*}
    &\mathbb{E}\norm{\hat V_t^{1/4}(\theta_{t+1}-\theta^\star)}^2\\
    &\leq \mathbb{E}\norm{\hat V_t^{1/4}(\theta_{t}-\theta^\star)}^2 + \alpha_t^2\mathbb{E}\norm{\hat{V}_{t}^{-1/4}m_t}^2 + \beta_{1t}\mathbb{E}\norm{\hat V_t^{1/4}(\theta_{t}-\theta^\star)}^2+ \alpha_t^2\beta_{1t}\mathbb{E}\norm{ \hat{V}_{t-1}^{-1/4}m_{t-1} }^2\\
    &\quad  - 2\alpha_t(1-\beta_{1t})\mathbb{E}\left[(\theta_t - \theta^\star)^T g_{t}\right]\\
    &= \mathbb{E}\norm{\hat V_t^{1/4}(\theta_{t}-\theta^\star)}^2 + \alpha_t^2\mathbb{E}\norm{\hat{V}_{t}^{-1/4}m_t}^2 + \beta_{1t}\mathbb{E}\norm{\hat V_t^{1/4}(\theta_{t}-\theta^\star)}^2+ \alpha_t^2\beta_{1t}\mathbb{E}\norm{ \hat{V}_{t-1}^{-1/4}m_{t-1} }^2\\
    &\quad  - 2\alpha_t(1-\beta_{1t})\mathbb{E}\left[(\theta_t - \theta^\star)^T \bar g_{t}\right] + 2\alpha_t(1-\beta_{1t})\mathbb{E}\left[(\theta_t - \theta^\star)^T (\bar g_t - g_{t})\right]\\
    &\overset{\text{(i)}}{\leq} \mathbb{E}\norm{\hat V_t^{1/4}(\theta_{t}-\theta^\star)}^2 + \alpha_t^2\mathbb{E}\norm{\hat{V}_{t}^{-1/4}m_t}^2 + \beta_{1t}\mathbb{E}\norm{\hat V_t^{1/4}(\theta_{t}-\theta^\star)}^2+ \alpha_t^2\beta_{1t}\mathbb{E}\norm{ \hat{V}_{t-1}^{-1/4}m_{t-1} }^2\\
    &\quad  - 2\alpha_t c(1-\beta_{1t})\mathbb{E}\norm{\theta_t - \theta^\star}^2 + 2\alpha_t(1-\beta_{1t})\mathbb{E}\left[(\theta_t - \theta^\star)^T (\bar g_t - g_{t})\right]\\
    &\overset{\text{(ii)}}{\leq} \mathbb{E}\norm{\hat V_t^{1/4}(\theta_{t}-\theta^\star)}^2 + \alpha_t^2\mathbb{E}\norm{\hat{V}_{t}^{-1/4}m_t}^2 + \beta_{1t}\mathbb{E}\norm{\hat V_t^{1/4}(\theta_{t}-\theta^\star)}^2+ \alpha_t^2\beta_{1}\mathbb{E}\norm{ \hat{V}_{t-1}^{-1/4}m_{t-1} }^2\\
    &\quad  - 2\alpha_t c(1-\beta_{1})\mathbb{E}\norm{\theta_t - \theta^\star}^2 + 2\alpha_t(1-\beta_{1t})\mathbb{E}\left[(\theta_t - \theta^\star)^T (g_t - \bar g_{t})\right]\\
    &\overset{\text{(iii)}}{\leq} \mathbb{E}\norm{\hat V_t^{1/4}(\theta_{t}-\theta^\star)}^2 + \alpha_t^2\mathbb{E}\norm{\hat{V}_{t}^{-1/4}m_t}^2 + G_\infty D_\infty^2\beta_{1t}+ \alpha_t^2\beta_{1}\mathbb{E}\norm{ \hat{V}_{t-1}^{-1/4}m_{t-1} }^2\\
    &\quad  - 2\alpha_t c(1-\beta_{1})\mathbb{E}\norm{\theta_t - \theta^\star}^2 + 2\alpha_t(1-\beta_{1t})\mathbb{E}\left[(\theta_t - \theta^\star)^T (g_t - \bar g_{t})\right],
\end{align*}
where (i) follows from Lemma \ref{lem:TDStrongConv} and because $1-\beta_{1t}>0$, (ii) follows because $\beta_{1t}<\beta_1<1$ and $\mathbb{E}\norm{\theta_t - \theta^\star}^2>0$, and (iii) follows from $\norm{\hat V_t^{1/4}(\theta_{t}-\theta^\star)}^2\leq \norm{\hat V_t^{1/4}}_2^2\norm{\theta_{t}-\theta^\star}^2 \leq G_\infty D_\infty^2$ by Lemma \ref{lem:mvbound2} and Assumption \ref{asp:boundedDomain}.
By rearranging the terms in the above inequality and taking the summation over time steps, we have
\begin{align*}
2c(1&-\beta_1)\sum_{t=2}^T\mathbb{E}\norm{\theta_t - \theta^\star}^2\\
&\leq \sum_{t=2}^T \frac{1}{\alpha_t} \left( \mathbb{E}\norm{\hat V_t^{1/4}(\theta_{t}-\theta^\star)}^2\! -\! \mathbb{E}\norm{\hat V_t^{1/4}(\theta_{t+1}-\theta^\star)}^2 \right) + \sum_{t=2}^T \frac{\beta_{1t}G_\infty D_\infty^2}{\alpha_t}\\
&\quad +  \sum_{t=2}^{T} \alpha_t \mathbb{E} \norm{ \hat{V}_{t}^{-1/4}m_t }^2 + \sum_{t=2}^{T} \alpha_t\beta_{1} \mathbb{E} \norm{ \hat{V}_{t-1}^{-1/4}m_{t-1} }^2 \\
&\quad + 2\sum_{t=2}^{T}(1-\beta_{1t})\mathbb{E}\left[(\theta_t - \theta^\star)^T (g_t - \bar g_{t})\right]\\
&\overset{\text{(i)}}{\leq} \sum_{t=2}^T \frac{1}{\alpha_t} \left( \mathbb{E}\norm{\hat V_t^{1/4}(\theta_{t}-\theta^\star)}^2\! -\! \mathbb{E}\norm{\hat V_t^{1/4}(\theta_{t+1}-\theta^\star)}^2 \right) + \sum_{t=2}^T \frac{\beta_{1t}G_\infty D_\infty^2}{\alpha_t}\\
&\quad +  \sum_{t=2}^{T} \alpha_t \mathbb{E} \norm{ \hat{V}_{t}^{-1/4}m_t }^2 + \sum_{t=2}^{T} \alpha_{t-1}\beta_{1} \mathbb{E} \norm{ \hat{V}_{t-1}^{-1/4}m_{t-1} }^2 \\ 
&\quad + 2\sum_{t=2}^{T}(1-\beta_{1t})\mathbb{E}\left[(\theta_t - \theta^\star)^T (g_t - \bar g_{t})\right]\\
&\leq \sum_{t=2}^T \frac{1}{\alpha_t} \left( \mathbb{E}\norm{\hat V_t^{1/4}(\theta_{t}-\theta^\star)}^2\! -\! \mathbb{E}\norm{\hat V_t^{1/4}(\theta_{t+1}-\theta^\star)}^2 \right) + \sum_{t=2}^T \frac{\beta_{1t}G_\infty D_\infty^2}{\alpha_t}\\
&\quad +  (1+\beta_1)\sum_{t=1}^{T} \alpha_t \mathbb{E} \norm{ \hat{V}_{t}^{-1/4}m_t }^2 + 2\sum_{t=2}^{T}(1-\beta_{1t})\mathbb{E}\left[(\theta_t - \theta^\star)^T (g_t - \bar g_{t})\right],
\end{align*}
where (i) follows from $\alpha_t \leq \alpha_{t-1}$. By further implementing the first term in the right-hand side of the last inequality, we can then bound the sum as
\begin{align*}
2c(1&-\beta_1)\sum_{t=2}^T\mathbb{E}\norm{\theta_t - \theta^\star}^2\\
&\leq \sum_{t=2}^T \frac{1}{\alpha_t} \mathbb{E}\left( \norm{\hat V_t^{1/4}(\theta_{t}-\theta^\star)}^2 - \norm{\hat V_t^{1/4}(\theta_{t+1}-\theta^\star)}^2 \right) + \sum_{t=2}^T \frac{\beta_{1t}G_\infty D_\infty^2}{\alpha_t}\\
&\quad + (1+\beta_1)\sum_{t=1}^{T} \alpha_t \mathbb{E} \norm{ \hat{V}_{t}^{-1/4}m_t }^2 + 2\sum_{t=2}^{T}\mathbb{E}\left[(\theta_t - \theta^\star)^T (g_t - \bar g_{t})\right]\\
& = \frac{\mathbb{E}\norm{\hat V_2^{1/4}(\theta_{2}-\theta^\star)}^2}{\alpha_2} + \sum_{t=3}^T \mathbb{E}\left( \frac{\norm{\hat V_t^{1/4}(\theta_{t}-\theta^\star)}^2}{\alpha_t} - \frac{\norm{\hat V_{t-1}^{1/4}(\theta_{t}-\theta^\star)}^2}{\alpha_{t-1}}  \right)\\
&\quad - \frac{\mathbb{E}\norm{\hat V_{T}^{1/4}(\theta_{T+1}-\theta^\star)}^2}{\alpha_{T}} + \sum_{t=2}^T \frac{\beta_{1t}G_\infty D_\infty^2}{\alpha_t} + (1+\beta_1)\sum_{t=1}^{T} \alpha_t\mathbb{E}  \norm{ \hat{V}_{t}^{-1/4}m_t }^2\\
&\quad + 2\sum_{t=2}^{T}\mathbb{E}\left[(\theta_t - \theta^\star)^T (g_t - \bar g_{t})\right]\\
& = \frac{\mathbb{E}\norm{\hat V_2^{1/4}(\theta_{2}-\theta^\star)}^2}{\alpha_2} + \sum_{t=3}^T \mathbb{E}\left( \frac{\sum_{i=1}^d\hat{v}_{t,i}^{1/2}(\theta_{t,i} - \theta_{i}^\star)^2}{\alpha_t} - \frac{\sum_{i=1}^d\hat{v}_{t-1,i}^{1/2}(\theta_{t,i} - \theta_{i}^\star)^2}{\alpha_{t-1}}  \right)\\
&\quad - \frac{\mathbb{E}\norm{\hat V_{T}^{1/4}(\theta_{T+1}-\theta^\star)}^2}{\alpha_{T}} + \sum_{t=2}^T \frac{\beta_{1t}G_\infty D_\infty^2}{\alpha_t} + (1+\beta_1)\sum_{t=1}^{T} \alpha_t \mathbb{E} \norm{ \hat{V}_{t}^{-1/4}m_t }^2.\\
&\quad + 2\sum_{t=2}^{T}\mathbb{E}\left[(\theta_t - \theta^\star)^T (g_t - \bar g_{t})\right]\\
& = \frac{\mathbb{E}\norm{\hat V_2^{1/4}(\theta_{2}-\theta^\star)}^2}{\alpha_2} + \sum_{t=3}^T\sum_{i=1}^d \mathbb{E}(\theta_{t,i} - \theta_{i}^\star)^2\left( \frac{\hat{v}_{t,i}^{1/2}}{\alpha_t} - \frac{\hat{v}_{t-1,i}^{1/2}}{\alpha_{t-1}}  \right)\\
&\quad - \frac{\mathbb{E}\norm{\hat V_{T}^{1/4}(\theta_{T+1}-\theta^\star)}^2}{\alpha_{T}} + \sum_{t=2}^T \frac{\beta_{1t}G_\infty D_\infty^2}{\alpha_t} + (1+\beta_1)\sum_{t=1}^{T} \alpha_t \mathbb{E} \norm{ \hat{V}_{t}^{-1/4}m_t }^2\\
&\quad + 2\sum_{t=2}^{T}\mathbb{E}\left[(\theta_t - \theta^\star)^T (g_t - \bar g_{t})\right].
\end{align*}
Next, we further bound the above term as
\begin{align}\label{eq:tdThmpf1}
2c(1&-\beta_1)\sum_{t=2}^T\mathbb{E}\norm{\theta_t - \theta^\star}^2\nonumber\\
&\overset{\text{(i)}}{\leq}\frac{\mathbb{E}\norm{\hat V_2^{1/4}(\theta_{2}-\theta^\star)}^2}{\alpha_2} + D_\infty^2\sum_{t=3}^T\sum_{i=1}^d \mathbb{E}\left( \frac{\hat{v}_{t,i}^{1/2}}{\alpha_t} - \frac{\hat{v}_{t-1,i}^{1/2}}{\alpha_{t-1}}  \right)\nonumber\\
&\quad - \frac{\mathbb{E}\norm{\hat V_{T}^{1/4}(\theta_{T+1}-\theta^\star)}^2}{\alpha_{T}} + \sum_{t=2}^T \frac{\beta_{1t}G_\infty D_\infty^2}{\alpha_t} + (1+\beta_1)\sum_{t=1}^{T} \alpha_t \mathbb{E} \norm{ \hat{V}_{t}^{-1/4}m_t }^2\nonumber\\
&\quad + 2\sum_{t=2}^{T}\mathbb{E}\left[(\theta_t - \theta^\star)^T (g_t - \bar g_{t})\right]\nonumber\\
&\leq \frac{\mathbb{E}\norm{\hat V_2^{1/4}(\theta_{2}-\theta^\star)}^2}{\alpha_2} + D_\infty^2 \sum_{i=1}^d \mathbb{E} \frac{\hat{v}_{T,i}^{1/2}}{\alpha_T} + \sum_{t=2}^T \frac{\beta_{1t}G_\infty D_\infty^2}{\alpha_t} + (1+\beta_1)\sum_{t=1}^{T} \alpha_t \mathbb{E} \norm{ \hat{V}_{t}^{-1/4}m_t }^2\nonumber\\
&\quad + 2\sum_{t=2}^{T}\mathbb{E}\left[(\theta_t - \theta^\star)^T (g_t - \bar g_{t})\right],
\end{align}
where (i) follows from Assumption \ref{asp:boundedDomain} and because $\frac{\hat{v}_{t,i}^{1/2}}{\alpha_t} \geq \frac{\hat{v}_{t-1,i}^{1/2}}{\alpha_{t-1}}$. 

Since we consider the case with a constant stepsize $\alpha_t = \alpha$, we continue to bound \eqref{eq:tdThmpf1} as follows
\begin{align}\label{eq:tdThmpf2}
    2c(1&-\beta_1)\sum_{t=2}^T\mathbb{E}\norm{\theta_t - \theta^\star}^2\nonumber\\
    &\leq \frac{\mathbb{E}\norm{\hat V_2^{1/4}(\theta_{2}-\theta^\star)}^2}{\alpha_2} + D_\infty^2 \sum_{i=1}^d \mathbb{E} \frac{\hat{v}_{T,i}^{1/2}}{\alpha_T} + \sum_{t=2}^T \frac{\beta_{1t}G_\infty D_\infty^2}{\alpha_t} + (1+\beta_1)\sum_{t=1}^{T} \alpha_t \mathbb{E} \norm{ \hat{V}_{t}^{-1/4}m_t }^2\nonumber\\
    &\quad + 2\sum_{t=2}^{T}\mathbb{E}\left[(\theta_t - \theta^\star)^T (g_t - \bar g_{t})\right]\nonumber\\
    &\overset{\text{(i)}}{\leq} \frac{ G_\infty D_\infty^2}{\alpha} + \frac{ G_\infty D_\infty^2}{\alpha}  + \frac{\beta_1 \lambda G_\infty D_\infty^2}{\alpha(1-\lambda)} + \frac{\alpha(1+\beta_1)G_{\infty}^2}{G_0}\cdot T + 2\sum_{t=2}^{T}\mathbb{E}\left[(\theta_t - \theta^\star)^T (g_t - \bar g_{t})\right]\nonumber\\
    &= \frac{ 2G_\infty D_\infty^2}{\alpha}  + \frac{\beta_1 \lambda G_\infty D_\infty^2}{\alpha(1-\lambda)} + \frac{\alpha(1+\beta_1)G_{\infty}^2}{G_0}\cdot T + 2\sum_{t=2}^{T}\mathbb{E}\left[(\theta_t - \theta^\star)^T (g_t - \bar g_{t})\right],
\end{align}
where (i) follows because
\begin{align*}
    \norm{ \hat{V}_{t}^{-1/4}m_t }^2=\sum_{i=1}^d \frac{m_{t,i}^2}{\sqrt{\hat v_{t,i}}} \leq \frac{1}{G_0}\sum_{i=1}^d m_{t,i}^2\leq \frac{G_{\infty}^2}{G_0}. 
\end{align*}

Then we are ready to obtain the upper bound by applying Lemma \ref{lem:TDbias}. Choosing $\tau = t$ if $t\leq\tau^{*}$ and $\tau = \tau^{*}$ if $t>\tau^{*}$, we obtain
\begin{align*}
2c&(1-\beta_1)\sum_{t=2}^T\mathbb{E}\norm{\theta_t - \theta^\star}^2\\
& \leq \frac{ 2G_\infty D_\infty^2}{\alpha} + \frac{\beta_1 \lambda G_\infty D_\infty^2}{\alpha(1-\lambda)} + \frac{\alpha(1+\beta_1)G_{\infty}^2}{G_0}\cdot T + 2\sum_{t=2}^{T}\mathbb{E}\left[(\theta_t - \theta^\star)^T (g_t - \bar g_{t})\right]\\
& \leq \frac{ 2G_\infty D_\infty^2}{\alpha} + \frac{\beta_1 \lambda G_\infty D_\infty^2}{\alpha(1-\lambda)} + \frac{\alpha(1+\beta_1)G_{\infty}^2}{G_0}\cdot T +  4\sum_{t=2}^{\tau^*}\left[((1+\gamma)D_{\infty}+G_{\infty})\cdot \frac{G_{\infty} }{G_0}\tau^*\alpha\right]\\
&\quad + 4\sum_{t=\tau^*+1}^{T}\left[2D_{\infty}G_{\infty}\alpha + ((1+\gamma)D_{\infty}+G_{\infty})\cdot \frac{G_{\infty} }{G_0}\tau^*\alpha\right]\\
& \leq \frac{ 2G_\infty D_\infty^2}{\alpha} + \frac{\beta_1 \lambda G_\infty D_\infty^2}{\alpha(1-\lambda)} + \frac{\alpha(1+\beta_1)G_{\infty}^2}{G_0}\cdot T +  4((1+\gamma)D_{\infty}+G_{\infty})\cdot \frac{G_{\infty} }{G_0}(\tau^*)^2\alpha\\
&\quad + 4\left[2D_{\infty}G_{\infty}\alpha + ((1+\gamma)D_{\infty}+G_{\infty})\cdot \frac{G_{\infty} }{G_0}\tau^*\alpha\right](T-\tau^*).
\end{align*}

Finally, applying Jensen's inequality yields
\begin{align}\label{eq:pfthm1}
    \mathbb{E}\norm{\theta_{out} - \theta^\star}^2 \leq \frac{1}{T}\sum_{t=1}^T\mathbb{E}\norm{\theta_t - \theta^\star}^2\leq \frac{C_1}{T} + \alpha C_2,
\end{align}
where
\begin{align*}
    & C_1 = \frac{ G_\infty D_\infty^2}{\alpha c(1-\beta)} + \frac{\beta_1 \lambda G_\infty D_\infty^2}{2\alpha c(1-\lambda)(1-\beta)} +  2((1+\gamma)D_{\infty}+G_{\infty})\cdot \frac{G_{\infty} }{c G_0(1-\beta)}(\tau^*)^2\alpha, \\
    & C_2 = \frac{4D_{\infty}G_{\infty}}{c(1-\beta)} + \frac{2G_{\infty}\tau^* ((1+\gamma)D_{\infty}+G_{\infty})}{c G_0(1-\beta)}  + \frac{(1+\beta)G_{\infty}^2}{2c G_0(1-\beta)}.
\end{align*}

\section{Proof of Theorem \ref{thm:AMSGradTDdimi} }

We first provide the following useful lemma.
\begin{lemma}\label{lem:seqSum}
Let $\alpha_t=\frac{\alpha}{\sqrt{t}}$ and $\beta_{1t} = \beta_1 \lambda^t$ for $t=1,2,\dots$. Then
\begin{equation}
    \sum_{t=1}^T \frac{\beta_{1t}}{\alpha_t} \leq \frac{\beta_1}{\alpha(1-\lambda)^2}.
\end{equation}
\end{lemma}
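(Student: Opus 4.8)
The plan is to bound the sum $\sum_{t=1}^{T}\beta_{1t}/\alpha_t$ by a geometric-type series. Substituting $\alpha_t = \alpha/\sqrt{t}$ and $\beta_{1t} = \beta_1\lambda^t$ gives $\beta_{1t}/\alpha_t = (\beta_1/\alpha)\sqrt{t}\,\lambda^t$, so the claim reduces to showing $\sum_{t=1}^{\infty}\sqrt{t}\,\lambda^t \le 1/(1-\lambda)^2$. First I would use the crude bound $\sqrt{t}\le t$ for $t\ge 1$, which replaces the target sum by $\sum_{t=1}^{\infty} t\lambda^t$.

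The key step is then the standard identity $\sum_{t=1}^{\infty} t\lambda^t = \lambda/(1-\lambda)^2$ for $\lambda\in(0,1)$, obtained by differentiating the geometric series $\sum_{t=0}^{\infty}\lambda^t = 1/(1-\lambda)$. Since $\lambda < 1$ we have $\lambda/(1-\lambda)^2 \le 1/(1-\lambda)^2$, which gives
\begin{equation*}
    \sum_{t=1}^{T}\frac{\beta_{1t}}{\alpha_t} = \frac{\beta_1}{\alpha}\sum_{t=1}^{T}\sqrt{t}\,\lambda^t \le \frac{\beta_1}{\alpha}\sum_{t=1}^{\infty} t\lambda^t = \frac{\beta_1}{\alpha}\cdot\frac{\lambda}{(1-\lambda)^2}\le \frac{\beta_1}{\alpha(1-\lambda)^2}.
\end{equation*}
This is a purely elementary computation; there is no real obstacle. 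The only point requiring minor care is the direction of the inequality $\sqrt{t}\le t$ (valid precisely because $t\ge 1$) and the harmless extension of the finite sum to an infinite one, which is justified by nonnegativity of the terms. I would present the argument in the three displayed lines above, optionally inserting one intermediate line that records $\sum_{t\ge1} t\lambda^t = \lambda/(1-\lambda)^2$ with a one-sentence justification via term-by-term differentiation of $\sum_{t\ge0}\lambda^t$.
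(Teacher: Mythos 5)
Your proof is correct and follows essentially the same route as the paper: both reduce to $\sqrt{t}\le t$ and then sum the arithmetico-geometric series $\sum_t t\lambda^t$, the only cosmetic difference being that the paper keeps a finite sum with $\lambda^{t-1}$ while you pass to the infinite sum $\sum_{t\ge 1}t\lambda^t=\lambda/(1-\lambda)^2$ and discard the factor $\lambda\le 1$. No gaps.
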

\begin{proof}
The proof follows from taking the standard sum of geometric sequences.
\begin{equation}
    \sum_{t=1}^T \frac{\beta_{1t}}{\alpha_t} = \sum_{t=1}^T \frac{\beta_{1t}\sqrt{t}}{\alpha} \leq \sum_{t=1}^T \frac{\beta_{1}\lambda^{t-1}t} {\alpha} = \frac{\beta_1}{\alpha} \left( \frac{1}{(1-\lambda)}\sum_{t=1}^T \lambda^{t-1} -T\lambda^T \right) \leq \frac{\beta_1}{\alpha(1-\lambda)^2}.
\end{equation}
\end{proof}

\textbf{Proof of Theorem \ref{thm:AMSGradTDdimi}:}
The proof starts with steps similar to those of Theorem \ref{thm:AMSGradTDcons}. The difference begins from \eqref{eq:tdThmpf2}, where we now consider a diminishing stepsize $\alpha_t=\frac{\alpha}{\sqrt{t}}$. We then have
\begin{align*}
    2c&(1-\beta_1)\sum_{t=2}^T\mathbb{E}\norm{\theta_t - \theta^\star}^2\\
    &\leq \frac{\mathbb{E}\norm{\hat V_2^{1/4}(\theta_{2}-\theta^\star)}^2}{\alpha_2} + D_\infty^2 \sum_{i=1}^d \mathbb{E} \frac{\hat{v}_{T,i}^{1/2}}{\alpha_T} + \sum_{t=2}^T \frac{\beta_{1t}G_\infty D_\infty^2}{\alpha_t} + (1+\beta_1)\sum_{t=1}^{T} \alpha_t \mathbb{E} \norm{ \hat{V}_{t}^{-1/4}m_t }^2\\
    &\quad + 2\sum_{t=2}^{T}\mathbb{E}\left[(\theta_t - \theta^\star)^T (g_t - \bar g_{t})\right]\\
    & \overset{\text{(i)}}{\leq} \frac{G_\infty D_\infty^2}{\alpha_2} + \frac{ G_\infty D_\infty^2\sqrt{T}}{\alpha}  + \frac{\beta_1G_\infty D_\infty^2}{\alpha(1-\lambda)^2} + \frac{2\alpha(1+\beta_1)G_{\infty}^2}{G_0}\cdot \sqrt{T} + 2\sum_{t=2}^{T}\mathbb{E}\left[(\theta_t - \theta^\star)^T (g_t - \bar g_{t})\right],
\end{align*}
where (i) follows from Assumption \ref{asp:boundedDomain}, and Lemmas \ref{lem:mvbound2} and \ref{lem:seqSum}.

Then we are ready to obtain the upper bound by applying Lemma \ref{lem:TDbias}. Choosing $\tau = t$ if $t\leq\tau^{*}$ and $\tau = \tau^{*}$ if $t>\tau^{*}$, we obtain
\begin{align*}
2c&(1-\beta_1)\sum_{t=2}^T\mathbb{E}\norm{\theta_t - \theta^\star}^2\\
& \leq \frac{G_\infty D_\infty^2}{\alpha_2} + \frac{ G_\infty D_\infty^2\sqrt{T}}{\alpha}  + \frac{\beta_1G_\infty D_\infty^2}{\alpha(1-\lambda)^2} + \frac{2\alpha(1+\beta_1)G_{\infty}^2}{G_0}\cdot \sqrt{T} + 2\sum_{t=2}^{T}\mathbb{E}\left[(\theta_t - \theta^\star)^T (g_t - \bar g_{t})\right]\\
& \leq \frac{G_\infty D_\infty^2}{\alpha_2} + \frac{ G_\infty D_\infty^2\sqrt{T}}{\alpha}  + \frac{\beta_1G_\infty D_\infty^2}{\alpha(1-\lambda)^2} + \frac{2\alpha(1+\beta_1)G_{\infty}^2}{G_0}\cdot \sqrt{T} +4\sum_{t=2}^{\tau^*}\left[((1+\gamma)D_{\infty}+G_{\infty})\cdot \frac{G_{\infty} }{G_0}\tau^*\alpha\right]\\
&\quad + 4\sum_{t=\tau^*+1}^{T}\left[2D_{\infty}G_{\infty}\frac{\alpha}{\sqrt{T}} + ((1+\gamma)D_{\infty}+G_{\infty})\cdot \frac{G_{\infty} }{G_0}\frac{\tau^*\alpha}{\sqrt{t-\tau^*}}\right]\\
& \leq \frac{G_\infty D_\infty^2}{\alpha_2} + \frac{ G_\infty D_\infty^2\sqrt{T}}{\alpha}  + \frac{\beta_1G_\infty D_\infty^2}{\alpha(1-\lambda)^2} + \frac{2\alpha(1+\beta_1)G_{\infty}^2}{G_0}\cdot \sqrt{T} + 4\left[((1+\gamma)D_{\infty}+G_{\infty})\cdot \frac{G_{\infty} }{G_0}\alpha(\tau^*)^2 \right]\\
&\quad + 4\left[2D_{\infty}G_{\infty} \alpha\sqrt{T} + ((1+\gamma)D_{\infty}+G_{\infty})\cdot \frac{2G_{\infty}\tau^*\alpha\sqrt{T-\tau^*} }{G_0}\right].
\end{align*}

Finally, applying Jensen's inequality completes our proof as
\begin{align*}
    \mathbb{E}\norm{\theta_{out} - \theta^\star}^2 \leq \frac{1}{T}\sum_{t=1}^T\mathbb{E}\norm{\theta_t - \theta^\star}^2 \leq \frac{C_1}{\sqrt{T}} + \frac{C_2}{T},
\end{align*}
where
\begin{align*}
    & C_1 = \frac{ G_\infty D_\infty^2 }{2c\alpha (1-\beta)} +  \frac{\alpha (1+\beta_1)G_{\infty}^2}{c G_0(1-\beta)} + \frac{4\alpha D_{\infty}G_{\infty}}{c (1-\beta)} + \frac{4\tau^*\alpha G_{\infty}((1+\gamma)D_{\infty}+G_{\infty}) }{c G_0(1-\beta)}, \\
    & C_2 = \frac{ G_\infty D_\infty^2 }{\sqrt{2}c\alpha (1-\beta)} + \frac{\beta G_\infty D_\infty^2}{2c\alpha(1-\lambda)^2(1-\beta)} + \frac{2G_{\infty}\alpha(\tau^*)^2 ((1+\gamma)D_{\infty}+G_{\infty}) }{c G_0(1-\beta)}.
\end{align*}

\end{document}